%%%%%%%% ICML 2018 EXAMPLE LATEX SUBMISSION FILE %%%%%%%%%%%%%%%%%

\documentclass{article}

% use Times
\usepackage{times}
% For figures
\usepackage{graphicx} % more modern
\usepackage{subfigure} 
\usepackage{stackengine}

% For citations
\usepackage{natbib}
% For algorithms
\usepackage{algorithm}
\usepackage{algorithmic}

\usepackage[utf8]{inputenc} % allow utf-8 input
\usepackage[T1]{fontenc}    % use 8-bit T1 fonts
\usepackage{hyperref}       % hyperlinks
\usepackage{url}            % simple URL typesetting
\usepackage{booktabs}       % professional-quality tables
\usepackage{amsfonts}       % blackboard math symbols
\usepackage{nicefrac}       % compact symbols for 1/2, etc.
\usepackage{microtype}      % microtypography
\usepackage{amsmath,amsthm}
\usepackage{color}
\usepackage{xspace}

\usepackage{wrapfig}
\usepackage{subfigure} 
\usepackage{capt-of}
\usepackage{caption}

\hyphenation{eva-lu-ating}

\usepackage{thm-restate}

% specific things

% math operators

         % domain

\DeclareMathOperator*{\argmin}{arg\,min}

            % comment
                          % such that
    % inner product

\newcommand{\R}{\mathbb{R}}                      % set of real numbers
                   % probability
                      % expectation

\newcommand{\cI}{\mathcal{I}}

\newcommand{\diag}{\mathbf{diag}}

% vectors
\newcommand{\xv}{ {\bf x}}

\newcommand{\uv}{ {\bf u}}
\newcommand{\vv}{ {\bf v}}
\newcommand{\wv}{ {\bf w}}

\newcommand{\sv}{ {\bf s}}

\newcommand{\alphav}{ {\boldsymbol \alpha}}
\newcommand{\betav}{ {\boldsymbol \beta}}

\newcommand{\0}{ {\bf 0}}
 % indicator function

                   % coordinate of a vector
                   % coordinate of a vector
                   % coordinate of a vector

              % norm block primal
            % norm block dual

           % trace
           % Cardinality

%\newcommand{\bO}{\mathcal{O}} 
\newcommand{\bO}{F} % original composite objective function

\newcommand{\vvk}{{\vv_{[k]}}}

\newcommand{\Gap}{\mathcal{G}}

%repeating theorems
\iffalse 
\newtheorem*{rep@theorem}{\rep@title}
\newcommand{\newreptheorem}[2]{%
\newenvironment{rep#1}[1]{%
 \def\rep@title{#2 \ref{##1}}%
 \begin{rep@theorem}}%
 {\end{rep@theorem}}}
\newreptheorem{lemma}{Lemma'}
\newreptheorem{definition}{Definition'}
\newreptheorem{proposition}{Proposition'}
\newreptheorem{theorem}{Theorem'}
\fi 

\theoremstyle{plain}
\newtheorem{theorem}{Theorem}
\newtheorem{lemma}[theorem]{Lemma}
\newtheorem{proposition}[theorem]{Proposition}

\newtheorem{assumption}{Assumption}

\newtheorem{remark}{Remark}
\newtheorem{corollary}[theorem]{Corollary}
\theoremstyle{definition}
\newtheorem{definition}{Definition}

%Aurelien

%cdu
\newcommand{\model}{\mathcal{M}}
\newcommand{\sigmat}{{\sigma_t}}
\newcommand{\msk}{{\model_\sigma^{(k)}}}
\newcommand{\mstk}{{\model_{\sigma_t}^{(k)}}}
\newcommand{\mst}{{\model_{\sigma_t}}}
\newcommand{\Dak}{{\Delta \alphav_{[k]}}}
\newcommand{\Dav}{{\Delta \alphav}}
\newcommand{\alphat}{{\alphav}^{(t)}}

\newcommand{\sigmasup}{{\sigma_{\text{sup}}}}

% hyperref makes hyperlinks in the resulting PDF.
% If your build breaks (sometimes temporarily if a hyperlink spans a page)
% please comment out the following usepackage line and replace
% \usepackage{icml2018} with \usepackage[nohyperref]{icml2018} above.
%\usepackage{hyperref}
%\usepackage[nohyperref]{icml2018}

% Attempt to make hyperref and algorithmic work together better:

% Use the following line for the initial blind version submitted for review:
%\usepackage{icml2018}
% If accepted, instead use the following line for the camera-ready submission:
\usepackage[accepted]{icml2018}

\icmltitlerunning{A Distributed Second-Order Algorithm You Can Trust}

\begin{document} 

\twocolumn[
\icmltitle{A Distributed Second-Order Algorithm You Can Trust}

% It is OKAY to include author information, even for blind
% submissions: the style file will automatically remove it for you
% unless you've provided the [accepted] option to the icml2017
% package.

% list of affiliations. the first argument should be a (short)
% identifier you will use later to specify author affiliations
% Academic affiliations should list Department, University, City, Region, Country
% Industry affiliations should list Company, City, Region, Country

% you can specify symbols, otherwise they are numbered in order
% ideally, you should not use this facility. affiliations will be numbered
% in order of appearance and this is the preferred way.
%\icmlsetsymbol{equal}{*}

\begin{icmlauthorlist}
\icmlauthor{Celestine D{\"u}nner}{ibm}
\icmlauthor{Aurelien Lucchi}{eth}
\icmlauthor{Matilde Gargiani}{alu}
\icmlauthor{An Bian}{eth}
\icmlauthor{Thomas Hofmann}{eth}
\icmlauthor{Martin Jaggi}{epfl}
\end{icmlauthorlist}

\icmlaffiliation{ibm}{IBM Research -- Z{\"u}rich, Switzerland}
\icmlaffiliation{eth}{ETH, Z{\"u}rich, Switzerland}
\icmlaffiliation{epfl}{EPFL, Lausanne, Switzerland}
\icmlaffiliation{alu}{Albert-Ludwigs-Universit{\"a}t, Freiburg, Germany}

\icmlcorrespondingauthor{Celestine D{\"u}nner}{cdu@zurich.ibm.com}

% You may provide any keywords that you 
% find helpful for describing your paper; these are used to populate 
% the "keywords" metadata in the PDF but will not be shown in the document
\icmlkeywords{distributed learning}

\vskip 0.3in
]

% this must go after the closing bracket ] following \twocolumn[ ...

% This command actually creates the footnote in the first column
% listing the affiliations and the copyright notice.
% The command takes one argument, which is text to display at the start of the footnote.
% The \icmlEqualContribution command is standard text for equal contribution.
% Remove it (just {}) if you do not need this facility.

\printAffiliationsAndNotice{}  % leave blank if no need to mention equal contribution
%\printAffiliationsAndNotice{\icmlEqualContribution} % otherwise use the standard text.

\begin{abstract} 
Due to the rapid growth of data and computational resources, distributed optimization has become an active research area in recent years. While first-order methods seem to dominate the field, second-order methods are nevertheless attractive as they potentially require fewer communication rounds to converge. However, there are significant drawbacks that impede their wide adoption, such as the computation and the communication of a large Hessian matrix. In this paper we present a new algorithm for distributed training of generalized linear models that only requires the computation of diagonal blocks of the Hessian matrix on the individual workers. To deal with this approximate information we propose an adaptive approach that - akin to trust-region methods - dynamically adapts the auxiliary model to compensate for modeling errors. We provide theoretical rates of convergence for a wide class of problems including $L_1$-regularized objectives. We also demonstrate that our approach achieves state-of-the-art results on multiple large benchmark datasets.

\end{abstract}

% !TEX root = icml-hessian-cocoa.tex

\section{Introduction}
\label{sec:intro}

The last decade has witnessed a growing number of successful machine learning applications in various fields, along with the availability of larger training datasets.
However, the speed at which training datasets grow in size is strongly outpacing the evolution of the computational power of single devices, as well as their memory capacity. Therefore, distributed approaches for training machine learning models have become tremendously important while also being  increasingly more accessible to users with the rise of cloud-computing.
Scaling up optimization algorithms for training machine learning models in such a setting poses many challenges. One key aspect is communication efficiency; because communication is often more expensive than local computation, the overall speed of distributed algorithms strongly depends on how frequently information is exchanged between workers.
In order to develop communication-efficient distributed algorithms, we advocate the use of second-order methods which benefit from faster rates of convergence compared to their first-order (gradient-based) counterparts, and hence require less communication rounds to achieve the same accuracy. However, second-order methods have the significant drawback of requiring the computation and storage -- and potentially the communication -- of a Hessian matrix. 
Exact methods are therefore elusive for large datasets and one has to resort to approximate methods. In this paper, we propose a method where every worker uses local Hessian information only (i.e.,  with respect to the local parameters on that worker), hence it does not require any second-order information to be communicated. Conceptually, this approach relies on approximating the full Hessian matrix with a block-diagonal version. At the same time, to automatically adapt to the model misfit, we use an adaptive approach similar in spirit to trust-region methods~\cite{conn2000trust}.
\vspace{-2mm}

\paragraph{Problem Setup \& Distributed Setting.}
We address the problem of training generalized linear models which are ubiquitous in machine learning, including e.g. logistic regression, support vector machines as well as sparse linear models such as lasso and elastic net. Formally, we address convex optimization problems with an objective of the form
\begin{equation}
    \label{eq:A} \quad \ \ 
    \bO(\alphav) := f(A\alphav )
    \ +\ \sum\nolimits_i g_i(\alpha_i),\ 
\end{equation}
where we assume $f$ to be smooth and convex, and $g_i$ to be convex functions. $A\in\R^{d\times n}$ is a given data matrix and $\alphav\in \R^n$ the parameter vector to be learned from data. 

We assume that every worker $k\in\{1\dots K\}$ only has access to its own local part of the data, which corresponds to a subset of the columns of the matrix $A$. In machine learning, these columns typically correspond to a subset of the features or data examples, depending on the application. For example, in the case where~\eqref{eq:A} corresponds to the objective of a regularized generalized linear model -- i.e., where $f$ is a data dependent loss and $g=\sum_i g_i$ a regularization term -- the columns of~$A$ correspond to features. In another scenario where \eqref{eq:A} corresponds to the dual representation of the respective problem, such as typically chosen for SVM models, the columns of $A$ correspond to data examples. 
\vspace{-2mm}

\paragraph{Block-separable model.}
In such a distributed setting we suggest optimizing a block-separable auxiliary model which can be split over workers. This auxiliary model is then updated in each round, upon receiving a summary of the updates from all workers. A significant advantage of such a model is that the workload of a single round can be parallelized across the individual workers, where each worker computes an update for its own model parameters by solving a local optimizaition task.
Then, to synchronize the work, each worker communicates this update to the master node which aggregates all the updates, applies them to the global model and shares this information with all the workers. One common problem faced with this type of distributed approach is to evaluate whether the local models can be trusted in order to update the global model. This is usually addressed by the selection of an appropriate step-size or by relying on a line-search approach. However, the latter uses a fixed model and typically requires multiple model evaluations which can therefore be computationally expensive. In this paper, we instead leverage ideas from trust-region methods~\cite{conn2000trust}, where we dynamically adapt the model based on how much we trust the approximate second-order information.
\vspace{-2mm}

\paragraph{Contributions.} We propose a new distributed Newton's method, built on an adaptive block-separable approximation of the objective function, and allowing the use of arbitrary solvers to solve the local subproblems approximately.
Two characteristics differentiate our approach from existing work. 
First, unlike previous methods that rely on fixed step-size schedules or line-search strategies, our algorithm evaluates the fit of the auxiliary model using a trust-region approach. This yields an efficient method with global convergence guarantees for convex functions, while providing full adaptivity to the quality of the second-order model.
Second, our method, to the best of our knowledge, is the first to give convergence guarantees for a distributed second-order method applied to problems with general regularizers (not necessarily strongly convex). This includes $L_1$-regularized objectives such as Lasso and sparse logistic regression as very important application cases, which were not covered by earlier methods such as \cite{shamir2014communication, zhang2015disco, wang2017giant, lee2017distributed}.

% !TEX root = icml-hessian-cocoa.tex

\section{Method Description}
\label{sec:method}
We present an iterative descent algorithm that minimizes the objective $\bO(\alphav)$ introduced in~\eqref{eq:A}. At each step, we optimize an auxiliary \emph{block-separable model} that acts as a surrogate for the objective $\bO(\alphav)$. This auxiliary model is adaptive and changes depending on its approximation quality.
\subsection{Block-Separable Model}
\label{sec:block_model}
Let us, in every iteration of our algorithm, consider the following auxiliary model replacing \eqref{eq:A}:
\begin{align} \model_\sigma( \Dav; \alphav) :=\hat f(A\alphav, A \Dav) + \sum_i g_i(\alpha_i+\Delta \alpha_i), 
\label{eq:modelCoCoA}
\end{align}
where $\hat f(A\alphav, A\Dav) $ is a second-order approximation of the data-dependent term in \eqref{eq:A}, i.e.,
\begin{align}
\hat f(A\alphav, A\Dav) :=\ & f(A\alphav) + \nabla f(A\alphav)^\top A\Dav\notag\\&+\frac \sigma {2}  \Dav^\top \tilde H(\alphav) \Dav. \label{eq:fapprox}
\end{align}
The parameter $\sigma\in \mathbb R_+$ is introduced to control the approximation quality of the auxiliary model; its role will be detailed in Section~\ref{sec:sigma}.

Let us consider \eqref{eq:fapprox} for the case where $\tilde H(\alphav)$ is chosen to be the Hessian matrix  $\nabla^2_\alphav f(A\alphav)$. Then, the auxiliary model~\eqref{eq:modelCoCoA} with $\sigma=1$ corresponds to a classical second-order approximation of the function $f$. 
However,  this choice of $\tilde H$ is not feasible in a distributed setting where the data is partitioned among the workers, since the computation of the Hessian matrix requires access to the entire data matrix.
\vspace{-1mm}

\paragraph{Partitioning.}
In particular, we assume each worker has access to a subset $\cI_k$ of the columns of $A$. In our setting, $\cI_k$ are disjoint index sets such that
$ \bigcup\nolimits_k \cI_k = [n], \;\;\cI_i \cap \cI_j = \emptyset \;\; \forall i\neq j $
and $n_k := \left|\cI_k\right|$ denotes the size of partition $k$. Hence, each machine stores in its memory the submatrix $A_{[k]}\in\R^{d \times n_k}$ corresponding to its partition $\cI_k$.

Given such a partitioning, we suggest choosing $\tilde H$ to be a block diagonal approximation to the Hessian matrix~$\nabla^2_\alphav f(A\alphav)$ aligned with the partitioning of the model parameters, such that
\begin{equation}
\Dav^\top \tilde H(\alphav) \Dav = \sum\nolimits_k \Dak^\top \tilde H(\alphav) \Dak.  \label{eq:Htildesep}
\end{equation}
We use the notation $\uv_{[k]}$ to denote the vector $\uv$ with only non-zero coordinates for $i\in \cI_k$.
As a consequence of \eqref{eq:Htildesep} the model presented in~\eqref{eq:modelCoCoA}  splits over the $K$ partitions, i.e., 
\begin{equation}
\model_{\sigma}(\Dav; \alphav) = \sum\nolimits_k \msk(\Dak; \alphav),
\end{equation}
where each  subproblem $\msk(\Dak; \alphav)$ only requires access to the local data indexed by $\cI_k$, the respective coordinates of the model $\alphav$, as well as
%\footnote{%
%The model \eqref{eq:localModel} only depends on $\vv:= A\alphav$, not directly on $\alphav$; In our notation we will nevertheless write $\alphav$ for simplicity
%cdu: thats not correct - g depends on \alpha not on v, lets just remove this footnote
%} 
$\vv:= A\alphav$:
\begin{align}  \msk(\Dak; \alphav)  :=& \frac 1 K f(\vv) +\nabla f(\vv)^\top A\Dak\notag \\&+ \frac {\sigma} 2 \Dak^\top \tilde H(\alphav) \Dak \notag\\& \sum\nolimits_{i\in\cI_k} g_i((\alphav+\Dak)_i).
\label{eq:localModel}
\end{align}
Hence, in a distributed setting, each worker is assigned the subproblem corresponding to its partition. These individual subproblems can be optimized independently and in parallel on the different workers. We note that this requires access to the shared information $\vv$ on every node; we will detail in Section \ref{sec:implementation} how this can  be efficiently  achieved in a distributed setting. A significant benefit of this model is that it is based on local second-order information and does not require sending gradients and Hessian matrices to the master node, which would be a significant cost in terms of communication.

\subsection{Approximation Quality of the Model}
\label{sec:sigma}
The role of the $\sigma$ parameter introduced in \eqref{eq:modelCoCoA} is to account for the loss of information that arises by enforcing the approximate Hessian matrix of $f$ to have a block diagonal structure. The better the approximation, the closer to $1$ the optimal $\sigma$ parameter is. If the Hessian approximation is unreliable, then the model should be adapted accordingly by changing the value of $\sigma$.
An alternative model to~\eqref{eq:modelCoCoA} would be to include a damping factor to the second-order term, i.e.,  use $\frac{\sigma}{2} \Dav^\top \tilde H(\alphav) \Dav + \sigma' \| \Dav \|^2 $ where $\sigma' > 0$. This type of model is usually employed in trust-region methods~\cite{conn2000trust} where $\sigma=1$, and $\sigma' > 0$ is chosen to ensure strong-convexity. The use of $\sigma' > 0$ might therefore not be necessary for models that are already (strongly)-convex. We conducted a set of experiments to determine whether this alternative model would achieve better empirical performance and we found little difference between the two models. We will therefore report results for our suggested model with $\sigma' = 0$ in the experimental section.
\vspace{-1mm}

\paragraph{Adaptive Choice of $\sigma$.} We have established that the $\sigma$ parameter has a central role for the convergence and the practical performance of our method, and we therefore need an efficient way to choose and update this parameter in an adaptive manner. Here we suggest updating $\sigma$ at each iteration of the algorithm using an update rule inspired by trust-region methods~\cite{Cartis2011}, where $\sigma$ acts as the reciprocal of the trust-region radius. Further details are provided in Section~\ref{sec:algo}.

\begin{figure}[t]
\includegraphics[width=\columnwidth]{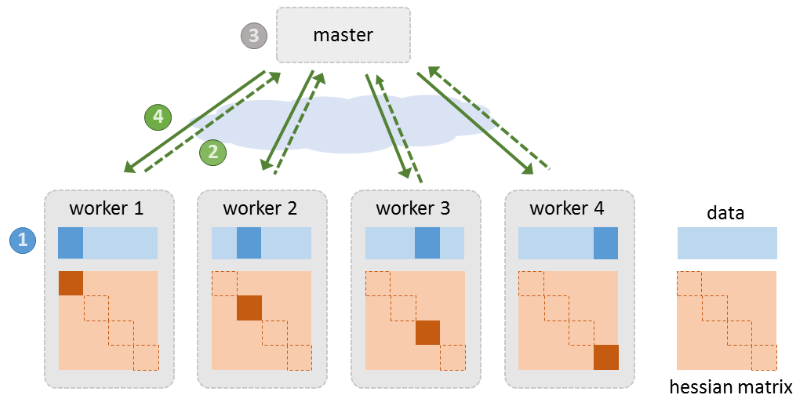}
\caption{Four-stage algorithmic procedure of ADN. Every worker only has access to its local partition of the data matrix and the respective block of the Hessian matrix. Arrows indicate the (synchronous) communication per round.\vspace{-3mm}}
\centering
\label{fig:cluster}
\end{figure}

\subsection{Algorithm Procedure}
\label{sec:algo}
The pseudo-code of the proposed approach, denoted as Adaptive Distributed Newton method (ADN), is summarized  in Algorithm~\ref{alg:adaptive_cocoa} and the four-stage  iterative procedure is illustrated in Figure \ref{fig:cluster}. We focus on a master-worker setting in this paper, but our algorithm could similarly be applied in a non-centralized fashion. Specifically, in every round, each worker $k$ works on its local subproblem \eqref{eq:localModel} to find an update~$\Dak$ to its local parameters of the model (stage~1). Then, it communicates this update to the master node (stage~2) which, aggregates the updates, and decides a new $\sigma_{t+1}$ for the next iteration based on the misfit of the current model (stage~3). Finally, the master node broadcasts the new model together with $\sigma_{t+1}$ to every worker (stage~4) for the next round.
%\footnote{In order to efficiently evaluate the new objective values distributely, each worker also needs 
%to send two scalars, details for which are  deferred to Section \ref{sec:method}.}. 
%
%The communication cost of  Algorithm~\ref{alg:adaptive_cocoa} is minimal as no gradient or Hessian information has to be communicated between the workers and the master node. - cdu:too much repetition
Note that in Algorithm \ref{alg:adaptive_cocoa} we have not explicitly stated the communication of $\vv$ and two scalars that are necessary for evaluating the function values distributedly; we will elaborate more on this in Section \ref{sec:implementation}.
\vspace{-1mm}

\paragraph{Local Solver.} The computation of the model update~$\Dak$ on every worker (stage 2 of our algorithm) can be done using any arbitrary solver, depending on user preference or the available hardware resources. As in~\cite{Smith:2016wp}, the amount of computation time spent in the local solver is a tunable hyperparameter. This allows the algorithm to be optimally adjusted according to the trade-off between communication and computation cost of a given system. To reflect this flexibility in our theory we will assume the local subproblems \eqref{eq:localModel} are not necessarily optimized exactly but $\eta$-approximately, i.e., the local updates $\Dak$ are such that:
\begin{align}
\frac{\msk(\Dak; \alphav)- \msk(\Dav^\star_{[k]}; \alphav)}{\msk(\0; \alphav)-\msk(\Dav^\star_{[k]}; \alphav)}\leq \eta\, ,
\label{eq:theta_approx}
\end{align}
where $\Dav^\star_{[k]} := \argmin_\Dak \msk(\Dak; \alphav)$.\footnote{%
Note that the notion $\eta\in[0,1)$ %MJ: (minor: for the theorems could be better to say somewhere in the main text that eta<1)
of multiplicative subproblem accuracy is sensible even in the case when the block-wise Hessian matrix $\tilde H$ is not necessarily positive definite as long as $g$ is strongly convex or of bounded support.%
}

As previously mentioned, one of the key steps in the adaptive approach presented in Algorithm~\ref{alg:adaptive_cocoa} is the strategy for adapting the model over iterations. This is done by adjusting~$\sigma$ in every iteration $t$ resulting in a schedule described by the sequence $\{\sigma_t\}_{t\geq0}$. In particular, after every iteration, we adjust $\sigma_t$ based on the agreement between the model function \eqref{eq:modelCoCoA} and the objective \eqref{eq:A} for the current iterate. This is measured by the variable $\rho_t$ defined in~\eqref{eq:rho} in Algorithm~\ref{alg:adaptive_cocoa}. If~$\rho_t$ is close to $1$ there is a good agreement between the model $\model_{\sigma_t}(\cdot)$ and the function $\bO(\cdot)$ and we retain our current model. On the other hand, if the model over-estimates the objective, we decrease $\sigma$ for the next iteration, which can be thought of as adjusting the trust in the current approximation of the Hessian. On the contrary, if our model under-estimates the objective we increase $\sigma$.
In addition, we only apply updates $\Dav$ that satisfy $\rho_t\geq\xi$ and hence provide sufficient function decrease. If this is not fulfilled,  the step is rejected and a new update is computed in the next iteration, based on the adjusted model.
In order to adequately deal with all these cases that influence~$\sigma$, we introduce two constants $\zeta$ and $\gamma$ that control how to update~$\sigma$ based on the value of~$\rho_t$ (see \eqref{eq:sigma_update} in Algorithm~\ref{alg:adaptive_cocoa}). We will discuss the choice of these constants in the experiment section.

 \begin{algorithm}[h!]
   \caption{Adaptive Distributed Newton Method (ADN)}
   \label{alg:adaptive_cocoa}
\begin{algorithmic}[1] 
   \STATE {\bfseries Input:}  
   %$\quad$ training examples $\S = (\x_1,\x_2,\ldots,\x_n)$, $\x_i \sim P$\\
   $\alphav_0 \in \R^n$ (e.g., ~$\alphav_0 \!=\! {\bf 0}$) and $\sigma_0>0$.\\
   $\quad\gamma, \zeta>1$,  $\tfrac 1 \zeta >\xi>0$ and $\eta\in [0,1)$% cdu:we need this to avoid that a step is rejected and sigma is kept the same!
   \FOR{$t=0,1,\dots,\text{until convergence}$}
     \FOR{$k\in[K] \text{ in parallel}$}
   \STATE Obtain $\Dak$ by minimizing $\mstk(\Dak; \alphat)$ $\eta$-approximately
	\ENDFOR
	\STATE Aggregate updates $\Dav = \sum_k\Dak$  
   \STATE Compute $\bO(\alphav^{(t)}+ \Dav)$ (distributed over workers)
   \STATE Compute $\model_\sigmat(\Dav;\alphat)$ (distributed over workers)
   \STATE Evaluate\vspace{-2mm}
	\begin{equation}
	\rho_t:=\dfrac{\bO(\alphav^{(t)})-\bO(\alphav^{(t)}+ \Dav)}{\bO(\alphav^{(t)})-\model_{\sigma_t}(\Dav; \alphat)}
	\label{eq:rho}
	\end{equation}
	\STATE Set\vspace{-4mm}
	\begin{equation}
	\alphav^{(t+1)} := \begin{cases}
	\alphav^{(t)} + \Dav & \text{ if } \rho_t \geq \xi \\
	\alphav^{(t)} & \text{ otherwise}
	\end{cases}
	\end{equation}
	\STATE Set\vspace{-3mm}
	\begin{equation} \label{eq:sigma_update}
	\hspace{-2mm}
	\sigma_{t+1}:= \begin{cases}
	\frac 1\gamma  \sigma_t & \text{ if } \rho_t> \zeta \text{ (too conservative) }\\
	\;\;\sigma_t & \text{ if } \zeta \geq \rho_t\geq \frac 1 \zeta \text{ (good fit)}\\
	\gamma \sigma_t& \text{ if } \frac 1 \zeta>\rho_t \text{ (too aggressive)} 
	\end{cases}\vspace{-2mm}
	\end{equation}
	%\STATE Broadcast the $\sigma_{t+1}$ and new model $\vv^{(t+1)}$
   \ENDFOR
\end{algorithmic}
\end{algorithm}

% !TEX root = icml-hessian-cocoa.tex

\section{Implementation}
\label{sec:implementation}
In order to implement Algorithm \ref{alg:adaptive_cocoa} efficiently in a distributed environment, two key aspects need to be considered.

\subsection{Shared Information}
We have seen in Section \ref{sec:block_model} that every worker needs access to $\vv:= A\alphav$ in order to evaluate the gradient $\nabla f(A\alphav)$ for solving the local subproblem. To avoid the evaluation of $\vv$ in every round we suggest sharing and updating the vector $\vv=A\alphav$ throughout the algorithm -- thus, the term shared vector. Hence, if the model parameters are updated locally, the respective change $\Delta \vv_{[k]} = A\Dak$ is shared between workers, whereas the local model parameters $\alphav_{[k]}$ are kept local on every worker.  A similar approach to achieve communication-efficiency is suggested in \cite{Smith:2016wp}. They also emphasize that the vector to be communicated is $d$-dimensional which can be preferable compared to the $n$-dimensional model vector $\alphav$, depending on the dimensionality of the problem. This shared vector modification is a minor change of step 6 in Algorithm \ref{alg:adaptive_cocoa}, where $\Delta \vv = \sum_k \Delta \vvk$ is aggregated and shared instead of~$\Dav$.

\subsection{Communication-Efficient Function Evaluation}
Let us  detail how $\rho_t$ in Step 9 of Algorithm \ref{alg:adaptive_cocoa} can be evaluated efficiently without central access to the model $\alphav$.  We therefore consider the individual terms in \eqref{eq:rho} separately: The cost $\bO(\alphav)$ is known from the previous iteration and can be stored in memory. The cost at the new iterate $\bO(\alphav+\Dav) =f(A(\alphav+\Dav))+\sum_i g_i((\alphav+\Dav)_i)$
is composed of two terms, where the first term can be computed on the master locally as $f(\vv + \sum_k \Delta \vv_k)$ and the second term needs to  be computed in a distributed fashion. Every node computes 
 $g_{(k)} := \sum_{i\in \mathcal I_k} g_i((\alphav + \Dak)_i)$
 based on its local model parameters and sends the resulting value to the master node,  which adds the overall sum to the first term, completing the evaluation of the new objective value. Similarly, the model cost $\model_\sigmat( \Dav;\alphav)$ is computed distributedly by every node independently evaluating $\model_\sigmat^{(k)} (\Dak; \alphav_{[k]})$ and then sharing the result.
 Note that this step can be computationally expensive, since it requires one pass through the local data on every node; the communication cost of the two scalar values is negligible.
 
% !TEX root = icml-hessian-cocoa.tex

\section{Convergence Analysis}
\label{sec:analysis}

We now establish the convergence of Algorithm~\ref{alg:adaptive_cocoa} for the general class of functions fitting~\eqref{eq:A}.
\begin{theorem}[non-strongly convex $g_i$]
\label{th:main_convergence}
Let $f$ be $\tfrac 1 \tau$-smooth and $g_i$ be convex functions. Assume the sequence $\{\sigma_t\}_{t\geq0}$ is bounded by~$\sigma_\text{sup}$.\footnote{We will theoretically establish the upper bound $\sigma_\text{sup}$ for two general scenarios in Appendix A.7.} Then, Algorithm \ref{alg:adaptive_cocoa} reaches a suboptimality $\bO(\alphat)-\bO(\alphav^\star) \leq \varepsilon$ within a total number of
$$
\frac {1}{\log(\gamma)} \log\Big(\frac {\sigma_\text{sup}}{\sigma_0}\Big) +\frac {2 } {\varepsilon} C_1 \sigma_{sup}
$$
iterations, where $C_1>0$ is a constant defined as $C_1:=\frac{2 ( 4 L^2 R^2+ \tau \varepsilon_0)}{\tau \xi (1-\eta)}$  where  $L,R>0$ are such  that  $|\alpha_i^{(t)}|<L \;\; \forall i, t\geq 0$ and $\|A_{:,i}\|<R\;\forall i$, and $\varepsilon_0 :=\bO(\alphav_{0})-\bO(\alphav^\star)$ is the initial suboptimality. 
\end{theorem}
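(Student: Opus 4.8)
Write $D_t := \bO(\alphat)-\bO(\alphav^\star)\geq 0$ for the suboptimality at iteration $t$; since a rejected step leaves $\alphav$ unchanged, the sequence $\{D_t\}$ is non-increasing and $D_t\leq D_0=\varepsilon_0$. The plan has two parts: (i) a per-iteration \emph{sufficient-decrease} estimate $D_t-D_{t+1}\geq \tfrac{1}{C_1\sigmasup}D_t^2$ valid on every \emph{successful} iteration (one with $\rho_t\geq\xi$); and (ii) a counting argument bounding the number of rejected iterations in terms of the number of successful ones, using only $\sigma_t\leq\sigmasup$. Part (i) turned into a recursion yields the $\tfrac{2C_1\sigmasup}{\varepsilon}$ term, part (ii) the $\tfrac{1}{\log\gamma}\log(\sigmasup/\sigma_0)$ term.

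\textbf{Step 1: sufficient decrease (the core lemma).} On a successful iteration, by the definition \eqref{eq:rho} of $\rho_t$ and $\rho_t\geq\xi$,
\begin{equation*}
D_t-D_{t+1}=\bO(\alphat)-\bO(\alphat+\Dav)\ \geq\ \xi\big(\bO(\alphat)-\model_{\sigma_t}(\Dav;\alphat)\big).
\end{equation*}
To lower bound the model gap, first note that summing \eqref{eq:theta_approx} over $k$ and using block-separability gives $\bO(\alphat)-\model_{\sigma_t}(\Dav;\alphat)\geq (1-\eta)\big(\bO(\alphat)-\model_{\sigma_t}(\Dav^\star;\alphat)\big)$ with $\Dav^\star:=\argmin_{\Dav}\model_{\sigma_t}(\Dav;\alphat)$ (equal to the concatenation of the block minimizers), using $\model_{\sigma_t}(\0;\alphat)=\bO(\alphat)$. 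Since $\model_{\sigma_t}(\Dav^\star;\alphat)\leq\model_{\sigma_t}(s(\alphav^\star-\alphat);\alphat)$ for any probe $s\in[0,1]$, and since convexity of the $g_i$ gives $g_i((1-s)\alpha_i^{(t)}+s\alpha_i^\star)\leq(1-s)g_i(\alpha_i^{(t)})+sg_i(\alpha_i^\star)$ while convexity of $f$ gives $\nabla f(A\alphat)^\top A(\alphav^\star-\alphat)\leq f(A\alphav^\star)-f(A\alphat)$, a direct expansion of \eqref{eq:fapprox}--\eqref{eq:modelCoCoA} at $\Dav=s(\alphav^\star-\alphat)$ yields
\begin{equation*}
\bO(\alphat)-\model_{\sigma_t}(s(\alphav^\star-\alphat);\alphat)\ \geq\ s\,D_t-\tfrac{\sigma_t s^2}{2}\,(\alphav^\star-\alphat)^\top\tilde H(\alphat)(\alphav^\star-\alphat).
\end{equation*}
Because $f$ is $\tfrac1\tau$-smooth, $\nabla^2 f\preceq\tfrac1\tau I$, so the $k$-th diagonal block of $\tilde H(\alphat)$ is $\preceq\tfrac1\tau A_{[k]}^\top A_{[k]}$; together with \eqref{eq:Htildesep}, the coordinate bound $|\alpha_i^{(t)}|,|\alpha_i^\star|\leq L$ and the column bound $\|A_{:,i}\|\leq R$ this bounds the curvature term by $\tfrac{4L^2R^2}{\tau}$ (with the partition-dependent factors in this estimate absorbed into $\sigmasup$, cf.\ Appendix A.7). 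Maximizing $s\,D_t-\tfrac{\sigmasup s^2}{2}\cdot\tfrac{4L^2R^2}{\tau}$ over $s\in[0,1]$ and using $D_t\leq\varepsilon_0$ and $\sigmasup\geq1$,
\begin{equation*}
\bO(\alphat)-\model_{\sigma_t}(\Dav;\alphat)\ \geq\ \tfrac{1-\eta}{2}\min\Big\{D_t,\ \tfrac{\tau D_t^2}{4L^2R^2\sigmasup}\Big\}\ \geq\ \tfrac{(1-\eta)\tau\,D_t^2}{2\sigmasup(4L^2R^2+\tau\varepsilon_0)},
\end{equation*}
and chaining with the first display gives $D_t-D_{t+1}\geq\tfrac{1}{C_1\sigmasup}D_t^2$ with $C_1=\tfrac{2(4L^2R^2+\tau\varepsilon_0)}{\tau\xi(1-\eta)}$.

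\textbf{Step 2: recursion and counting.} From $D_{t+1}\leq D_t-\tfrac{1}{C_1\sigmasup}D_t^2$ on successful steps ($D_{t+1}=D_t$ otherwise), dividing by $D_tD_{t+1}$ and using $D_{t+1}\leq D_t$ gives $\tfrac1{D_{t+1}}\geq\tfrac1{D_t}+\tfrac1{C_1\sigmasup}$ at each successful step, so after $s$ successful steps $\tfrac1{D_t}\geq\tfrac{s}{C_1\sigmasup}$ and hence $D_t\leq\varepsilon$ once $s\geq C_1\sigmasup/\varepsilon$. For the rejected steps, the update \eqref{eq:sigma_update} decreases $\sigma$ only when $\rho_t>\zeta>1>\xi$, i.e.\ only on successful iterations, whereas every rejected iteration ($\rho_t<\xi<\tfrac1\zeta$) multiplies $\sigma$ by $\gamma$. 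Writing $\sigma_t=\sigma_0\gamma^{n_+-n_-}$ with $n_\pm$ the number of multiplications by $\gamma$, resp.\ $\tfrac1\gamma$, the bound $\sigma_t\leq\sigmasup$ forces $n_+-n_-\leq\tfrac1{\log\gamma}\log(\sigmasup/\sigma_0)$; since $n_-\leq s$ and the number of rejected iterations is $\leq n_+$, the number of rejected iterations is $\leq s+\tfrac1{\log\gamma}\log(\sigmasup/\sigma_0)$. Therefore the total iteration count needed is at most $2s+\tfrac1{\log\gamma}\log(\sigmasup/\sigma_0)$ with $s=C_1\sigmasup/\varepsilon$, which is the stated bound.

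\textbf{Main obstacle.} The delicate part is the core lemma of Step~1, specifically the clean bound $\tfrac{4L^2R^2}{\tau}$ on the curvature term $(\alphav^\star-\alphat)^\top\tilde H(\alphat)(\alphav^\star-\alphat)$: estimating $\sum_k\|A(\alphav^\star-\alphat)_{[k]}\|^2$ from column norms alone carries combinatorial partition factors, so one must either use a finer probe direction or fold these factors into the definition of $\sigmasup$ --- precisely what is done when $\sigmasup$ is made explicit (Appendix A.7). A secondary point is that the $\eta$-approximation condition \eqref{eq:theta_approx} references $\Dav^\star_{[k]}$, so one must ensure the local subproblems attain their minima and that $\bO(\alphat)-\model_{\sigma_t}(\Dav;\alphat)\geq0$; this is where strong convexity of $g$ or bounded support of the $g_i$ enters (the theorem's footnote), the latter also supplying the bound $|\alpha_i^\star|\leq L$ used above.
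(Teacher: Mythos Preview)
Your proof is correct and follows the same three-stage skeleton as the paper (model decrease via a probe direction, conversion to function decrease through $\xi$, then the successful/unsuccessful counting argument). Step~2 is identical to the paper's Lemma~9.

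The one genuine methodological difference is the probe direction in Step~1. You take $\tilde\sv=s(\alphav^\star-\alphat)$ and use convexity of $f$ and $g_i$ so that the linear part of $\bO(\alphat)-\model_{\sigma_t}(\tilde\sv;\alphat)$ becomes exactly $s\,D_t$. The paper instead takes $\tilde\sv=\kappa(\uv-\alphat)$ with $u_i\in\partial g_i^*(-A_{:,i}^\top\nabla f(A\alphat))$; via the Fenchel--Young equality this makes the linear part equal $\kappa\,\Gap(\alphat)$, the \emph{duality gap}. Since the paper then immediately invokes $\Gap(\alphat)\geq D_t$ to obtain the recursion, both routes produce the same final bound. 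Your route is more elementary (no conjugates, no primal--dual machinery), while the paper's yields a strictly stronger intermediate statement: Lemma~3 bounds the model decrease by a \emph{computable} certificate rather than by the unknown $D_t$.

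One quibble on your ``main obstacle'' paragraph: the resolution you describe---folding partition-dependent factors into $\sigmasup$---is not what the paper does. Appendix~A.7 bounds $\sigmasup$ via quasi-self-concordance or a Lipschitz Hessian and has nothing to do with the curvature term in Lemma~3. The paper simply asserts $(\uv-\alphav)^\top\tilde H(\uv-\alphav)\leq\tfrac{4}{\tau}R^2L^2$ directly from $L$-bounded support of the $g_i$ (which gives $|u_i|,|\alpha_i|\leq L$; the same assumption also supplies your $|\alpha_i^\star|\leq L$). Your flag that this bound is the delicate step is well placed, but the mechanism you cite for resolving it is not the paper's.
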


For the special case where $g_i$ are strongly-convex, Algorithm~\ref{alg:adaptive_cocoa} achieves a faster rate of convergence as described in the following theorem.
\begin{theorem}[strongly-convex $g_i$]
\label{th:main_convergence_strongly_convex}
Let $f$ be $\tfrac 1 \tau$-smooth and $g_i$ $\mu$-strongly convex. Assume the sequence $\{\sigma_t\}_{t\geq0}$ is bounded by $\sigma_\text{sup}$. %MJ: this needs to be written more precisely. now you make it sound that sup depends on t
Then, Algorithm \ref{alg:adaptive_cocoa} reaches a suboptimality $\bO(\alphat)-\bO(\alphav^\star) \leq \varepsilon$ within a total number of 
$$
\frac {1}{\log(\gamma)} \log\left(\gamma \frac {\sigma_\text{sup}}{\sigma_0}\right) + \frac 2 {\log(C_2^{-1})} \log\left(\frac{\varepsilon_0}{\varepsilon}\right)
$$
iterations, where $C_2\in (0,1)$ is a constant defined as $C_2:= 1- \xi(1-\eta) \frac{\mu\tau}{c_A \sigma_{\sup} +\mu\tau}$ with $c_{A}=\max_k \|A_{[k]}\|^2$ and $\varepsilon_0=\bO(\alphav_{0})-\bO(\alphav^\star)$ measures the initial suboptimality.
\end{theorem}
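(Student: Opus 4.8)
The plan is to follow the same trust-region bookkeeping used for Theorem~\ref{th:main_convergence}, but replace the crude per-iteration decrease bound by one that exploits strong convexity of $g$, which will turn an $O(1/\varepsilon)$ bound into a linear rate. First I would split the iteration count into two groups exactly as before: ``unsuccessful'' iterations where $\rho_t<\xi$ (the step is rejected and $\sigma$ is increased by a factor $\gamma$), and ``successful'' iterations where $\rho_t\geq\xi$ (the step is taken). Since $\sigma_t$ stays between $\sigma_0/\gamma$-ish and $\sigma_{\sup}$, the number of unsuccessful iterations is controlled the usual way: each one multiplies $\sigma$ by $\gamma$, and $\sigma$ can only be divided by $\gamma$ on successful iterations, so the total count of unsuccessful steps is at most the count of successful ones plus $\log_\gamma(\gamma\,\sigma_{\sup}/\sigma_0)$; this is where the first term $\frac{1}{\log\gamma}\log(\gamma\sigma_{\sup}/\sigma_0)$ comes from. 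So it remains to bound the number of successful iterations.

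For a successful iteration, $\rho_t\geq\xi$ gives $\bO(\alphat)-\bO(\alphav^{(t+1)})\geq \xi\big(\bO(\alphat)-\model_{\sigma_t}(\Dav;\alphat)\big)$, so I need a lower bound on the model decrease $\bO(\alphat)-\model_{\sigma_t}(\Dav;\alphat)$ in terms of the current suboptimality $\bO(\alphat)-\bO(\alphav^\star)$. The $\eta$-approximate solve~\eqref{eq:theta_approx} lets me pass from the actual update $\Dav$ to a comparison point: $\bO(\alphat)-\model_{\sigma_t}(\Dav;\alphat)\geq (1-\eta)\big(\model_{\sigma_t}(\0;\alphat)-\model_{\sigma_t}(\Dav^\star;\alphat)\big)\geq (1-\eta)\big(\model_{\sigma_t}(\0;\alphat)-\model_{\sigma_t}(s(\alphav^\star-\alphat);\alphat)\big)$ for any scalar $s\in[0,1]$, and $\model_{\sigma_t}(\0;\alphat)=\bO(\alphat)$. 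Now I expand $\model_{\sigma_t}$ at the scaled direction $s(\alphav^\star-\alphat)$: the linear term $\nabla f(A\alphat)^\top A\,s(\alphav^\star-\alphat)$ plus the $g$ terms, using convexity (in fact $\mu$-strong convexity) of each $g_i$ to bound $g_i(\alpha_i^{(t)}+s(\alpha_i^\star-\alpha_i^{(t)}))\leq (1-s)g_i(\alpha_i^{(t)})+s\,g_i(\alpha_i^\star)-\frac{\mu}{2}s(1-s)(\alpha_i^\star-\alpha_i^{(t)})^2$, plus the quadratic term $\frac{\sigma_t}{2}s^2(\alphav^\star-\alphat)^\top\tilde H(\alphat)(\alphav^\star-\alphat)$. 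Using smoothness/convexity of $f$ to relate $\nabla f(A\alphat)^\top A(\alphav^\star-\alphat)$ to $f(A\alphav^\star)-f(A\alphat)$, the linear and $g$ pieces combine to give at least $s\big(\bO(\alphat)-\bO(\alphav^\star)\big)$, and the quadratic term together with the strong-convexity term is bounded below by $-\frac{s^2}{2}(c_A\sigma_{\sup}+\mu\tau/\text{(something)})\|\alphav^\star-\alphat\|^2$-type expression while the $-\frac{\mu}{2}s(1-s)\|\cdot\|^2$ term can be used to kill the dependence on $\|\alphav^\star-\alphat\|^2$ by strong convexity of $\bO$ itself: $\mu$-strong convexity of $g$ implies $\bO(\alphat)-\bO(\alphav^\star)\geq \frac{\mu}{2}\|\alphat-\alphav^\star\|^2$. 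Optimizing over $s\in[0,1]$ (the optimal $s$ will be $\frac{\mu\tau}{c_A\sigma_{\sup}+\mu\tau}$, clipped to $[0,1]$) yields
\begin{align}
\bO(\alphat)-\model_{\sigma_t}(\Dav;\alphat)\ \geq\ (1-\eta)\,\frac{\mu\tau}{c_A\sigma_{\sup}+\mu\tau}\,\big(\bO(\alphat)-\bO(\alphav^\star)\big),
\label{eq:plan-model-decrease}
\end{align}
and combining with $\rho_t\geq\xi$ gives the contraction $\bO(\alphav^{(t+1)})-\bO(\alphav^\star)\leq C_2\,\big(\bO(\alphat)-\bO(\alphav^\star)\big)$ with $C_2=1-\xi(1-\eta)\frac{\mu\tau}{c_A\sigma_{\sup}+\mu\tau}$ on successful iterations, and no increase on unsuccessful ones.

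Iterating the contraction over the successful steps gives $\bO(\alphat)-\bO(\alphav^\star)\leq C_2^{\#\text{successful}}\,\varepsilon_0$, so $\frac{1}{\log(C_2^{-1})}\log(\varepsilon_0/\varepsilon)$ successful iterations suffice; adding the bound on unsuccessful iterations from the first paragraph and rounding up produces the stated count. The main obstacle I anticipate is the careful expansion in~\eqref{eq:plan-model-decrease}: correctly tracking the smoothness constant $\tau$ of $f$ through the step from $\nabla f(A\alphat)^\top A(\alphav^\star-\alphat)$ to a suboptimality bound, getting the $\tilde H$ quadratic term bounded by $c_A=\max_k\|A_{[k]}\|^2$ via the block structure~\eqref{eq:Htildesep} and whatever spectral bound on $\tilde H$ is implied by the assumptions, and then balancing the two negative quadratic contributions against $\mu$-strong convexity of $\bO$ so that the residual $\|\alphat-\alphav^\star\|^2$ dependence cancels and the optimal $s$ comes out exactly as $\frac{\mu\tau}{c_A\sigma_{\sup}+\mu\tau}$. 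I would also need to double-check the edge case where this optimal $s$ exceeds $1$ (very well-conditioned problems), in which case one takes $s=1$ and the rate only improves. Everything else — the two-group split and the geometric sum — is routine once~\eqref{eq:plan-model-decrease} is in hand, and is essentially inherited from the proof of Theorem~\ref{th:main_convergence}.
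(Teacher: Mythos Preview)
Your proposal is correct and structurally parallel to the paper's proof, but you use a genuinely different comparison direction to obtain the per-step model decrease. The paper (Lemma~\ref{lem:dec}) does not move toward the optimum $\alphav^\star$; instead it chooses the direction $\kappa(\uv-\alphat)$ with $u_i\in\partial g_i^*(-A_{:,i}^\top\nabla f(A\alphat))$ and shows that the linear part of the model decrease is exactly the \emph{duality gap} $\Gap(\alphat)$, then uses $\Gap(\alphat)\geq \bO(\alphat)-\bO(\alphav^\star)$. You instead plug in $s(\alphav^\star-\alphat)$ and use convexity of $f$ to turn the linear term into $s(\bO(\alphat)-\bO(\alphav^\star))$ directly. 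Both routes lead to the same quadratic remainder and the same cancellation: the positive $\tfrac{\mu}{2}s(1-s)\|\cdot\|^2$ term from strong convexity of $g_i$ absorbs the $\tfrac{\sigma_t}{2}s^2(\cdot)^\top\tilde H(\cdot)\leq \tfrac{c_A\sigma_{\sup}}{2\tau}s^2\|\cdot\|^2$ term exactly when $s=\tfrac{\mu\tau}{c_A\sigma_{\sup}+\mu\tau}$, recovering the paper's $C_2$. (Note that this cancellation alone does the job; you do \emph{not} also need the separate bound $\bO(\alphat)-\bO(\alphav^\star)\geq\tfrac{\mu}{2}\|\alphat-\alphav^\star\|^2$ that you mention, and invoking it instead would give a worse constant.) Your argument is more elementary since it avoids Fenchel duality; the paper's duality-gap version has the advantage that the same lemma also drives the non-strongly-convex case (Theorem~\ref{th:main_convergence}), where $\alphav^\star$ need not be unique and the $L$-bounded-support assumption is used to control $\|\uv-\alphat\|$. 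The counting of unsuccessful iterations and the final assembly are exactly as in the paper.
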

Note that for strongly-convex functions $g_i$, similar global rates of convergence to the one derived in Theorem~\ref{th:main_convergence_strongly_convex} are obtained by existing distributed second-order methods such as~\cite{lee2017distributed, wang2017giant}. However, we are not aware of any result similar to Theorem~\ref{th:main_convergence} in the more general case where $g_i$ are non-strongly convex functions.
%@mj: what about CoCoA?

%%%%%%%%%%%%%%%%%%%%%%%%%%%%%%%%%%%%%%%%%%%%%%%%%%%%%%%%%%%%%%%%%%%%%%%%%%%%%%%
%%%%%%%%%%%%%%%%%%%%%%%%%%%%%%%%%%%%%%%%%%%%%%%%%%%%%%%%%%%%%%%%%%%%%%%%%%%%%%%

\textbf{Proof Sketch}

We summarize the main steps in the proof of Theorem~\ref{th:main_convergence} and~\ref{th:main_convergence_strongly_convex}, a detailed derivation is provided in the Appendix.

\textbf{Step 1.} Recall that the model with  block diagonal Hessian approximation, described in Section~\ref{sec:block_model}, acts as a surrogate to minimize the function introduced in \eqref{eq:A}. The first step  is therefore to establish a bound on the decrease of the auxiliary model  for every step of the algorithm, given that each local subproblem is solved $\eta$-approximately.
This bound on the model decrease $\model_\sigmat(\0;\alphav)-\model_\sigmat(\Dav;\alphav)$, stated in Lemma \ref{lem:dec},  is established using a primal-dual perspective on the problem, similar to \cite{sdca}.

%\begin{restatable} 
\begin{restatable}[]{lemma}{lemmathree}
 \label{lem:dec}
Assume $f$ is $\tfrac 1 \tau$-smooth and $g_i$ are $\mu$-strongly convex with $\mu\geq 0$. Then, the per-step model decrease of Algorithm~\ref{alg:adaptive_cocoa} can be lower bounded  as:
\begin{align*}
&\model_\sigmat(\0; \alphat) - \model_\sigmat(\Dav; \alphat) \\
&\quad\geq (1-\eta) \left[\kappa \Gap(\alphav^{(t)})  - \frac {\kappa^2} 2 R^{(t)}\right], 
\end{align*}
where $\Gap(\alphav^{(t)})$ denotes the duality gap, $\kappa\in(0,1]$ and
\begin{align*}R^{(t)}:=& \sigmat (\uv^{(t)}-\alphav^{(t)})^\top  \tilde H(\alphav) (\uv^{(t)}-\alphav^{(t)})\\&- \tfrac {\mu (1-\kappa)}{\kappa} \|\alphav^{(t)}-\uv^{(t)}\|_2^2
\end{align*}
with $u_i^{(t)}\in \partial g^*_i(A_{:,i}^\top\nabla f(A \alphav^{(t)}))$\footnote{$g^*_i$ denotes the convex conjugate of the function $g_i$, which 
	is defined as $g^*_i(u):= \sup_v [uv - g_i(v)]$.}.
%cdu: x_i is not introduced
%cdu: the *(convex conjugate) notation is not clear
\end{restatable}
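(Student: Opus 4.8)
The plan is to exploit the block-separability of the model together with an $\eta$-approximate local solve, and then pick a clever feasible direction inside each block to lower-bound the exact per-block decrease. First I would use the defining inequality \eqref{eq:theta_approx} of $\eta$-approximate solutions to reduce the quantity $\model_{\sigma_t}(\0;\alphat)-\model_{\sigma_t}(\Dav;\alphat)$ to a $(1-\eta)$ fraction of the \emph{optimal} per-block decrease, i.e.
\[
\model_{\sigma_t}(\0;\alphat)-\model_{\sigma_t}(\Dav;\alphat)\ \geq\ (1-\eta)\sum_k\bigl[\msk(\0;\alphat)-\msk(\Dav^\star_{[k]};\alphat)\bigr].
\]
Here I use that the model splits over the $K$ blocks (equation after \eqref{eq:Htildesep}), so the optimum of the aggregated model is the sum of the block optima, and that minimizing over each block only makes the decrease larger. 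Then, since $\Dav^\star_{[k]}$ is the minimizer, I can lower-bound $\msk(\0;\alphat)-\msk(\Dav^\star_{[k]};\alphat)$ by the decrease attained at \emph{any} particular feasible choice of $\Dak$.

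The key choice is the scaled step toward the dual-optimality direction: take $\Dak := \kappa\,(\uv^{(t)}-\alphav^{(t)})_{[k]}$ for a step size $\kappa\in(0,1]$, where $u_i^{(t)}\in\partial g_i^*(A_{:,i}^\top\nabla f(A\alphat))$ is exactly the quantity appearing in the statement. Plugging this into the explicit form \eqref{eq:localModel} of $\msk$ and summing over $k$, the linear term $\nabla f(\vv)^\top A\Dak$ reassembles into $\kappa\,\nabla f(A\alphat)^\top A(\uv^{(t)}-\alphav^{(t)})$, the quadratic term becomes $\tfrac{\sigma_t\kappa^2}{2}(\uv^{(t)}-\alphav^{(t)})^\top\tilde H(\alphav)(\uv^{(t)}-\alphav^{(t)})$ by the separability \eqref{eq:Htildesep}, and the $g_i$ terms I would handle using $\mu$-strong convexity of $g_i$: convexity along the segment from $\alpha_i^{(t)}$ to $\alpha_i^{(t)}+(u_i^{(t)}-\alpha_i^{(t)})=u_i^{(t)}$ gives
\[
g_i(\alpha_i^{(t)}+\kappa(u_i^{(t)}-\alpha_i^{(t)}))\ \leq\ (1-\kappa)g_i(\alpha_i^{(t)})+\kappa\, g_i(u_i^{(t)})-\tfrac{\mu}{2}\kappa(1-\kappa)(u_i^{(t)}-\alpha_i^{(t)})^2 .
\]
This produces exactly the $-\tfrac{\mu(1-\kappa)}{\kappa}\|\alphav^{(t)}-\uv^{(t)}\|_2^2$ term (after the overall $\kappa$ is factored out) inside $R^{(t)}$.

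The remaining task is to recognize that the $\kappa$-linear part of the resulting bound is $\kappa$ times the duality gap $\Gap(\alphav^{(t)})$. For this I would invoke the Fenchel--Rockafellar dual of \eqref{eq:A}: by the choice $u_i^{(t)}\in\partial g_i^*(A_{:,i}^\top\nabla f(A\alphat))$, the pair $(\alphat,\uv^{(t)})$ is such that $\nabla f(A\alphat)^\top A(\uv^{(t)}-\alphav^{(t)}) + \sum_i\bigl(g_i(u_i^{(t)})-g_i(\alpha_i^{(t)})\bigr) = -\Gap(\alphav^{(t)})$, using the standard conjugacy identity $g_i(u_i^{(t)})+g_i^*(A_{:,i}^\top\nabla f(A\alphat)) = u_i^{(t)}\cdot A_{:,i}^\top\nabla f(A\alphat)$ and the definition of the gap as primal-plus-dual objective. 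Collecting the three pieces gives
\[
\sum_k\bigl[\msk(\0;\alphat)-\msk(\Dav^\star_{[k]};\alphat)\bigr]\ \geq\ \kappa\,\Gap(\alphav^{(t)})-\tfrac{\kappa^2}{2}R^{(t)},
\]
and combining with the $\eta$-approximation reduction yields the claim.

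I expect the main obstacle to be the bookkeeping of the primal--dual correspondence: carefully matching the gradient mapping $A_{:,i}^\top\nabla f(A\alphat)$ to a genuine dual feasible point, checking the Fenchel conjugacy identities termwise (especially when some $g_i$ are non-differentiable or have $\mu=0$, so that $\partial g_i^*$ is used rather than $\nabla g_i^*$), and verifying that the linear term truly collapses to $-\Gap$ rather than a one-sided surrogate. The smoothness of $f$ is not actually needed for this lemma (it enters later when bounding $\rho_t$), so I would not invoke the $\tfrac1\tau$-smoothness here beyond what the statement nominally assumes; the only structural facts used are convexity of $g_i$, strong convexity when $\mu>0$, and the block-separability \eqref{eq:Htildesep} of $\tilde H$.
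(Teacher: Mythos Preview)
Your proposal is correct and follows essentially the same route as the paper's own proof: reduce via the $\eta$-approximation inequality to the optimal model decrease, substitute the specific trial step $\kappa(\uv^{(t)}-\alphav^{(t)})$, apply $\mu$-strong convexity of $g_i$ along the segment, and identify the linear-in-$\kappa$ part with the duality gap through the Fenchel--Young equality at $u_i^{(t)}$. Your observation that the $\tfrac{1}{\tau}$-smoothness of $f$ is not actually used in this lemma is also consistent with the paper's argument.
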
 

\textbf{Step 2.} For iterations that are successful (i.e., they provide sufficient function decrease as measured by $\rho_t\geq\xi$ in step~10 of Algorithm~\ref{alg:adaptive_cocoa}), the construction of  Algorithm \ref{alg:adaptive_cocoa} allows us to relate the model decrease from Lemma \ref{lem:dec} to the function decrease $\bO(\alphat)-\bO(\alphat+\Dav)$ through the parameter $\xi$. This yields a lower bound on the function decrease for every successful update as provided in Lemma~\ref{lem:f_dec} below.

\begin{restatable}[]{lemma}{lemmafour}
\label{lem:f_dec}
The function decrease of Algorithm \ref{alg:adaptive_cocoa} for a successful update  $(\Dav,\sigmat)$ can be bounded  as:
\[\bO(\alphat) - \bO(\alphat+\Dav) \geq \xi (1-\eta) \left[\kappa\Gap(\alphav^{(t)})  - \frac {\kappa^2} 2 R^{(t)}\right], \]
where $\kappa\in(0,1]$ and $R^{(t)}$ is defined as in Lemma \ref{lem:dec}.
\end{restatable}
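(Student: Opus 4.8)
\textbf{Proof plan for Lemma~\ref{lem:f_dec}.}

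The plan is to reduce the statement to a one-line combination of the algorithm's acceptance rule and Lemma~\ref{lem:dec}. The crucial observation is the identity $\model_{\sigma_t}(\0;\alphat)=\bO(\alphat)$: substituting $\Dav=\0$ into \eqref{eq:modelCoCoA}--\eqref{eq:fapprox} annihilates the gradient and curvature terms and leaves precisely $f(A\alphat)+\sum_i g_i(\alpha_i^{(t)})=\bO(\alphat)$. Hence the denominator of the ratio $\rho_t$ in \eqref{eq:rho} is exactly the per-step model decrease $\model_{\sigma_t}(\0;\alphat)-\model_{\sigma_t}(\Dav;\alphat)$ that Lemma~\ref{lem:dec} lower-bounds, and a ``successful update'' is by definition one with $\rho_t\ge\xi$. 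So everything hinges on (i) verifying that this denominator is non-negative, and (ii) pushing the factor $\xi$ through the ratio.

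For (i) I would invoke the $\eta$-approximation condition \eqref{eq:theta_approx}: since $\Dav^\star_{[k]}$ minimizes $\mstk(\cdot;\alphat)$, the quantity $\mstk(\0;\alphat)-\mstk(\Dav^\star_{[k]};\alphat)$ in the denominator of \eqref{eq:theta_approx} is non-negative, and because $\eta<1$ the numerator is no larger, which forces $\mstk(\Dak;\alphat)\le\mstk(\0;\alphat)$ for every block $k$. Summing over $k$ via the block-separability $\model_{\sigma_t}=\sum_k\mstk$ gives $\model_{\sigma_t}(\Dav;\alphat)\le\model_{\sigma_t}(\0;\alphat)=\bO(\alphat)$, i.e.\ the denominator of $\rho_t$ is $\ge 0$. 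For (ii), on a successful iteration $\rho_t\ge\xi>0$ together with this non-negativity yields
\begin{align*}
\bO(\alphat)-\bO(\alphat+\Dav) &= \rho_t\big(\bO(\alphat)-\model_{\sigma_t}(\Dav;\alphat)\big)\\
&\ge \xi\big(\bO(\alphat)-\model_{\sigma_t}(\Dav;\alphat)\big)\\
&= \xi\big(\model_{\sigma_t}(\0;\alphat)-\model_{\sigma_t}(\Dav;\alphat)\big),
\end{align*}
and applying Lemma~\ref{lem:dec} to the right-hand side produces exactly $\xi(1-\eta)\big[\kappa\Gap(\alphav^{(t)})-\tfrac{\kappa^2}{2}R^{(t)}\big]$, as claimed.

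The only genuinely delicate point — and the one I expect to be the main obstacle in writing this carefully — is the sign bookkeeping. The bracket $\kappa\Gap(\alphav^{(t)})-\tfrac{\kappa^2}{2}R^{(t)}$ coming out of Lemma~\ref{lem:dec} is not manifestly non-negative, so one cannot simply replace $\rho_t$ by $\xi$ without first knowing that the denominator $\bO(\alphat)-\model_{\sigma_t}(\Dav;\alphat)$ is non-negative; this is precisely why step (i) is needed, and it is the single place where the $\eta$-approximation assumption and the block-separable structure of the model enter the argument. Once $\bO(\alphat)-\model_{\sigma_t}(\Dav;\alphat)\ge 0$ is in hand, the inequality $\rho_t\,(\cdot)\ge\xi\,(\cdot)$ is valid irrespective of the sign of the bracket, and the remainder is an immediate consequence of Lemma~\ref{lem:dec} and the acceptance test in step~10 of Algorithm~\ref{alg:adaptive_cocoa}.
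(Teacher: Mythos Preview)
Your proposal is correct and follows essentially the same route as the paper: identify $\model_{\sigma_t}(\0;\alphat)=\bO(\alphat)$, use the acceptance rule $\rho_t\ge\xi$ to pass from function decrease to model decrease, and then invoke Lemma~\ref{lem:dec}. In fact you are more careful than the paper's own two-line proof, which silently multiplies the inequality $\rho_t\ge\xi$ through by the denominator without first checking its sign; your step~(i) via the $\eta$-approximation condition fills exactly that small gap.
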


\textbf{Step 3.} At this stage, we have shown that each successful iteration decreases the function value,
 therefore making progress towards the optimum. However, unsuccessful iterations (for which $\rho_t < \xi$) do not decrease the objective  and overall convergence to an optimum can only occur if the number of these iterations is limited. The next step is therefore to bound the number of  unsuccessful iterations. This is accomplished by showing that the construction of the sequence $\{\sigma_t\}_{t\geq 0}$ is such that the number of successive unsuccessful iterations is bounded and, hence, increasing $\sigma$ will eventually yield a successful iteration that will allow us to decrease the objective function. This results in a bound on the number of successful and unsuccessful iterations derived in the Appendix. Finally, the rate of convergence in Theorem \ref{th:main_convergence} and Theorem \ref{th:main_convergence_strongly_convex} are obtained by combining the bound on the number of steps with the function decrease for each successful step.

\textbf{Remark.}  Note that the update scheme \eqref{eq:sigma_update} in Algorithm~\ref{alg:adaptive_cocoa} is one of many that satisfy the conditions required for proving convergence. For further details, we refer the reader to the literature on trust-region methods~\cite{conn2000trust}.

% !TEX root = icml-hessian-cocoa.tex

\section{Related Work}

\textbf{First-order Methods.} 
Most first-order stochastic methods require frequent communication which comes with high costs in distributed settings, thus they are often prefered in multi-core settings. This is for example the case for the popular Hogwild! algorithm~\cite{Niu:2011wo} that relies on asynchronous SGD updates in a lock-free setting and requires communication after each optimization step.
Alternatives include variance-reduced methods such as~\cite{lee2015distributed} and coordinate descent methods such as~\cite{richtarik2013distributed}, however, they suffer similar communication bottlenecks. 
%TODO: \cite{calauzenes2017dsaga}.
%cdu: they do local SGD with global variance reduction term, but they require storage of gradient. based on first order

\textbf{Trust-region Methods.} 
\quad
These methods use a surrogate model  to approximate the objective within a region around the current iterate. The size of the trust region is expanded or contracted according to the fitness of the surrogate model to the true objective. For efficiency reasons, the surrogate  model is often a quadratic model~\cite{conn2000trust,karimireddy2018newton}, although cubic models can also be used~\cite{nesterov2006cubic}. 
Though trust-region methods have been extensively used in a single-machine setting, to the best of our knowledge we are the first to apply a trust-region-like approach in a \textit{distributed} setting. 

\textbf{Line-search vs Trust-region.}
Line-search techniques are a popular way to guarantee convergence and they have recently been explored in distributed settings, e.g., \cite{Hsieh:2016wg,lee2017distributed,Trofimov:2017ho,mahajan2017distributed,lee2018distributed}.  Our trust-region approach has clear advantages compared to line-search methods: i) a line-search method  assumes a \textit{fixed} auxiliary model --which may be an arbitrarily bad approximation of the true objective-- that is used to find an acceptable step size.  In contrast, our approach  adaptively tunes the auxiliary model to ensure that it is a good fit to the true objective. 
ii) in general,  a line-search method requires multiple objective value evaluations in order to test different step sizes, while our approach only needs one objective value evaluation to calculate $\rho_t$. The advantages of our method are verified empirically in Section~\ref{sec:experiments}.

\textbf{Approximate Newton-type Methods.}
For distributed $L_1$-regularized problems~\cite{Andrew:2007cu} proposed a quasi-newton method without convergence guarantees. Most of the literature on Newton-type methods are otherwise designed to optimize strongly-convex objectives.
DANE~\cite{shamir2014communication} is a distributed approximate Newton-method with a linear rate of convergence for quadratic functions. AIDE~\cite{reddi2016aide} is an accelerated version using the Catalyst scheme. Another similar  approach is DiSCO~\cite{zhang2015disco} which consists of an inexact damped Newton method using conjugate gradient steps, achieving a linear rate of convergence for self-concordant functions. 
Finally, GIANT~\cite{wang2017giant} relies on conjugate gradient steps and achieves a local linear-quadratic convergence rate but does not provide a global rate of convergence. It was shown empirically to outperform DANE, AIDE and DiSCO. 
Note that the convergence results of these approaches require each subproblem to be solved with high accuracy, which is often prohibitive for large-scale datasets. 
%\cite{lee2018distributed} address sum-structured problems instead of the form \eqref{eq:A}. %(also aide)
% 
Some approaches suggest using a block-diagonal Hessian approximation such as \cite{Hsieh:2016wg%only for SVM case
,lee2017distributed,lee2018inexact} but they all 
rely on a line-search approach which is shown to be inferior to our adaptive approach in the experimental section.
While both our approach and~\cite{lee2017distributed} require $\mathcal{O}(\log(1/\varepsilon))$ iterations to reach $\varepsilon$ accuracy for a strongly-convex $g$, we further  provide a rate of convergence for the more general case where $g$ is non-strongly  convex.

\textbf{Distributed Primal-Dual Methods.}
Approaches such as~\citep{yang2013trading,Jaggi:2014vi,zhang2015disco,zheng2017general,wang2017giant} are restricted to strongly-convex regularizers, and typically work on the dual formulation of the objective. CoCoA~\citep{Smith:2016wp} provides an extension to a wider class of regularizers, including $L_1$, as of interest here. Although it allows for the use of arbitrary solvers on each worker to regulate the amount of communication, this approach is inherently based on a first-order model of the objective and does not use second-order information.

%The master thesis of~\cite{gargiani2017hessian} was the first to incorporate block-diagonal second-order information for the distributed setting in the general class of problems \eqref{eq:A}. %Hsieh:2016wg was only SVM
%In contrast to this fixed Hessian information \cite{Hsieh:2016wg,gargiani2017hessian,lee2017distributed,lee2018inexact}, we here extend the approach to be adaptive to the quality of the local surrogate model in a trust region sense.

In an earlier work by~\cite{gargiani2017hessian} a modification of CoCoA was discussed which incorporates local second-order information for the general class of problems \eqref{eq:A}. We here extend this approach  to be adaptive to the quality of the local surrogate model in a trust region sense, in contrast to using fixed Hessian information \cite{Hsieh:2016wg,gargiani2017hessian,lee2017distributed,lee2018inexact}.

% !TEX root = icml-hessian-cocoa.tex

\section{Experimental Results}
\label{sec:experiments}
We devote the first part of this section to analysing the properties of our adaptive scheme. In the second part we evaluate its performance for training a logistic regression model regularized with $L_1$ and $L_2$ regularization. We compare ADN to state-of-the-art distributed solvers on four large-scale datasets (see Table~\ref{tbl:datasets}). All algorithms presented in this section are implemented in C++, they are optimized for sparse data structures and use MPI to handle communication between workers. If not stated otherwise, we use $K=8$ workers.

 \begin{table}[h]
\centering
\small{
\centering
 \begin{tabular}{|l| r r r|} 
\hline
  & \# examples & \# features & sparsity \\ [0.1ex] 
  \hline\vspace{0.3ex}
url&2'396'130& 3'230'442&3.58 E-05\\ [0.1ex] 
webspam&262'938 &680'715&2.24 E-04\\[0.1ex] 
 kdda & 8'407'751&19'306'083&1.80 E-06 \\[0.1ex] 
 criteo &45'840'617 & 1'000'000&1.95 E-06\\[0.1ex] 
 \hline
 \end{tabular}}
 \caption{Datasets used for the experiments.}
 \label{tbl:datasets}
\vspace{-0.3cm}
\end{table}
 \begin{figure}[t]
 \centering
 \includegraphics[height=4.2cm]{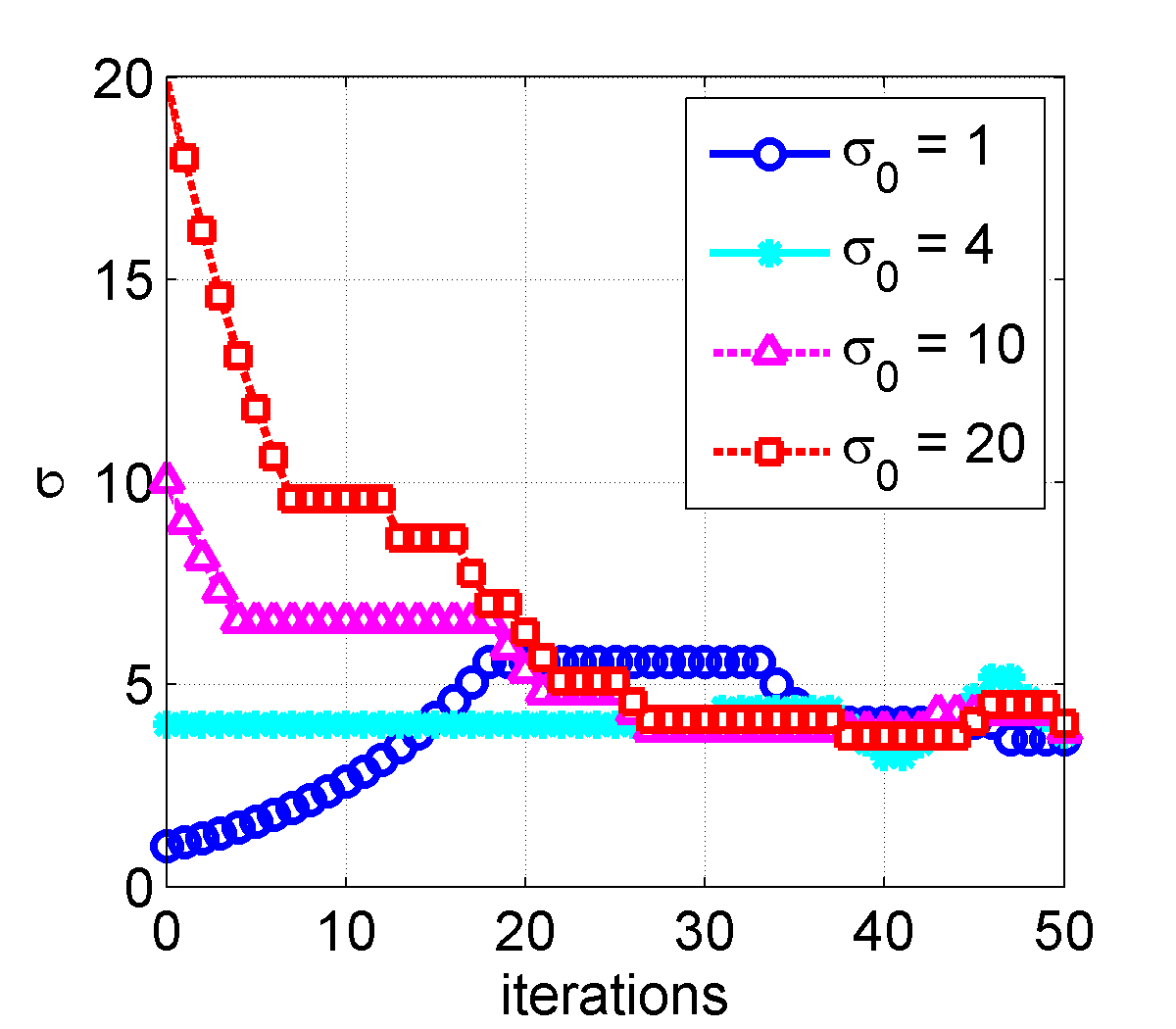}\vspace{-0.1cm}
 \caption{Robustness to initialization: Training the dual logistic regression model on a subsample (1 million examples) of the criteo dataset for different $\sigma_0$ with $\gamma =1.1$, $\zeta =1.1$ and $\xi=0$.\vspace{-0.2cm}}
 \label{fig:init}
 \end{figure}

\subsection{Algorithm Properties}

\paragraph{Initialization of $\sigma$.}
Given the wide dissemination of machine learning models to diverse fields, it is becoming increasingly important to develop algorithms that can be deployed without requiring expert knowledge to choose parameters. In this context we first check the sensitivity of our algorithm to the choice of $\sigma_0$. The results shown in Figure~\ref{fig:init} demonstrate that our adaptive scheme dynamically finds an appropriate value of $\sigmat$, independently of the initialization.
\vspace{-1mm}

\paragraph{Parameter-Free Update Strategy.}
In addition to $\sigma_0$ there are three more parameters in Algorithm \ref{alg:adaptive_cocoa} -- namely $\zeta$, $\gamma$ and $\xi$ -- that determine how to update $\sigmat$. The most natural choice for $\xi$ is a small positive value, as we do not want to discard updates that would yield a function decrease; we therefore choose $\xi=0$. The convergence of Algorithm~\ref{alg:adaptive_cocoa} is guaranteed for any choice of $\zeta,\gamma>1$, and we found empirically that the performance is not very sensitive to the choice of these parameters and the optimal values are robust across different datasets (e.g., $\gamma=\zeta \approx 1.2$ is generally a good choice). However, to completely eliminate these parameters from the algorithm we suggest the following practical parameter-free update schedule:
 \[\small{\sigma_{t+1}:= \frac {f(A(\alphat\!+\!\Dav))-f(A\alphat)-\nabla f(A\alphat)A\Dav}{\hat f(A\alphat, A\Dav)-f(A\alphat)-\nabla f(A\alphat )A\Dav}\sigmat}.\]
This scheme is not only parameter-free, but it also adapts~$\sigma$ proportionally to the misfit of the model. The evaluation of this scaling factor does not add any additional computation to the evaluation of $\rho_t$. Note that for this scheme to meet the required conditions of convergence presented in Section~\ref{sec:analysis}, we need to ensure that the sequence of $\sigma_t$ is bounded, which can easily be done by defining an arbitrary maximum value although we empirically found that this was not necessary.
Because of this appealing property of not requiring any tuning we will use this strategy for the following experiments.

\begin{figure}[!t]
	\vspace{-0.3cm}
\centering
\subfigure[K=32]{\label{fig:L1webspam}\includegraphics[width=0.495\columnwidth]{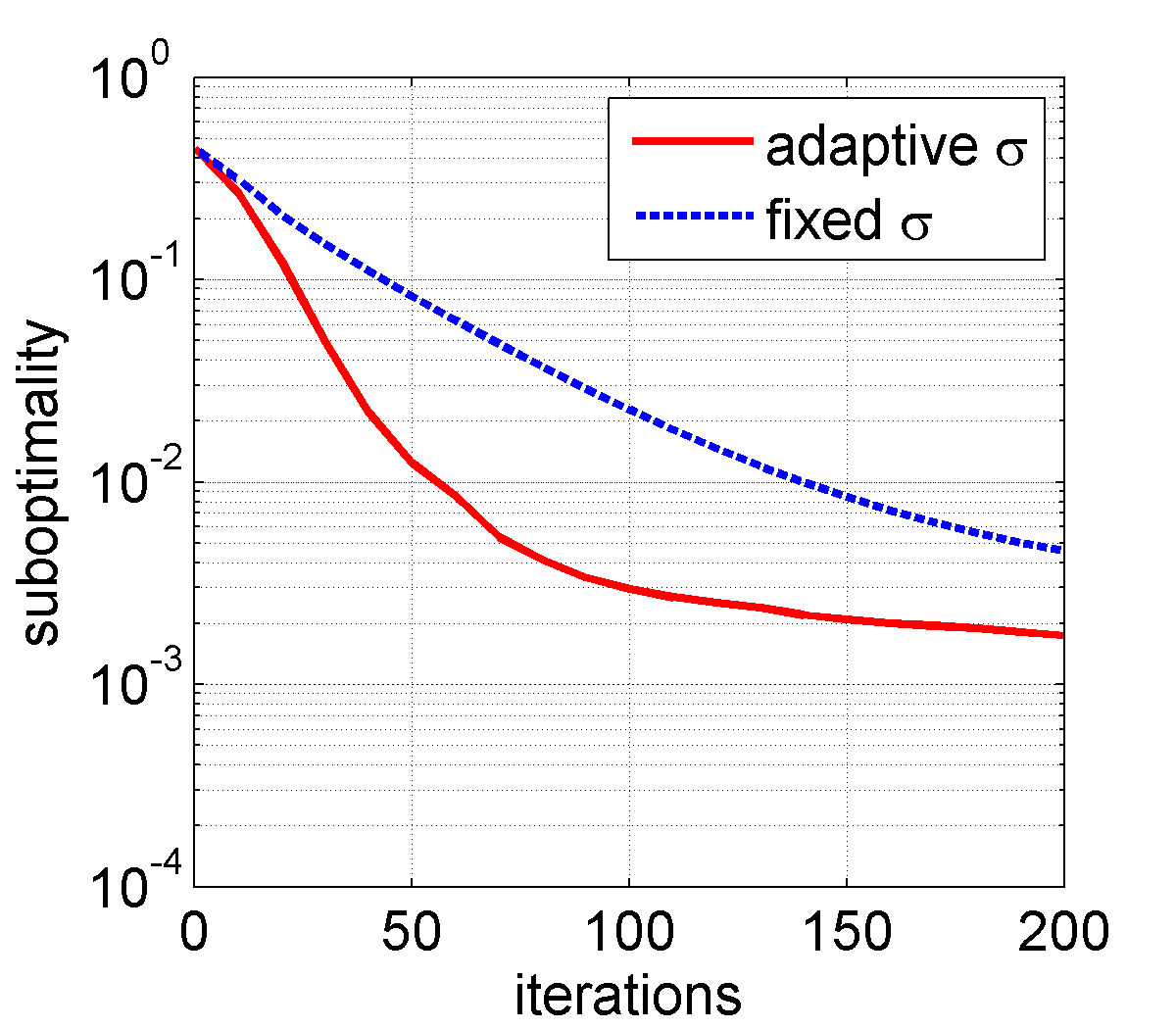}}
\subfigure[K=16]{\label{fig:L1webspam}\includegraphics[width=0.495\columnwidth]{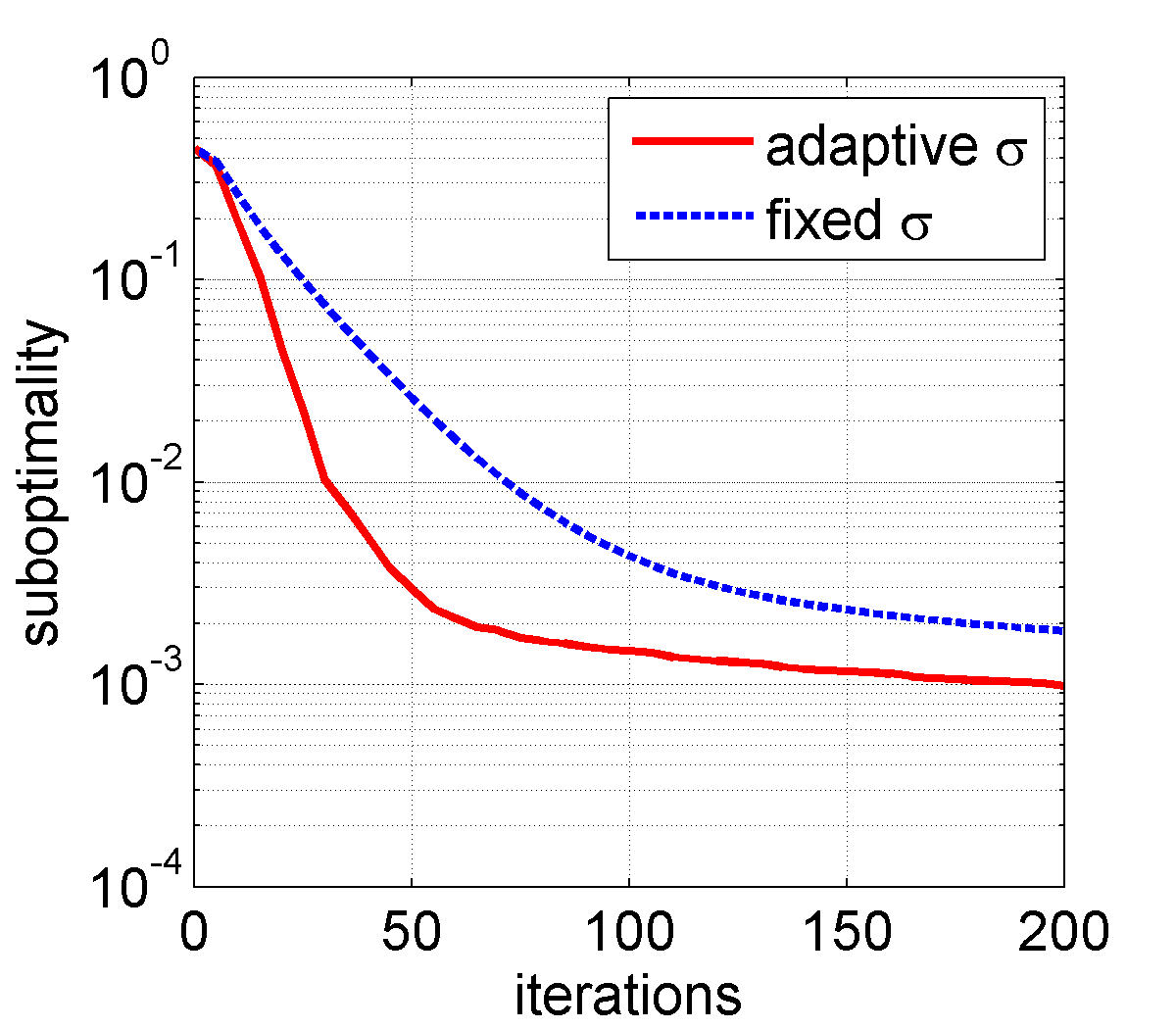}}
\vspace{-0.2cm}
\subfigure[K=8]{\label{fig:L1webspam}\includegraphics[width=0.495\columnwidth]{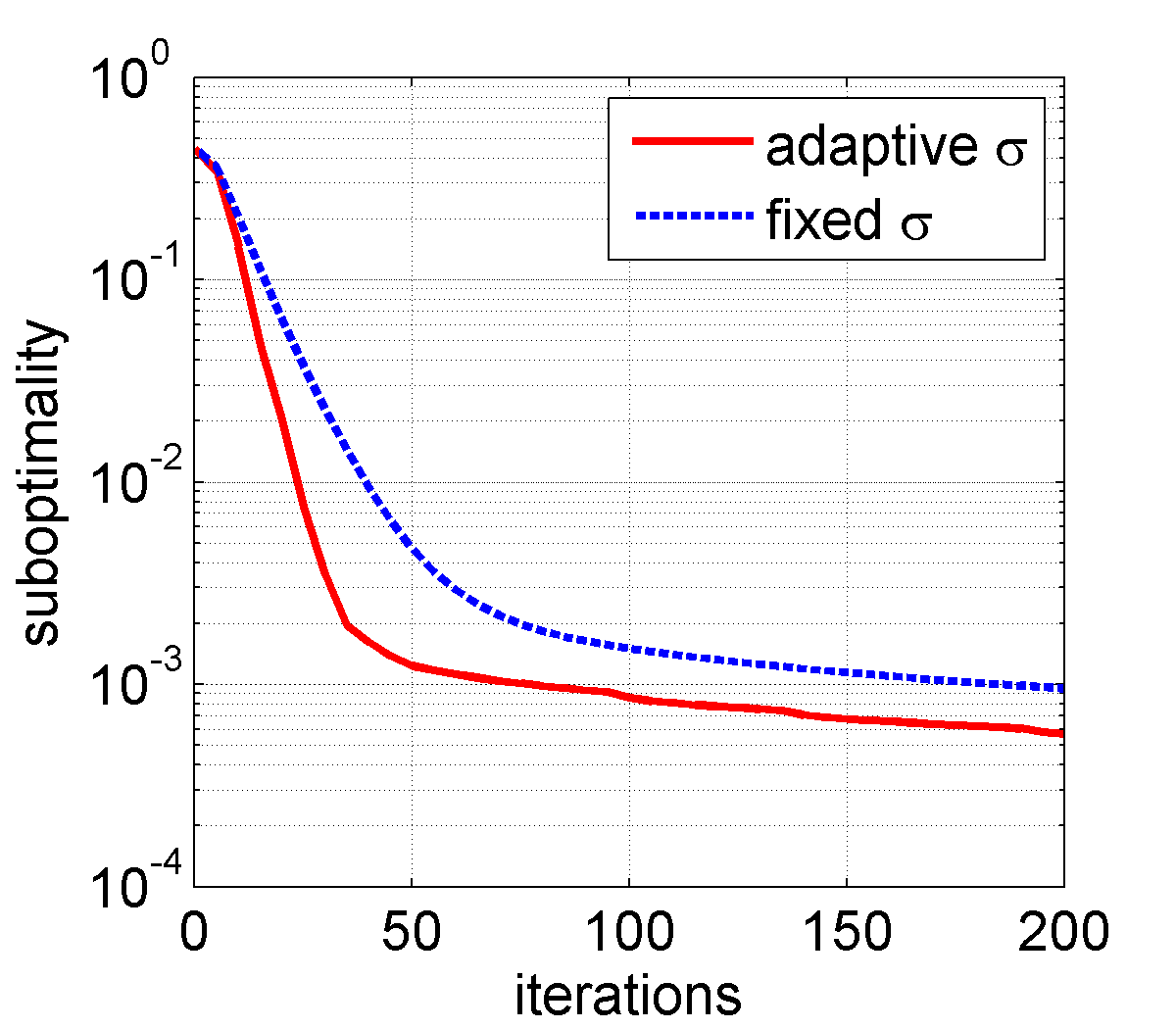}}
\subfigure[K=4]{\label{fig:L1webspam}\includegraphics[width=0.495\columnwidth]{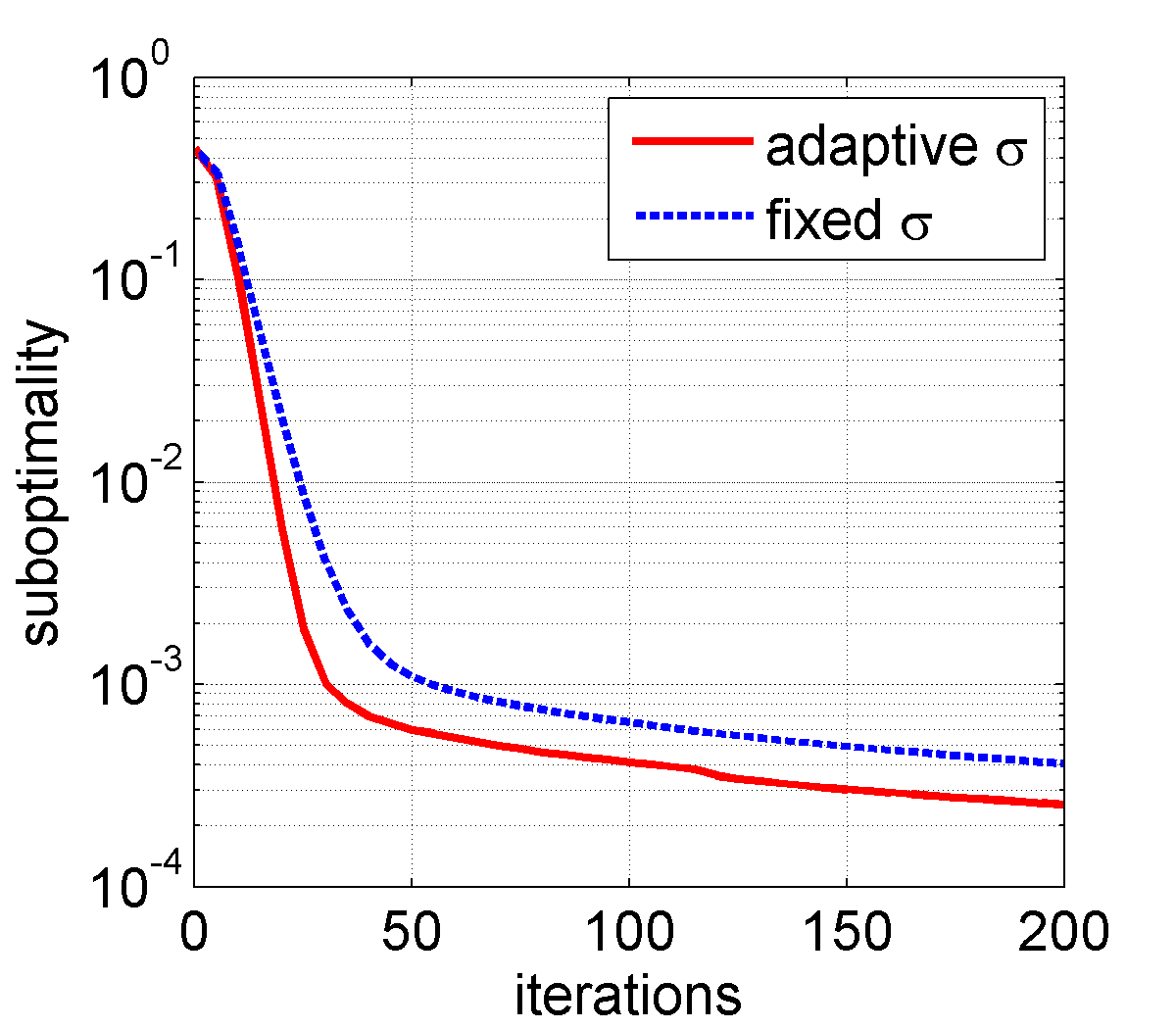}}
\caption{Comparison of using an adaptive approach for $\sigma$ vs. using a fixed safe value for $\sigma$ for different numbers of workers ($K$). Training $L_2$ logistic regression on a subsample (10 million examples) of the criteo dataset.\vspace{-0.2cm}}
\label{fig:N}
\end{figure}

\begin{figure*}[t!]
\centering
\subfigure[url]{\label{fig:L1url}\includegraphics[width=0.5\columnwidth]{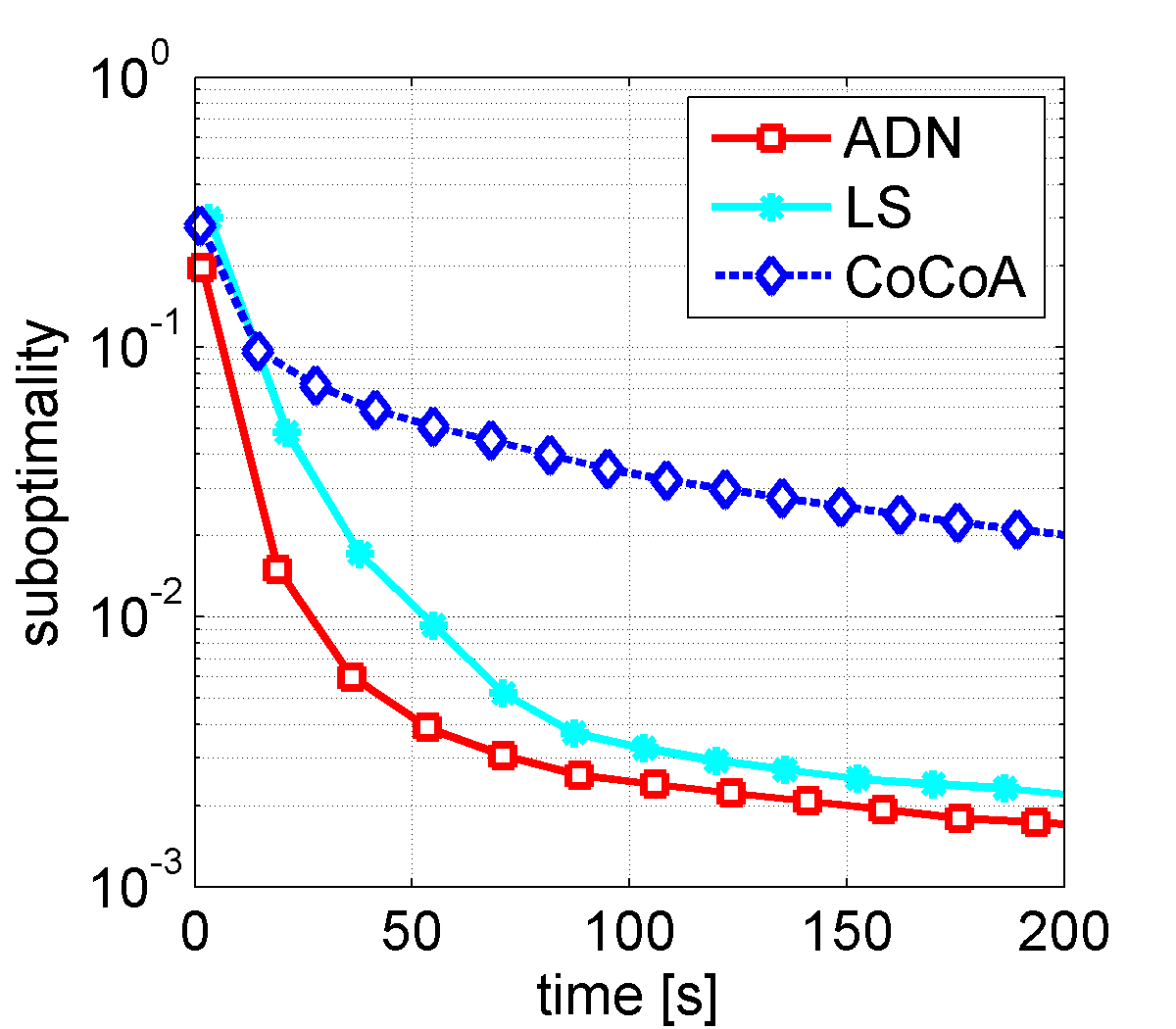}}
\subfigure[webspam]{\label{fig:L1webspam}\includegraphics[width=0.5\columnwidth]{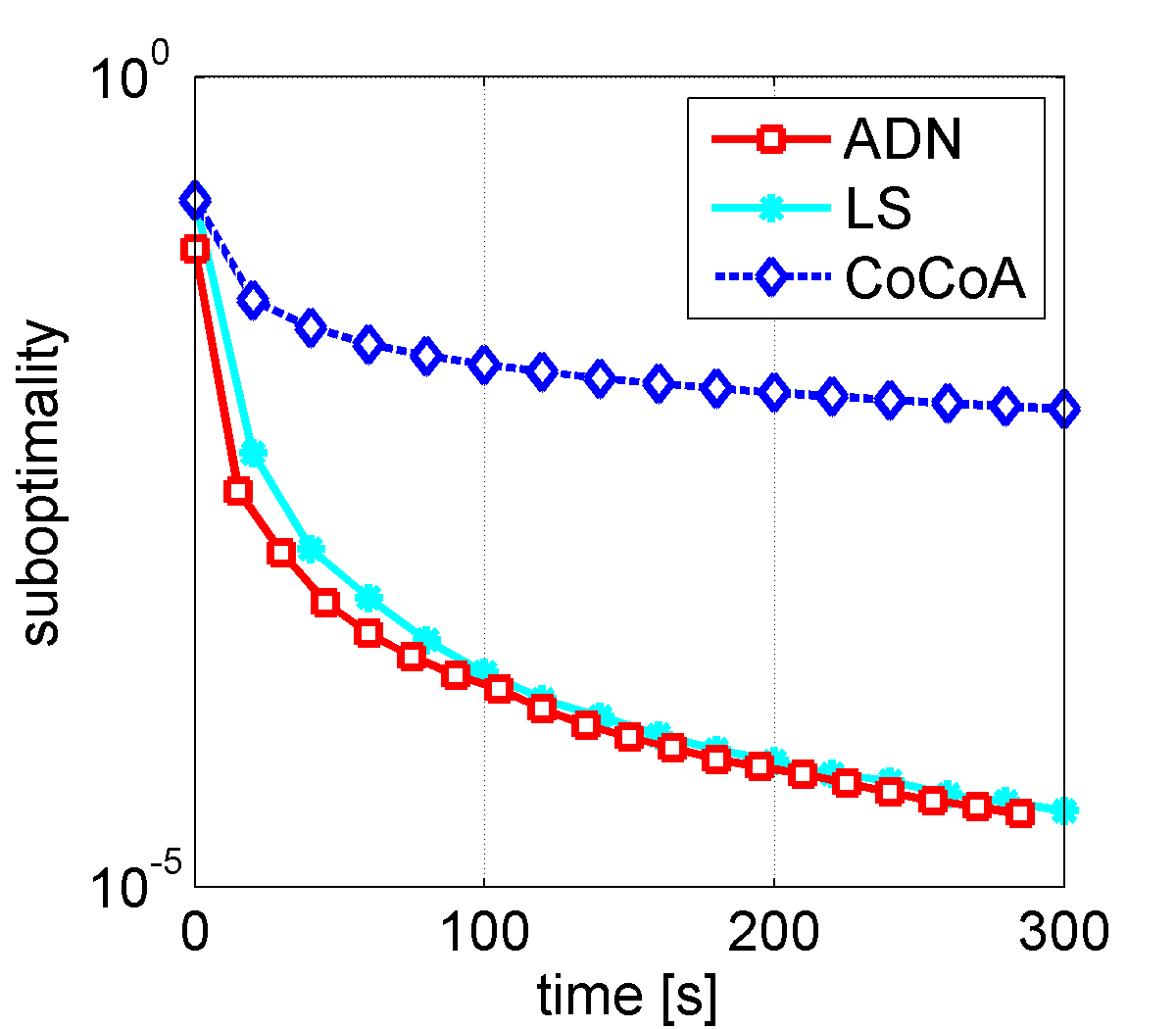}}
\subfigure[kdda]{\label{fig:L1kdda}\includegraphics[width=0.5\columnwidth]{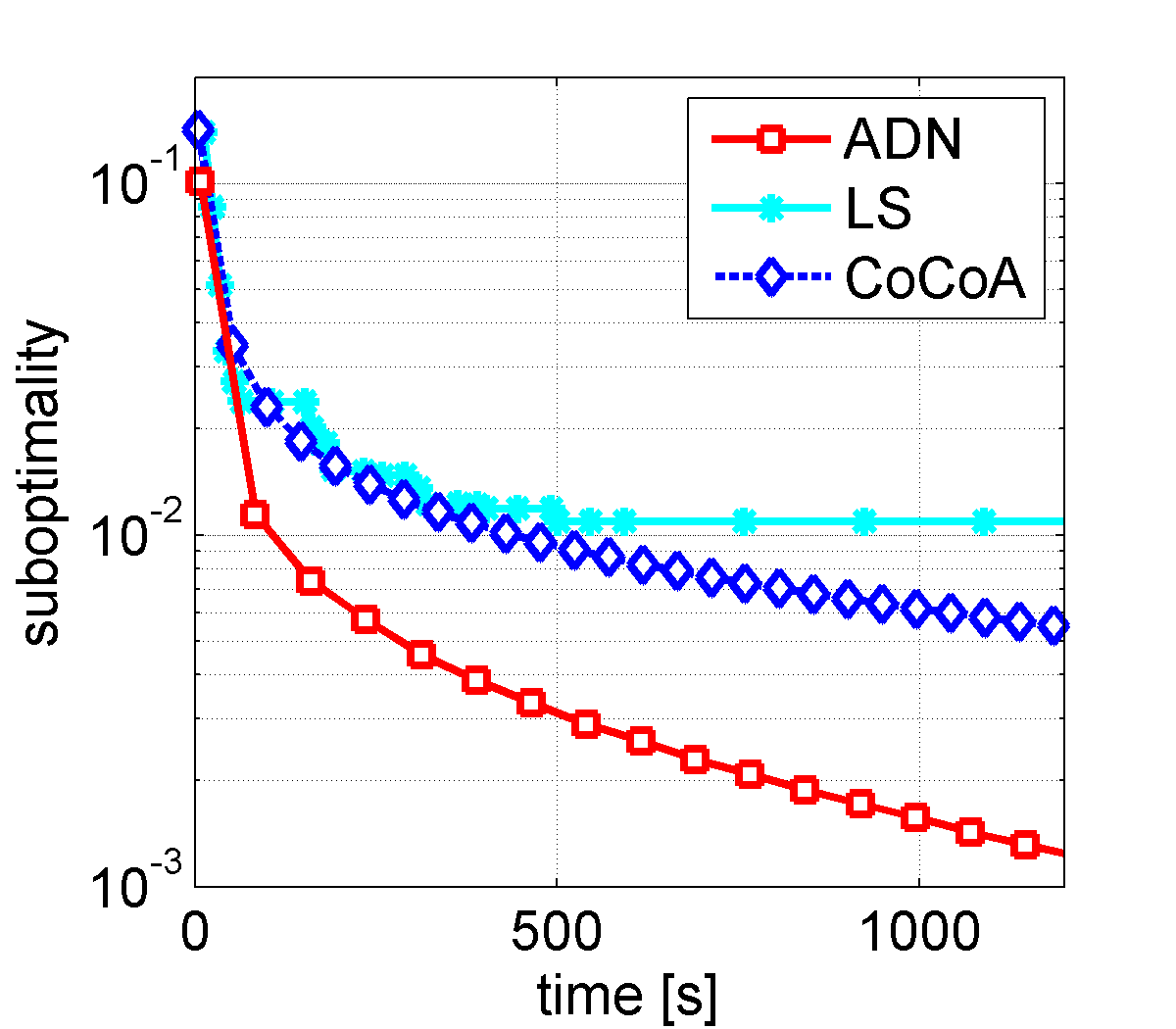}}
\subfigure[criteo]{\label{fig:L1criteo}\includegraphics[width=0.5\columnwidth]{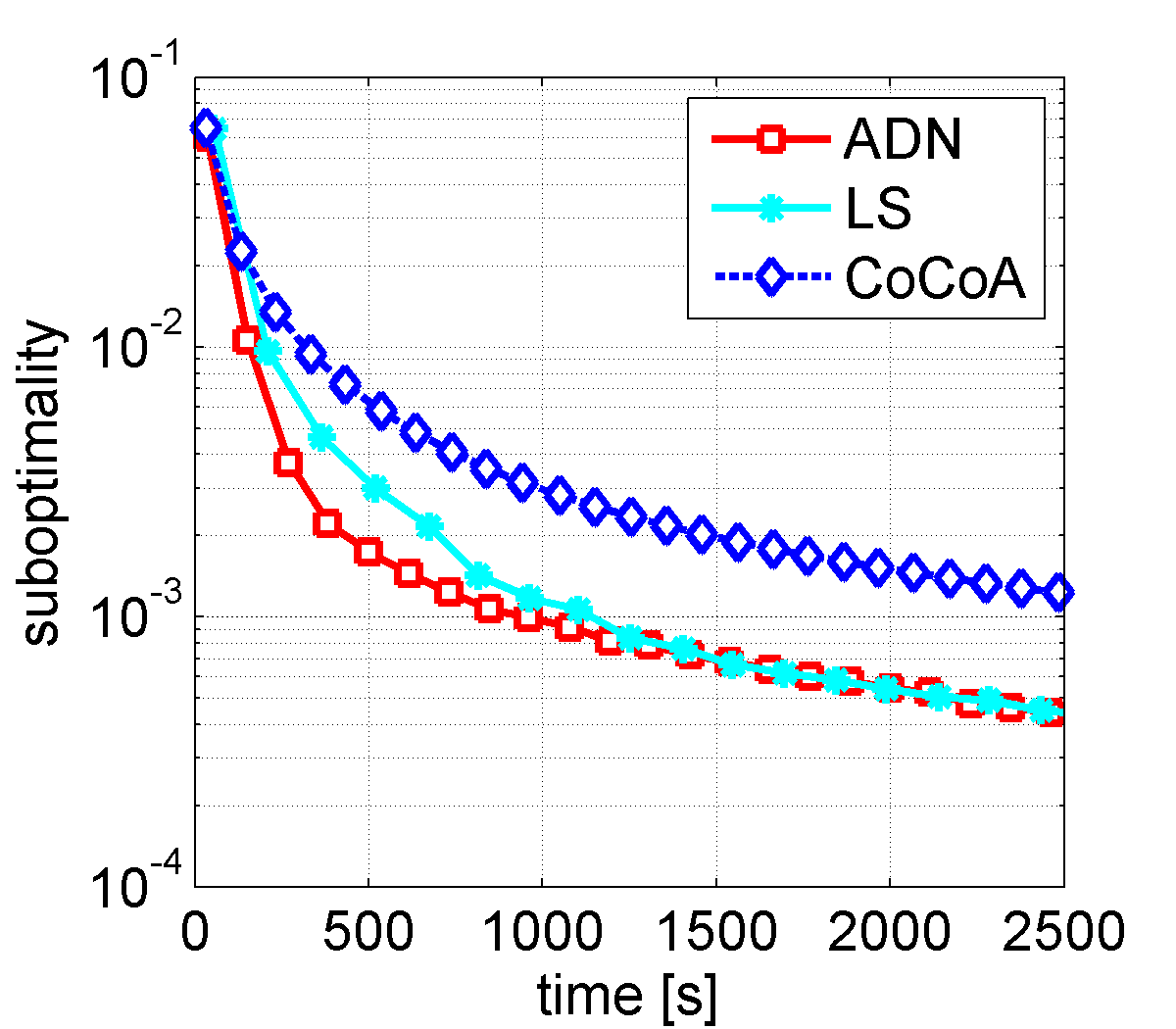}}
\vspace{-0.3cm}
\caption{Performance comparison of primal solver for $L_1$-regularized Logistic Regression.\vspace{-0.1cm}}
\centering
\label{fig:L1}
\end{figure*}

\vspace{-1mm}
\paragraph{Gain of Adaptive Strategy.}
In this section we investigate the benefits of using an adaptive $\sigma$ as opposed to a static one. We focus on a dual $L_2$-regularized logistic regression model where $f$ is a quadratic function and thus, its Hessian corresponds to a scaled identity matrix. This allows us to study the effect of adaptivity in isolation. It also allows us to compare to a reference model with $\sigma=K$ which comes with convergence guarantees, see~\cite{Smith:2016wp}. In Figure~\ref{fig:N} we compare the two approaches and observe that with an increasing number of workers, the gains provided by the adaptive approach increase. This comes from the fact that the more workers we have, the less accurate the block diagonal approximation 
in the auxiliary model is and thus it is increasingly difficult to establish a safe fixed value for $\sigma$ that covers any partitioning of the data in an ad hoc fashion.
Note that the adaptive strategy does not only improve over the safe fixed value of $\sigma$ as shown in Figure \ref{fig:N} but it also enables convergence for objectives to be guaranteed where no tight practical bound is known.

\subsection{Performance for Logistic Regression}
\label{subsec_exp_lr}
We now analyse the performance of ADN for training a Logistic Regression model on multiple large-scale datasets and compare it to different state-of-the-art methods. First, we will consider $L_2$ regularization, which results in a strongly-convex objective function. This enables the application of a broad range of existing methods. In the second part of this section we focus on $L_1$ regularization, where -- to the best of our knowledge -- the only existing baselines that come with convergence guarantees are CoCoA~\cite{Smith:2016wp} and slower mini-batch proximal SGD.
\vspace{-2mm}

\paragraph{Baselines.}
We compare our approach against \textit{GIANT} as a representative scheme for the class of approximate Newton methods. This approach was shown in \cite{wang2017giant} to achieve competitive performance to other similar algorithms such as DANE or DiSCO. The main difference between these methods and ours is that they build updates based on a local approximation of the full Hessian matrix, whereas we work with exact blocks of the full Hessian matrix. In order to establish a fair comparison, we re-implemented GIANT using MPI while following the open source implementation provided by the authors\footnote{\url{https://github.com/wangshusen/SparkGiant}}. We use conjugate gradient descent as a local solver and implemented the suggested backtracking line-search approach.

Our second baseline is the approach presented in~\cite{lee2017distributed} which is similar to ours as it builds on the same block diagonal approximation of the Hessian matrix. However, it uses a fixed model and then relies on a backtracking line search approach to guarantee convergence. We will refer to this scheme as \textit{LS} in our experiments.

The third baseline is \textit{CoCoA} which approximates the Hessian $\nabla^2 f(.)$ by a scaled identity matrix using the smoothness property of $f$. Their quadratic model performs well if $f$ is indeed a quadratic function such as the least squares loss or the dual of the $L_2$ regularizer. However, we will see that this is not a good model for the logistic loss function.

\begin{figure}[t!]
\centering
\subfigure{\label{fig:L2url}\includegraphics[width=0.49\columnwidth]{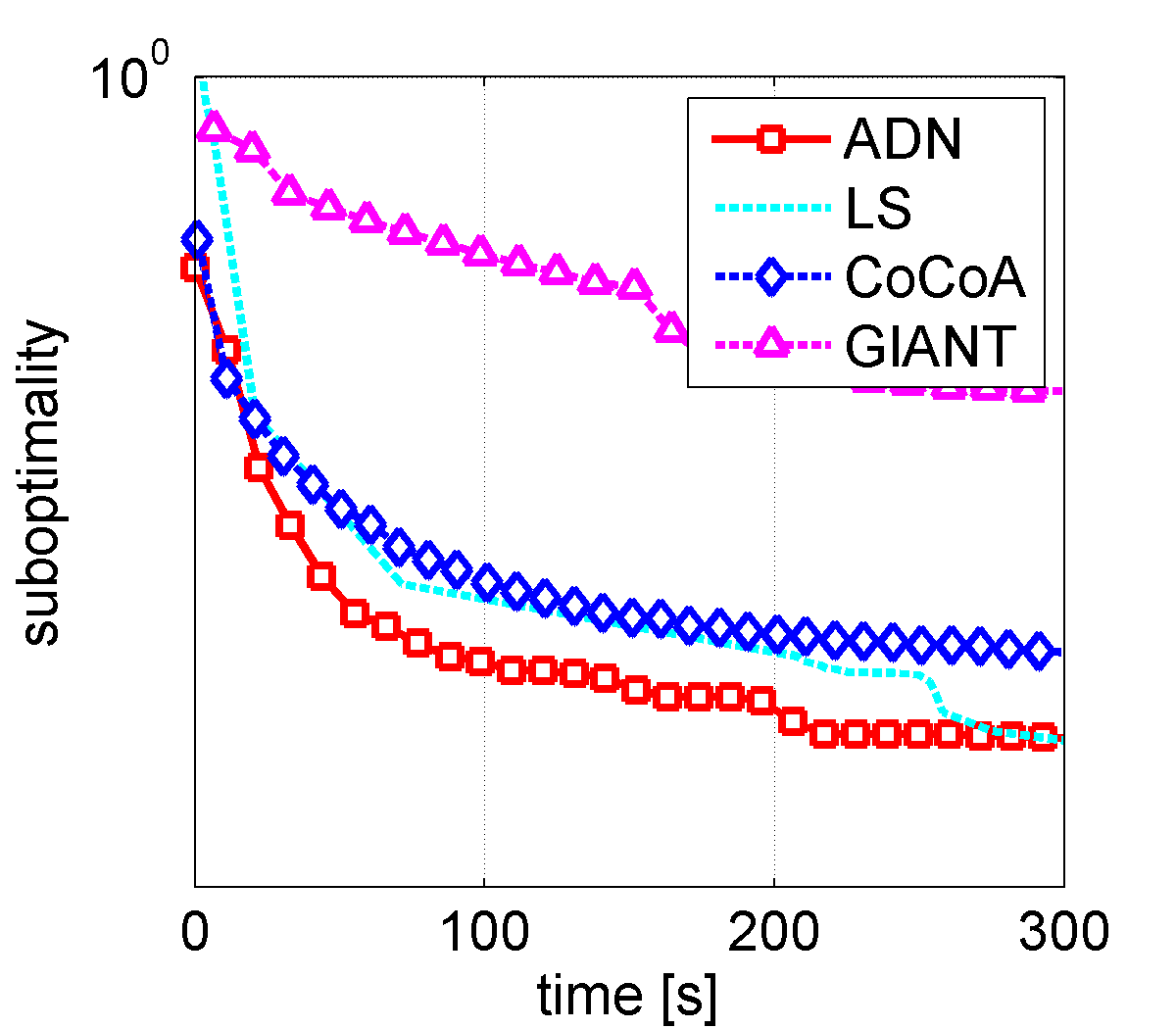}}
\subfigure{\label{fig:L2criteo}\includegraphics[width=0.49\columnwidth]{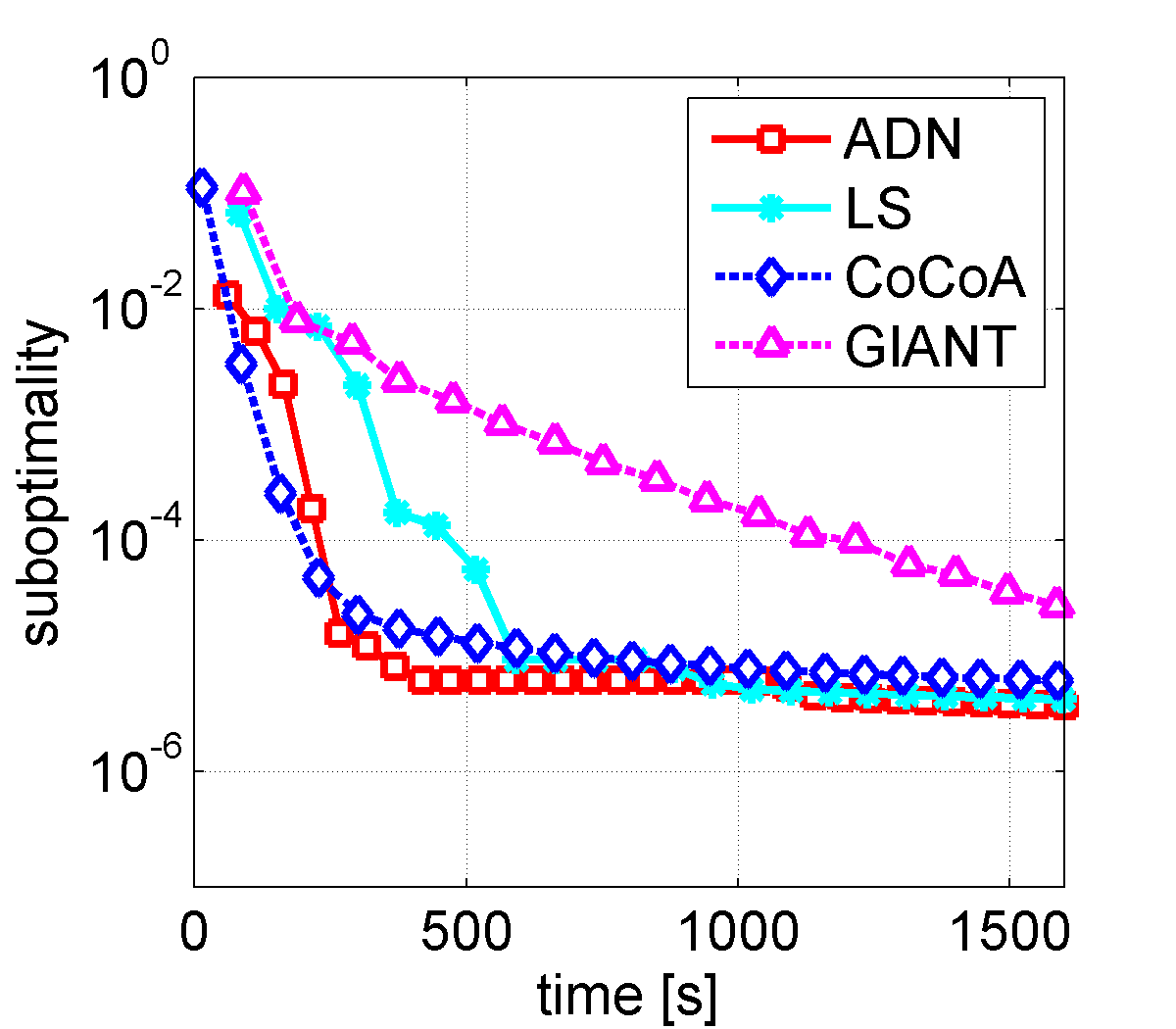}}\\[-2mm]
\subfigure{\label{fig:L2url}\includegraphics[width=0.49\columnwidth]{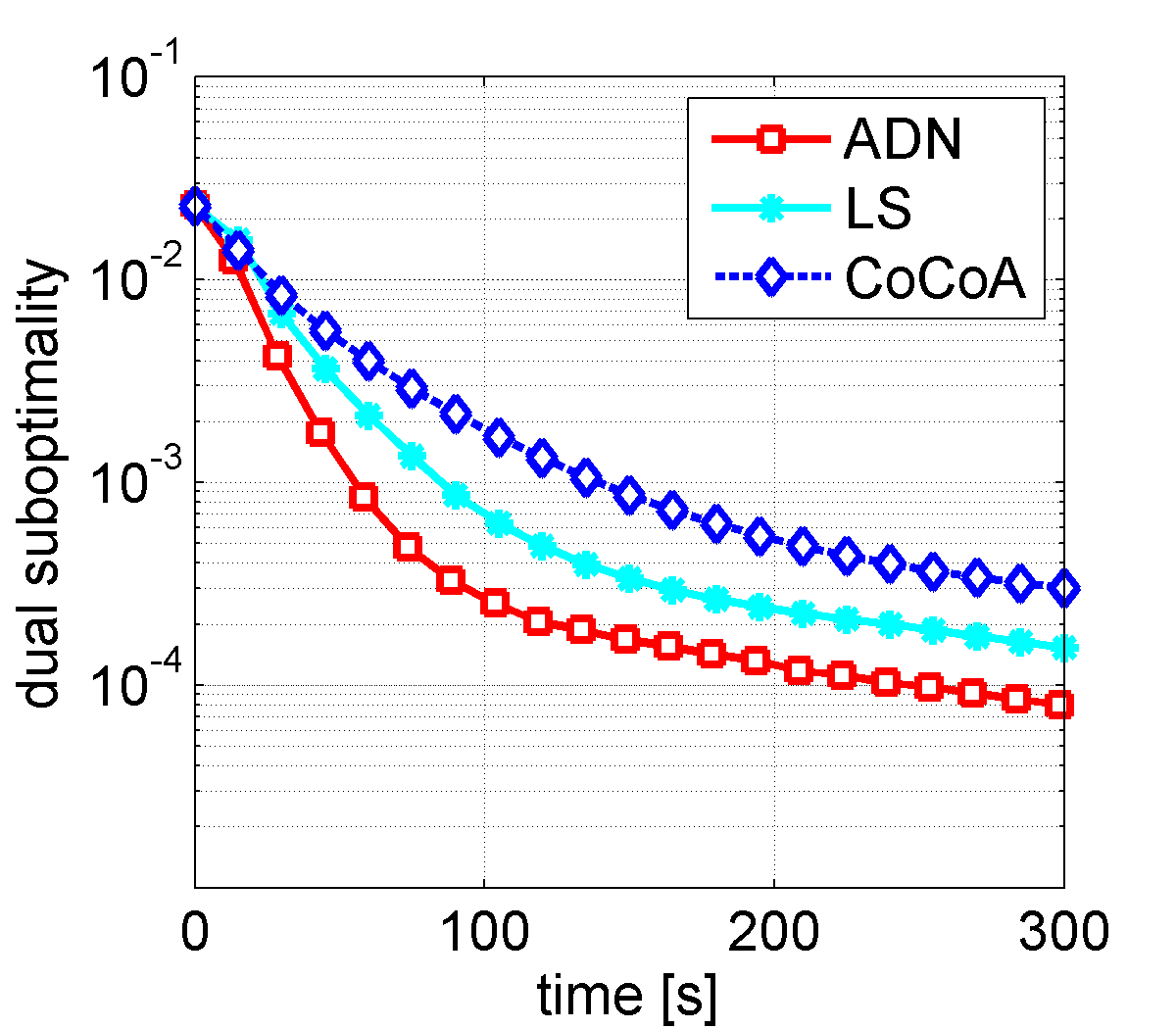}}
\subfigure{\label{fig:L2criteo}\includegraphics[width=0.49\columnwidth]{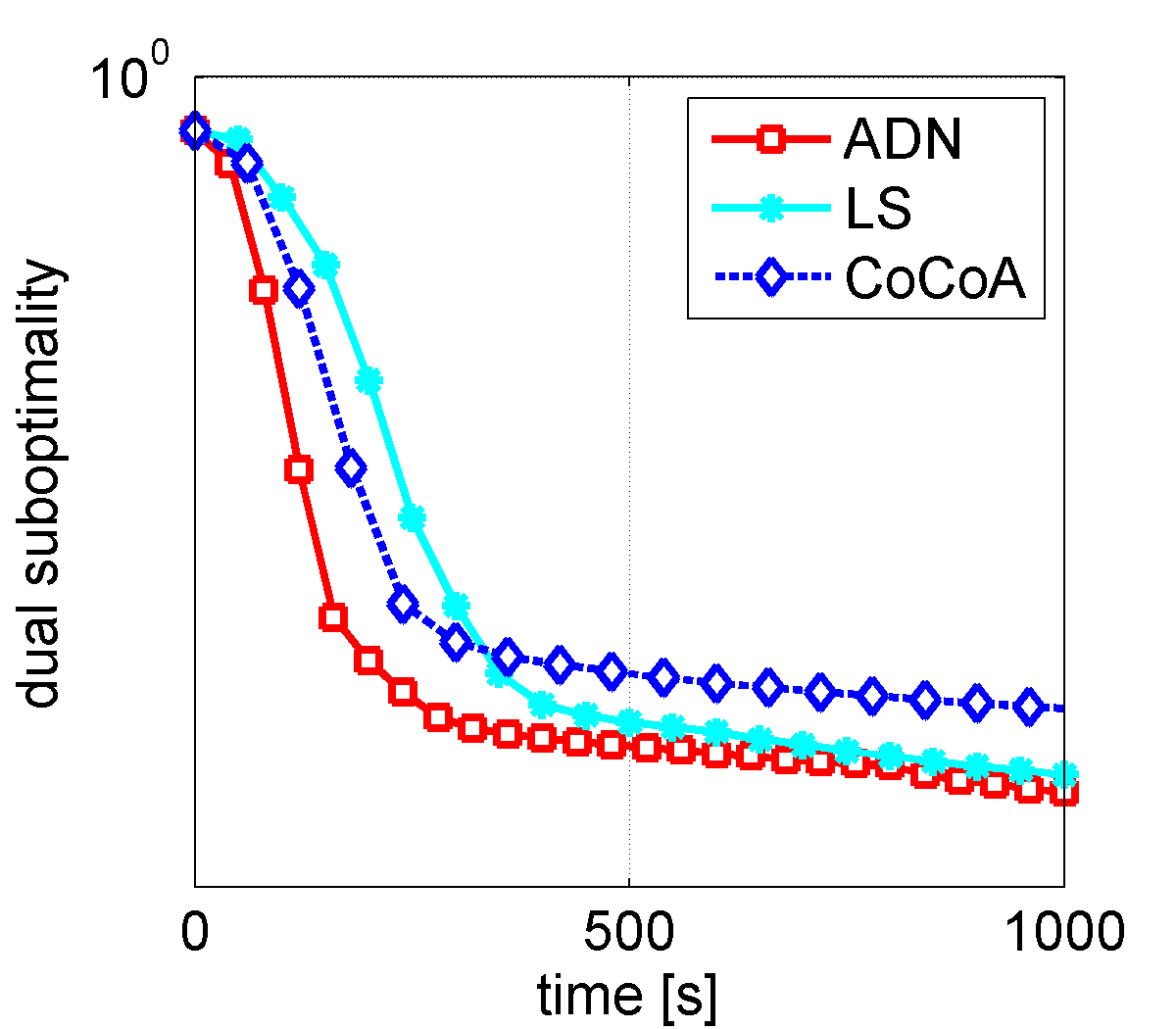}}
\vspace{-0.2cm}
\caption{Performance comparison for $L_2$-regularized logistic regression on url dataset (left) and criteo dataset (right) for solving the primal problem (top) and the dual problem (bottom).} % cdu: I could not figure out how to adjust the numbering of the subcaptions therefore I removed them since it was (c),(d) instead of (a)(b)
\label{fig:L2}
\vspace{-0.5cm}
\end{figure}
\vspace{-1mm}

\paragraph{$L_1$ Regularization.} We consider the $L_1$-regularized logistic regression problem on the datasets introduced in Table~\ref{tbl:datasets}. We compare CoCoA (applied to the L1 primal problem) and {LS} to  {ADN} in Figure~\ref{fig:L1}. In general, we see significant gains from {ADN} over CoCoA which can be attributed to CoCoA using a quadratic approximation to the logistic function which is not a good fit. The performance of {LS} is similar or slightly worse than our approach, depending on the dataset. However, as shown in Figure~\ref{fig:L1kdda}, it can be unstable since the line-search approach used in~\cite{lee2017distributed} does not come with any theoretical guarantees for functions that are not strongly-convex. 
\vspace{-1mm}

\paragraph{$L_2$ Regularization.}
For $L_2$-regularized logistic regression, CoCoA, ADN and LS use a dual solver. The results presented in Figure \ref{fig:L2} show that CoCoA is competitive in this case since it uses the same block diagonal approximation of the Hessian matrix and benefits from cheap iterations as no function evaluations are needed. However, we can see that using an adaptive strategy nevertheless pays off and we can achieve a gain over CoCoA. For very high accuracy solutions (<$10^{-6}$), a solver that uses the full Hessian should be preferred if possible.

% !TEX root = icml-hessian-cocoa.tex

\section{Conclusion}

We have presented a novel distributed second-order algorithm that optimizes an auxiliary model with a block-diagonal Hessian matrix. The separable structure of this model makes its optimization easily parallelizable. Each worker optimizes its own local model and sends a minimal amount of information to the master node. Our framework therefore avoids the computation and communication of an expensive Hessian matrix. In order to adjust for the approximation error of the model, we proposed using an adaptive scheme that resembles trust-region methods. This allows us to derive global guarantees of convergence for convex functions. Specializing our approach to strongly-convex functions recovers convergence results derived by existing distributed second-order methods. From the practical side, we have proposed a parameter-free version of our algorithm, discussed how to develop an efficient implementation and demonstrated significant speed-ups over state-of-the-art baselines on several large-scale datasets.

\bibliography{references}
\bibliographystyle{icml2018}

\onecolumn
\clearpage
% !TEX root = icml-hessian-cocoa.tex

\appendix
%%%%%%%%%%%%%%%%%%%%%%%%%%%%%%%%%%%%%%%%%%%%%%%%%%%%
%% APPENDIX
%%%%%%%%%%%%%%%%%%%%%%%%%%%%%%%%%%%%%%%%%%%%%%%%%%%%
\setlength{\belowdisplayskip}{5pt} \setlength{\belowdisplayshortskip}{3pt}
\setlength{\abovedisplayskip}{5pt} \setlength{\abovedisplayshortskip}{3pt}
\part*{Appendix}

\section{Analysis}
In order to prove convergence of Algorithm \ref{alg:adaptive_cocoa} we proceed as follows: 
\begin{enumerate}
\item For the auxilary model with block diagonal $\tilde H$ as described in Section~\ref{sec:block_model}, we lower bound the decrease in the model achieved in each step of Algorithm \ref{alg:adaptive_cocoa}. This yields the lower bound on $\model_\sigmat(\0;\alphav)-\model_\sigmat(\Dav;\alphav)$ provided in Lemma~\ref{lem:dec}. 
\item For iterations that are successful (i.e.,  they achieve $\rho_t\geq\xi$ and thus successfully decrease the objective function, see Definition \ref{def_successful_update}), the construction of  Algorithm \ref{alg:adaptive_cocoa} allows us to relate the model decrease to the function decrease $\bO(\alphav)-\bO(\alphav+\Dav)$ through the constant $\xi$. This lets us establish a convergence rate in terms of the number of successful iterations, which is shown in Lemma \ref{lem:cocoac}, \ref{lem:cocoasc} for non-strongly convex $g_i$ and strongly convex $g_i$, respectively.

\item Finally, in order to establish overall convergence of Algorithm \ref{alg:adaptive_cocoa} we need to bound the number of  unsuccessful iterations (i.e.,  iterations for which $\rho_t < \xi$ where no update is applied to the model parameters). This is accomplished by showing that the construction of the sequence $\{\sigma_t\}_{t\geq 0}$ is such that the number of successive unsuccessful iterations is limited and Algorithm \ref{alg:adaptive_cocoa} will therefore eventually yields a successful iteration which will allow us to decrease the objective function. In details, this is accomplished as follows:
\begin{itemize}
\item show that Algorithm \ref{alg:adaptive_cocoa} finds a successful step as soon as the penalty parameter $\sigmat$ exceeds some critical value, thereby  the sequence $\{\sigma_t\}_{t\geq 0}$ is guaranteed to stay within some bounded positive interval.
\item use the boundedness of $\sigmat$ to establish an upper bound on the maximum number of unsuccessful iterations  and hence the total number of steps to reach a target suboptimality.
\item lastly in Section \ref{app_upperbound} we establish the boundedness of the sequence $\{\sigma_t\}_{t\geq 0}$ for two general situations. 
\end{itemize}
\end{enumerate}
%\begin{remark}
%In the following we will sometimes state two different results 1) for the original CoCoA algorithm, where in the local subproblem the Hessian  is approximated by $\frac 1 \tau I$ and 2) the new \name algorithm where a block diagonal Hessian approximation is used. 
%\end{remark}
\subsection{Model Decrease}

%\begin{lemma}[Lemma~\ref{lem:dec} restated]
%\label{lem:dec_r}
%The model decrease of Algorithm \ref{alg:adaptive_cocoa} at any iteration $t\geq0$  can be bounded  as:
%\[\model_\sigmat(\0; \alphat) - \model_\sigmat(\Dav; \alphat) \geq (1-\eta) \left[\kappa\Gap(\alphav^{(t)})  - \frac {\kappa^2} 2 R^{(t)}\right]\]
%where $\Gap(\alphav^{(t)})$ denotes the duality gap, $\kappa\in[0,1]$ and
%\[R^{(t)}:= \sigmat (\uv^{(t)}-\alphav^{(t)})^\top  \tilde H(\alphav^{(t)}) (\uv^{(t)}-\alphav^{(t)})- \frac {\mu (1-\kappa)}{\kappa} \|\alphav^{(t)}-\uv^{(t)}\|_2^2\]
%with $u_i^{(t)}\in \partial g^*_i(\xv_i^\top\nabla f(A \alphav^{(t)}))$.
% %MJ: tiny nitpick: you used nabla as a row vector in the main paper -> cdu: added missing transposed in main paper
%\end{lemma}

\lemmathree*

\begin{proof}
Given that the updates $\Dak$ optimize the respective local models (defined in \eqref{eq:theta_approx}) $\eta$-approximately, we can relate the model decrease provided by $\Dav = \sum_k \Dak$ to the optimal model decrease as follows:
\begin{eqnarray*}\model_\sigmat( \0;\alphat) - \model_\sigmat(\Dav;\alphat)&=&\model_\sigmat( \0;\alphat) - \sum_k \mstk(\Dak;\alphat)
\\&\geq&\model_\sigmat(\0;\alphat) -  \sum_k[(1-\eta) \mstk({\Dav}^\star_{[k]};\alphat)+\eta \mstk(\0;\alphat)]\\
&=&\model_\sigmat(\0;\alphat) -  [(1-\eta) \mst(\Dav^\star;\alphat)+\eta \mst(\0;\alphat)]\\[0.2cm]
&=& (1-\eta)[\model_\sigmat(\0;\alphat)- \model_{\sigmat}(\Dav^\star;\alphat)],
\end{eqnarray*}
where ${\Dav}^\star_{[k]} =\argmin_{\Dak}\mstk({\Dav}^\star_{[k]};\alphat) $ and  $\Dav^\star =\sum_k{\Dav}^\star_{[k]}$.

From here we proceed by bounding the model decrease for the optimal update, i.e., $\Delta_\model:=\model_\sigmat(\0;\alphat) - \model_\sigmat(\Dav^\star;\alphat)$ which, using \eqref{eq:modelCoCoA}, can be written as
\begin{eqnarray*}
\Delta_\model &=& - \nabla f(A\alphav)^\top A\Dav^\star -  \frac \sigma 2\sum_k {\Delta\alphav^\star_{[k]}}^\top \tilde H(\alphav)\Delta\alphav^\star_{[k]}+\sum_i g_i(\alpha_i) - \sum_i g_i((\alphav+\Dav^\star)_i).
\end{eqnarray*}
 where we omit the superscript $t$ for reasons of readability.
Since $\Dav^\star$ is the minimizer of $\model_\sigmat(\Dav; \alphav)$ the following inequality must hold for an arbitrary update direction $\tilde \sv$:
\begin{eqnarray}
	\Delta_\model &\geq& - \nabla f(A\alphav)^\top A\tilde \sv - \frac \sigma 2\sum_k {\tilde \sv_{[k]}}^\top\tilde H(\alphav)\tilde \sv_{[k]}+ \sum_i g_i (\alpha_i) - g_i ((\alphav+\tilde \sv)_i). 
	 \label{eq:arbitrarystep}
\end{eqnarray}
Hence, let us consider the specific update $\tilde \sv = \kappa(\uv-\alphav)$ for some $\kappa\in(0,1]$ and $\uv\in \R^n$. We find
\begin{eqnarray}
\Delta_\model &\geq&- \nabla f(A\alphav)^\top A\kappa(\uv-\alphav) -\frac {\kappa^2 \sigma} 2\sum_k (\uv-\alphav)^\top_{[k]}\tilde H(\alphav)(\uv-\alphav)_{[k]}\\&& + \sum_i g_i(\alpha_i) - \sum_i g_i((\alphav+\kappa(\uv-\alphav))_i).
\label{eq:specificstep}
\end{eqnarray}
Furthermore, using $\mu$-strong convexity of $g_i$ with $\mu\geq 0$, (i.e., the bound also holds for $\mu = 0$ in which case $g_i$ is convex), we get
\[g_i((1-\kappa)\alpha_i+\kappa u_i)  \leq (1-\kappa) g_i(\alpha_i)+ \kappa g_i(u_i) - \frac \mu 2 \kappa (1-\kappa)(\alpha_i-u_i)^2,\]
which combined with~\eqref{eq:specificstep} yields
\begin{eqnarray}
\Delta_\model &\geq&\kappa \sum_i g_i(\alpha_i) - \kappa \sum_i g_i(u_i) + \sum_i \frac \mu 2 \kappa (1-\kappa)(\alpha_i-u_i)^2\notag\\&& - \nabla f(A\alphav)^\top A\kappa(\uv-\alphav)-\frac {\kappa^2 \sigma} 2\sum_k (\uv-\alphav)^\top_{[k]}\tilde H(\alphav)(\uv-\alphav)_{[k]}\notag\\
&=&\kappa  \underbrace { \left[\sum_i g_i(\alpha_i) - \sum_i g_i(u_i) - \nabla f(A\alphav)^\top A (\uv-\alphav) \right] }_{\text{(gap)}}   + \frac \mu 2 \kappa (1-\kappa)\|\alphav-\uv\|_2^2 \notag \\&&-\frac {\kappa^2 \sigma} 2\sum_k (\uv-\alphav)^\top_{[k]}\tilde H(\alphav)(\uv-\alphav)_{[k]}.\label{eq:bound1}
\end{eqnarray}
To further simplify this bound we choose $\uv$ such that $u_i\in\partial g_i^*(-\xv_i^\top \wv (\alphav))$ where $\xv_i$ denote the columns of the data matrix $A$, $\wv(\alphav) := \nabla f(A\alphav)$ and $g_i^*$ denotes the convex conjugate of the function $g_i$.  
For this particular choice the  term ``$\text{(gap)}$'' in \eqref{eq:bound1} corresponds to the duality gap of the objective at the iterate $\alphav$. To see this, note that the duality gap (see, e.g., \cite{duenner16}) for \eqref{eq:A} can be written as
\begin{eqnarray}
\Gap(\alphav) &=& \sum_i g_i^*(-\xv_i^\top \wv(\alphav)) + g_i(\alpha_i) + \alpha_i \xv_i^\top \wv(\alphav)\notag\\
&\overset{(a)}{=}& \sum_i u_i(-\xv_i^\top\wv(\alphav)) - g_i(u_i) + g_i(\alpha_i) + \alpha_i \xv_i^\top \wv(\alphav)\notag\\
&=& \sum_i g_i(\alpha_i) - g_i(u_i) - (u_i-\alpha_i) \xv_i^\top \wv(\alphav),  \label{eq:gap}
\end{eqnarray}
where equality $(a)$ holds for any $u_i\in\partial g_i^*(-\xv_i^\top \wv)$ since for such an optimal $u_i$ the  Fenchel-Young inequality holds with equality, i.e.,
\begin{equation}g_i(u_i) = u_i(-\xv_i^\top\wv) - g_i^*(-\xv_i^\top \wv)\label{eq:fenchel}.
\end{equation}
Now combining \eqref{eq:bound1} with \eqref{eq:gap} and \eqref{eq:fenchel} we find
\begin{eqnarray}
\Delta_\model &\geq&\kappa \Gap(\alphav) + \frac \mu 2 \kappa (1-\kappa)\|\alphav-\uv\|_2^2-\frac {\kappa^2 \sigma} 2\sum_k (\uv-\alphav)^\top_{[k]}\tilde H(\alphav)(\uv-\alphav)_{[k]}\label{eq:boundLem}
\end{eqnarray}
and Lemma \ref{lem:dec} follows.
\end{proof}

%%%%%%%%%%%%%%%%%%%%%%%%%%%%%%%%%%
%%    FUNCTION DECREASE
%%%%%%%%%%%%%%%%%%%%%%%%%%%%%%%%%%

\subsection{Function Decrease}
In order to relate the model decrease to the function decrease we use the fact that every update $\Dav$ applied to the parameter vector in Algorithm \ref{alg:adaptive_cocoa} is successful in the following sense.

\begin{definition}[successful update]
	\label{def_successful_update}
	 The update $(\Dav, \sigmat)$ is called successful if the following inequality is satisfied:
\begin{equation}
\xi\leq\rho_t :=\frac {\bO(\alphat) - \bO(\alphat+\Dav)}{\bO(\alphat)  - \model_\sigmat(\Dav;\alphat)}
\label{eq:zeta}
\end{equation}
otherwise it is called unsuccessful.
\label{def:goodUpdate}
\end{definition}
%\begin{lemma}[Lemma~\ref{lem:f_dec} restated]
%\label{lem:f_dec_r}
%The function decrease of Algorithm  \ref{alg:adaptive_cocoa} for any successful update  $(\Dav,\sigmat)$ can be bounded  as:
%\[\bO(\alphat) - \bO(\alphat+\Dav) \geq \xi (1-\eta) \left[\kappa\Gap(\alphav^{(t)})  - \frac {\kappa^2} 2 R^{(t)}\right]\]
%where $\kappa\in[0,1]$ and $R^{(t)}$ is defined as in Lemma \ref{lem:dec}.
%\end{lemma}

\lemmafour* 

\begin{proof}
Starting from~\eqref{eq:zeta}, and observing that $\bO(\alphat) = \model_\sigmat(\0; \alphat)$ we have
\begin{equation}
\bO(\alphat) - \bO(\alphat+\Dav) \geq \xi (\model_\sigmat(\0; \alphat) - \model_\sigmat(\Dav; \alphat)).
\end{equation}
Combining this inequality with Lemma~\ref{lem:dec} concludes the proof.
\end{proof}

%%%%%%%%%%%%%%%%%%%%%%%%%%%%%%%%%%
%%    CONVERGENCE
%%%%%%%%%%%%%%%%%%%%%%%%%%%%%%%%%%

\subsection{Rate of Convergence}
\label{sec:convergence}
Let $S$ denote the set of successful iterations as
\[S:=\{t\geq 0 : \text{ iteration $t$ is successful  in the sense of Definition \ref{def_successful_update}}\}\]
Further, let us define two disjoint index sets $\mathcal{U}_T$ and $\mathcal{S}_T$, which represent the un- and successful steps that have occurred up to some iteration $T>0$;
\[S_T:=\{t\leq T : t\in S\} \]
\[ U_T:=\{t\leq T: t \notin S\}.\]

Now, we will use Lemma~\ref{lem:f_dec} to establish convergence of Algorithm~\ref{alg:adaptive_cocoa} as a function of the number of successful iterations. Therefore, we will start with convex functions $g_i$ where we show sublinear convergence and then show that for strongly convex functions $g_i$, this result can be improved to obtain a linear rate of convergence.

\subsubsection{Non-strongly Convex  $g_i$.}

\begin{lemma}(non-strongly convex $g_i$) Let $f$ be $\tfrac 1 \tau$-smooth and  $g_i$ be convex with $L$-bounded support.
Assume the sequence $\{\sigmat\}_{t\geq 0}$ is bounded above by $\sigma_\text{sup}$.
 Then,  we can bound the suboptimality of Algorithm \ref{alg:adaptive_cocoa} as
\begin{eqnarray*}
\bO(\alphav^{(T)})-\bO(\alphav^\star) \leq \frac{2 (C_5\sigma_{\text{sup}}+ \varepsilon_0) }{\xi (1-\eta)}\frac 1 {|S_T|} 
\end{eqnarray*}
 where $|S_T|$ counts the number of successful updates up to iteration $T$, $\varepsilon_0:=\bO(\alphav_0)-\bO(\alphav^\star)$ and $C_5=\tfrac 4 \tau R^2 L^2$  with $\|A_{:,i}\|\leq R \;\;\forall i$.
 \label{lem:cocoac}
\end{lemma}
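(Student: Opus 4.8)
\textbf{Proof proposal for Lemma~\ref{lem:cocoac}.}

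The plan is to combine the per-step function decrease from Lemma~\ref{lem:f_dec} with a telescoping argument over the successful iterations, together with the standard duality-gap lower bound on suboptimality. First I would specialize the right-hand side of Lemma~\ref{lem:f_dec} to the present setting. Since $g_i$ has $L$-bounded support, for the choice $\uv$ with $u_i \in \partial g_i^*(-\xv_i^\top \wv(\alphav))$ we have $|u_i| \leq L$, and hence $\|\alphav^{(t)} - \uv^{(t)}\|_2$ is controlled (using also $|\alpha_i^{(t)}| \leq L$); more importantly, because $\mu = 0$ here, the troublesome $\mu(1-\kappa)/\kappa$ term in $R^{(t)}$ vanishes, leaving $R^{(t)} = \sigma_t (\uv^{(t)}-\alphav^{(t)})^\top \tilde H(\alphav^{(t)}) (\uv^{(t)}-\alphav^{(t)})$. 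I would then bound $(\uv^{(t)}-\alphav^{(t)})^\top \tilde H (\uv^{(t)}-\alphav^{(t)})$ using $\tfrac1\tau$-smoothness of $f$ (which controls the Hessian blocks), $\|\xv_i\| = \|A_{:,i}\| \leq R$, and the support bound, to get $R^{(t)} \leq \sigma_{\sup} \cdot \tfrac{C_5}{\kappa^2 \cdot (\text{const})} \cdot (\text{something})$ — concretely, arriving at a bound of the form $\tfrac{\kappa^2}{2} R^{(t)} \leq \tfrac{\kappa^2}{2}\sigma_{\sup} \tfrac{4}{\tau}R^2L^2 = \tfrac{\kappa^2}{2}\sigma_{\sup} C_5$. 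Thus for every successful $t$,
\[
\bO(\alphat) - \bO(\alphat + \Dav) \;\geq\; \xi(1-\eta)\Big[\kappa\, \Gap(\alphav^{(t)}) - \tfrac{\kappa^2}{2}\sigma_{\sup} C_5\Big].
\]

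Next I would optimize over $\kappa \in (0,1]$. Writing $\varepsilon_t := \bO(\alphat) - \bO(\alphav^\star)$ and using weak duality $\Gap(\alphav^{(t)}) \geq \varepsilon_t$, the bracket is maximized (over the relaxation where $\kappa$ may range in $(0,1]$) at $\kappa^\star = \min\{1, \Gap(\alphav^{(t)})/(\sigma_{\sup}C_5)\}$; choosing $\kappa = \min\{1, \varepsilon_t/(\sigma_{\sup}C_5)\}$ gives, after the usual case split, a decrease of the form
\[
\varepsilon_t - \varepsilon_{t+1} \;\geq\; \frac{\xi(1-\eta)}{2}\,\frac{\varepsilon_t^2}{\sigma_{\sup}C_5 + \varepsilon_t}
\]
(since unsuccessful iterations leave $\alphav$, hence $\varepsilon_t$, unchanged, and for successful ones $\alphav^{(t+1)} = \alphav^{(t)} + \Dav$). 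Here one uses $\varepsilon_t \leq \varepsilon_0$ to fold the two regimes of $\kappa^\star$ into a single rational bound, at the cost of the $\varepsilon_0$ appearing additively. Then I would invoke the standard recursion lemma: if $\varepsilon_{t+1} \leq \varepsilon_t - c\,\varepsilon_t^2/(D + \varepsilon_t)$ with $c = \xi(1-\eta)/2$ and $D = \sigma_{\sup}C_5$, only over the $|S_T|$ successful steps does $\varepsilon$ actually strictly decrease, and an inductive argument (dividing through by $\varepsilon_t\varepsilon_{t+1}$ and telescoping $1/\varepsilon_{t+1} - 1/\varepsilon_t \geq c/(D+\varepsilon_0)$ after bounding $\varepsilon_t \leq \varepsilon_0$) yields
\[
\varepsilon_T \;\leq\; \frac{D + \varepsilon_0}{c\,|S_T|} \;=\; \frac{2(\sigma_{\sup}C_5 + \varepsilon_0)}{\xi(1-\eta)\,|S_T|},
\]
which is exactly the claimed bound with $C_5 = \tfrac{4}{\tau}R^2L^2$.

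The main obstacle I anticipate is Step~1: cleanly bounding the quadratic form $(\uv^{(t)}-\alphav^{(t)})^\top \tilde H(\alphav^{(t)}) (\uv^{(t)}-\alphav^{(t)})$ by $C_5 = \tfrac4\tau R^2 L^2$. One needs to exploit that $\tilde H$ is a block-diagonal piece of $\nabla^2_\alphav f(A\alphav) = A^\top \nabla^2 f(A\alphav) A$, that $\tfrac1\tau$-smoothness of $f$ gives $\nabla^2 f \preceq \tfrac1\tau I$, and that both $\alphav^{(t)}$ (by the $L$-bound assumption) and $\uv^{(t)}$ (via $u_i \in \partial g_i^*(\cdot)$ and the $L$-bounded support of $g_i$, so $g_i^*$ is $L$-Lipschitz and its subgradients lie in $[-L,L]$) have coordinates bounded by $L$, so $\|\uv^{(t)} - \alphav^{(t)}\|_\infty \leq 2L$; combined with $\|A_{:,i}\| \leq R$ and a careful accounting of how the block structure and the column norms enter, this produces the factor $4R^2L^2/\tau$. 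The subsequent telescoping and $\kappa$-optimization are routine (they mirror the analysis in \cite{sdca,Smith:2016wp}), but getting the constant exactly right, and correctly handling that only successful steps contribute while the $\sigma_t$ vary, requires care.
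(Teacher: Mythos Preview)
Your proposal is correct and follows essentially the same route as the paper: specialize Lemma~\ref{lem:f_dec} to $\mu=0$, bound the quadratic form $(\uv^{(t)}-\alphav^{(t)})^\top \tilde H(\alphav^{(t)})(\uv^{(t)}-\alphav^{(t)}) \leq \tfrac{4}{\tau}R^2L^2$ via $\tfrac1\tau$-smoothness and the $L$-bounded support (using exactly the Lipschitz/bounded-support duality argument you sketch for $|u_i|,|\alpha_i|\le L$), replace $\sigma_t$ by $\sigma_{\sup}$ and $\Gap(\alphat)$ by $\varepsilon^{(t)}$, optimize $\kappa=\min\{1,\varepsilon^{(t)}/b\}$ over the two regimes, and then telescope $1/\varepsilon^{(t+1)}-1/\varepsilon^{(t)} \ge \tfrac{a}{2(b+\varepsilon_0)}$ over the $|S_T|$ successful steps. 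The paper writes the two-case split for $\kappa$ out explicitly and first passes from $\varepsilon^{(t)}\varepsilon^{(t)}$ to $\varepsilon^{(t)}\varepsilon^{(t+1)}$ using monotonicity before dividing, but the substance and the constants are identical.
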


\begin{proof}
For non-strongly convex $g_i$ (i.e., $\mu=0$) we know from Lemma \ref{lem:f_dec} that for any  for successful update $(\Dav,\sigmat)$ the function decrease at iteration $t$ can be lower bounded as 
\begin{equation}
\bO(\alphat) - \bO(\alphat+\Dav) \geq \xi (1-\eta) \left[\kappa\Gap(\alphav^{(t)})  - \frac {\kappa^2  \sigmat} 2   (\uv^{(t)}-\alphav^{(t)})^\top  \tilde H(\alphav^{(t)}) (\uv^{(t)}-\alphav^{(t)})\right].
\label{eq:f_dec_L}
\end{equation}
For our block diagonal hessian approximation  
\[\tilde H(\alphav)=\diag(A_{:\cI_1}^\top \nabla^2f(A\alphav)A_{:\cI_1},\dots A_{:\cI_K}^\top \nabla^2f(A\alphav)A_{:\cI_K})\] 
it holds that
\begin{eqnarray}
 (\uv^{(t)}-\alphav^{(t)})^\top  \tilde H(\alphav^{(t)}) (\uv^{(t)}-\alphav^{(t)})\leq\frac 1 \tau \|A (\uv^{(t)}-\alphav^{(t)})\|^2 \leq\frac 4 \tau R^2  L^2\label{eq:boundR}
\end{eqnarray}
with $\|A_{:,i}\|\leq R \;\;\forall i$. Inequality \eqref{eq:boundR} relies on the assumption that $g_i$ has $L$-bounded support: a) by duality between Lipschitzness and Bounded-Support \cite{duenner16} of the univariate functions $g_i$ we have $|\alpha_i|\leq L$ since $\alpha_i$ is in the support of $g_i$ and b) by the equivalence between Lipschitzness and bounded subgradient we also have $|u_i|\leq L$ since $u_i\in\partial g_i^*(-\xv_i^\top \wv (\alphav))$. Together this yields $|u_i-\alpha_i|\leq 4 L^2$ and the bound \eqref{eq:boundR} follows.

In the following we assume that the sequence $\{\sigma_t\}_{t\geq0}$ is bounded by $\sigmasup$. We write $\varepsilon^{(t)}:=\bO(\alphav^{(t)})-\bO(\alphav^\star)$ for the suboptimality at step $t$ and use that the duality gap upper-bounds the suboptimality, i.e, $\Gap(\alphat)\geq \varepsilon^{(t)}$. Combining this with  \eqref{eq:f_dec_L} yields
\begin{eqnarray}
\varepsilon^{(t+1)}\leq(1-\kappa \xi(1-\eta))\varepsilon^{(t)} + \xi(1-\eta)\frac {2 \kappa^2 }  \tau R^2  L^2 \sigma_\text{sup}.
\label{eq:rec}
\end{eqnarray}

Let $a,b$ being positive constants defined as $a =\xi (1-\eta) $ and $b=\tfrac 4 {\tau} R^2  L^2 \sigma_\text{sup}$, then the above inequality can be written as
\begin{eqnarray}
\varepsilon^{(t+1)}\leq(1-\kappa a)\varepsilon^{(t)} +\frac { \kappa^2 } {2} a b.
\label{eq:rec2}
\end{eqnarray}
and holds for any $\kappa\in(0,1]$. Now let us choose $\kappa$ to minimize the RHS of \eqref{eq:rec2} which yields
$\kappa = \frac{1}{b}\varepsilon^{(t)}$ and to have $\kappa\in(0,1]$ we further constrain $\kappa$ to
\[\kappa = \min\left\{1 ,\frac{1}{b}\varepsilon^{(t)}\right\}.\]
Now let us consider the two cases separately:
\begin{enumerate}
\item \underline{$\varepsilon^{(t)}\geq  b $.} In this case we choose $\kappa =1$. Thus, from \eqref{eq:rec2} we get 
\begin{eqnarray}
\varepsilon^{(t+1)}\leq(1- a)\varepsilon^{(t)} +\frac { a b } {2} \leq(1- a)\varepsilon^{(t)} +\frac { a  } {2}\varepsilon^{(t)} = \left(1- \frac a 2\right)\varepsilon^{(t)} 
\label{eq:b1}
\end{eqnarray}
\item \underline{$\varepsilon^{(t)}<b$.} In this case we choose $\kappa = \frac 1 b \varepsilon^{(t)}$ and hence from \eqref{eq:rec2} we get 
\begin{eqnarray}
\varepsilon^{(t+1)}\leq \varepsilon^{(t)}- \frac a {2 b} \varepsilon^{(t)} \varepsilon^{(t)}.
\label{eq:b2}
\end{eqnarray}
\end{enumerate}
Note that the inequalities \eqref{eq:b1} and\eqref{eq:b2} together with non-negativity of $\varepsilon^{(t)}$ and $a\in (0, 1)$ imply that $\varepsilon^{(t+1)}\leq\varepsilon^{(t)}$ and thus $\{\varepsilon^{(t)}\}$ is a decreasing sequence.
Combining the two inequalities \eqref{eq:b1} and \eqref{eq:b2}  we get the following bound which holds for both cases and thus for every $t$:
\begin{eqnarray}
\varepsilon^{(t+1)}\leq \varepsilon^{(t)}-\frac a 2 \min\left\{ \frac 1 { b} \varepsilon^{(t)}, 1 \right\}  \varepsilon^{(t)}\leq \varepsilon^{(t)}-\frac a 2 \min\left\{ \frac 1 { b +\varepsilon_0} \varepsilon^{(t)}, 1 \right\} \varepsilon^{(t)} \leq  \varepsilon^{(t)}-\frac a 2 \frac 1 { b +\varepsilon_0} \varepsilon^{(t)} \varepsilon^{(t)}.
\end{eqnarray}
where we used $\varepsilon_0\geq \varepsilon^{(t)}\; \forall t>0$. Thus it holds that  
\begin{eqnarray}
\varepsilon^{(t+1)} &\leq&  \varepsilon^{(t)}-\frac a 2 \frac 1 { b +\varepsilon_0} \varepsilon^{(t)} \varepsilon^{(t+1)}
\end{eqnarray}
Deviding both sides by $\varepsilon^{(t)} \varepsilon^{(t+1)}$ yields
\begin{eqnarray}
\frac 1{ \varepsilon^{(t+1)}} &\geq&  \frac 1 {\varepsilon^{(t)}} -\frac a 2 \frac 1 { b +\varepsilon_0} 
\end{eqnarray}
Applying this bound recursively and plugging in the definition of $a,b$ concludes the proof.
\end{proof}

\subsubsection{Strongly-convex $g_i$.}
\begin{lemma}(strongly convex $g_i$)
Let $f$ be $\tfrac 1 \tau$-smooth and $g_i$  $\mu$-strongly convex with $\mu>0$. Assume the sequence $\{\sigmat\}_{t\geq 0}$ is bounded above by $\sigmasup$. Then, we can bound the suboptimality $\varepsilon^{(t)}:=\bO(\alphav^{(t)})-\bO(\alphav^\star)$ as
\begin{eqnarray*}
\bO(\alphav^{(t)})-\bO(\alphav^\star) \leq\left(1- \xi(1-\eta) \frac{\mu\tau}{c_{A}\sigmasup +\mu\tau}\right)^{|S_t|}\varepsilon^{(0)}
\end{eqnarray*}
 where $c_{A}=\max_k \|A_{[k]}\|^2$ and $|S_t|$ denotes the cardinality of the set $S_t$ which counts the number of successful updates up to iteration $t$.
 \label{lem:cocoasc}
\end{lemma}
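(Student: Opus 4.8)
\textbf{Proof plan for Lemma~\ref{lem:cocoasc}.}
The plan is to mirror the structure of the non-strongly convex case (Lemma~\ref{lem:cocoac}), but now exploit the extra $\mu$-strong convexity term that survives in $R^{(t)}$ to obtain a geometric (rather than harmonic) recursion. Starting from Lemma~\ref{lem:f_dec}, for any successful update $(\Dav,\sigma_t)$ we have
\[
\bO(\alphat) - \bO(\alphat+\Dav) \geq \xi(1-\eta)\left[\kappa\Gap(\alphav^{(t)}) - \tfrac{\kappa^2}{2} R^{(t)}\right],
\]
with
\[
R^{(t)} = \sigma_t (\uv^{(t)}-\alphav^{(t)})^\top \tilde H(\alphav^{(t)})(\uv^{(t)}-\alphav^{(t)}) - \tfrac{\mu(1-\kappa)}{\kappa}\|\alphav^{(t)}-\uv^{(t)}\|_2^2.
\]
First I would bound the quadratic Hessian term using the block structure of $\tilde H$ and $\tfrac1\tau$-smoothness of $f$: since $\nabla^2 f \preceq \tfrac1\tau I$, each block satisfies $(\uv-\alphav)_{[k]}^\top \tilde H (\uv-\alphav)_{[k]} \leq \tfrac1\tau \|A_{[k]}(\uv-\alphav)_{[k]}\|^2 \leq \tfrac{1}{\tau}\|A_{[k]}\|^2 \|(\uv-\alphav)_{[k]}\|^2$, and summing over $k$ gives
\[
(\uv^{(t)}-\alphav^{(t)})^\top \tilde H(\alphav^{(t)})(\uv^{(t)}-\alphav^{(t)}) \leq \tfrac{c_A}{\tau}\|\uv^{(t)}-\alphav^{(t)}\|_2^2,
\]
where $c_A = \max_k \|A_{[k]}\|^2$. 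Substituting, the bracket becomes at least
\[
\kappa\Gap(\alphav^{(t)}) - \tfrac{\kappa^2}{2}\left(\tfrac{c_A\sigma_t}{\tau} - \tfrac{\mu(1-\kappa)}{\kappa}\right)\|\alphav^{(t)}-\uv^{(t)}\|_2^2.
\]

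Next I would choose $\kappa$ to kill the $\|\alphav^{(t)}-\uv^{(t)}\|_2^2$ term by making its coefficient nonpositive; the natural choice is $\kappa$ solving $\tfrac{c_A\sigma_t}{\tau} = \tfrac{\mu(1-\kappa)}{\kappa}$, i.e.\ $\kappa = \tfrac{\mu\tau}{c_A\sigma_t + \mu\tau}$, which indeed lies in $(0,1]$. With $\sigma_t \leq \sigmasup$ we may instead take the (valid, since the coefficient of the $\|\cdot\|^2$ term is then still $\leq 0$) choice $\kappa = \tfrac{\mu\tau}{c_A\sigmasup+\mu\tau}$, making it uniform in $t$. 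Then the bracket is at least $\kappa\Gap(\alphav^{(t)}) \geq \kappa\,\varepsilon^{(t)}$, using that the duality gap upper-bounds the suboptimality. This yields the one-step contraction
\[
\varepsilon^{(t+1)} \leq \left(1 - \xi(1-\eta)\tfrac{\mu\tau}{c_A\sigmasup+\mu\tau}\right)\varepsilon^{(t)}
\]
for every successful iteration $t$, while unsuccessful iterations leave $\alphav$ — hence $\varepsilon$ — unchanged. Iterating over the $|S_t|$ successful steps up to iteration $t$ gives the claimed bound.

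The only subtle point — and the step I would be most careful about — is the sign bookkeeping when choosing $\kappa$: one must verify that with $\kappa = \tfrac{\mu\tau}{c_A\sigmasup+\mu\tau}$ and $\sigma_t \leq \sigmasup$ the coefficient $\tfrac{\kappa^2}{2}\big(\tfrac{c_A\sigma_t}{\tau} - \tfrac{\mu(1-\kappa)}{\kappa}\big)$ is genuinely $\leq 0$ (so that dropping the $\|\alphav^{(t)}-\uv^{(t)}\|_2^2$ term is a valid lower-bounding step, not the reverse), which holds because $\tfrac{\mu(1-\kappa)}{\kappa} = \tfrac{c_A\sigmasup}{\tau} \geq \tfrac{c_A\sigma_t}{\tau}$. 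Everything else is the same routine gap-bounding and recursion-unrolling as in Lemma~\ref{lem:cocoac}. I would also note in passing that, unlike the non-strongly convex case, no $L$-boundedness of the support of $g_i$ is needed here, since strong convexity of $g_i$ (equivalently Lipschitz-smoothness of $g_i^*$) already controls $\|\alphav^{(t)}-\uv^{(t)}\|$ implicitly through its cancellation, rather than requiring a crude a priori bound.
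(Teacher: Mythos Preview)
Your proposal is correct and follows essentially the same route as the paper: bound the block-Hessian quadratic form by $\tfrac{c_A}{\tau}\|\uv-\alphav\|^2$ via smoothness and $c_A=\max_k\|A_{[k]}\|^2$, then pick $\kappa$ so that $R^{(t)}\le 0$, use $\Gap\ge\varepsilon^{(t)}$, and unroll the contraction over successful steps. The only cosmetic difference is that the paper first takes the $t$-dependent choice $\kappa=\tfrac{\mu\tau}{c_A\sigma_t+\mu\tau}$ and bounds $\sigma_t\le\sigmasup$ afterward, whereas you fix the uniform $\kappa=\tfrac{\mu\tau}{c_A\sigmasup+\mu\tau}$ from the start (and correctly verify the sign), which is an equivalent and equally valid ordering.
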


\begin{proof}
For $\mu$-strongly convex $g_i$ with $\mu>0$ we can choose $\kappa=\hat\kappa^{(t)}$ such that $R^{(t)}\leq 0$ in Lemma \ref{lem:dec}.
That is  
\[\hat \kappa^{(t)} = \frac{\mu\tau}{c_{A}\sigma_{t} +\mu\tau}\] since
\begin{eqnarray*}
R^{(t)}&=&  \sigmat (\uv^{(t)}-\alphav^{(t)})^\top  \tilde H(\alphav^{(t)}) (\uv^{(t)}-\alphav^{(t)})- \frac {\mu (1-\kappa)}{\kappa} \|\alphav^{(t)}-\uv^{(t)}\|_2^2\\
&=&\sigmat \sum_k (A_{[k]}(\uv^{(t)}-\alphav^{(t)})_{[k]})^\top  \nabla^2 f(A\alphav) (A_{[k]}(\uv^{(t)}-\alphav^{(t)})_{[k]})- \frac {\mu (1-\kappa)}{\kappa} \|\alphav^{(t)}-\uv^{(t)}\|_2^2\\
&\leq&\frac \sigmat \tau \sum_k \|A_{[k]}(\uv^{(t)}-\alphav^{(t)})_{[k]}\|_2^2 - \frac {\mu (1-\kappa)}{\kappa} \|\alphav^{(t)}-\uv^{(t)}\|_2^2\\
&\leq& \left(\frac {c_{A}\sigma_{t} }\tau- \frac {\mu (1-\kappa)}{\kappa}\right) \|\alphav^{(t)}-\uv^{(t)}\|_2^2
\end{eqnarray*}
where  $c_{A}=\max_k \|A_{[k]}\|^2$.

Hence, by Lemma \ref{lem:f_dec},  for successful updates $(\Dav,\sigmat)$, the function decrease at iteration $t$ can be lower bounded as 
\begin{eqnarray*}
\bO(\alphat) - \bO(\alphat+\Dav)\geq \xi(1-\eta)\left[\frac{\mu\tau}{c_{A}\sigma_{t} +\mu\tau}\right]\Gap(\alphav^{(t)}).
\end{eqnarray*}
While, by construction of Algorithm \ref{alg:adaptive_cocoa}, the iterate remains unchanged over unsuccessful iterations. Let us denote the suboptimality at iteration $t$ as $\varepsilon^{(t)}=\bO(\alphat)-\bO(\alphav^\star)$. Then, using the fact that the duality gap always upper bounds the suboptimality, i.e., $\Gap(\alphat)\geq \bO(\alphat)$ we get the following recursion:
\begin{eqnarray*}
\varepsilon^{(t+1)}\leq\left(1- \xi(1-\eta)\left[\frac{\mu\tau}{c_{A}\sigma_{t} +\mu\tau}\right]\right)\varepsilon^{(t)}.
\end{eqnarray*}
Using that fact that the sequence $\{\sigmat\}_{t\geq0}$ is bounded by $\sigmasup$ we can establish the following rate of convergence
\begin{eqnarray*}
\varepsilon^{(t)}\leq\left(1- \xi(1-\eta) \left[\frac{\mu\tau}{c_{A}\sigmasup +\mu\tau}\right]\right)^{|S_t|}\varepsilon^{(0)}
\end{eqnarray*}

%{\color{red}
%\remark{cdu: if we would bound the suboptimality in expectation, we would have a $E[\sigmat]$ instead of a max - what should we prefer? MJ: $E$ would be much preferred. does it work though? no problems summing up steps which are not independent? CD: true, does not work - I will stick to $\sigma_max$}}

\end{proof}

%\begin{proposition}
%For every $t\geq T_0$ it holds that $ |S_t|\geq t_0$ where 
%\begin{equation} T_0:=\frac {1}{\log(\gamma)} \log\Big(\frac {\sigma_\text{sup}}{\sigma_0}\Big) + 2 t_0 \label{eq:boundT0}
%\end{equation}
%\end{proposition}
%
%The claim follows from
%\begin{equation*}
%T = |U_t| + |S_t|\leq\frac {1}{\log(\gamma)} \log\Big(\frac {\sigma_\text{sup}}{\sigma_0}\Big) + 2|S_t| 
%\end{equation*}

\subsection{Bound on number of successful steps}

From Lemma \ref{lem:cocoasc} and Lemma \ref{lem:cocoac} in the previous section the following two lemmas follow immediately:

\begin{lemma}[Number of successful iterations]\label{l:number_of_successful_steps}
Let $f$ be $\frac 1 \tau$ smooth and  $g_i$ $\mu$-strongly convex. Assume the sequence $\{\sigma_t\}_{t\geq 0}$ is bounded above by $\sigma_\text{sup}$. Then, Algorithm \ref{alg:adaptive_cocoa} achieves a suboptimality $\bO(\alphat)-\bO(\alphav^\star)\leq\varepsilon$ after
\begin{equation}\label{eq:number_of_successful_steps}
 \frac 1 {\log(C_2^{-1})} \log\left(\frac{\varepsilon_0}{\varepsilon}\right)
\end{equation}
successful iterations where $C_2\in (0,1)$ is a constant defined as $C_2= 1- \xi(1-\eta) \frac{\mu\tau}{c_A \sigma_{\sup} +\mu\tau}$ and $\varepsilon_0$ denotes the initial suboptimaliy $\varepsilon_0=\bO(\alphav^{(0)})-\bO(\alphav^\star)$.
\end{lemma}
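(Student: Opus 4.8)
The plan is to deduce Lemma~\ref{l:number_of_successful_steps} directly from the linear rate established in Lemma~\ref{lem:cocoasc}, which already provides the bound
$\varepsilon^{(t)}\leq C_2^{|S_t|}\,\varepsilon^{(0)}$
with $C_2 = 1-\xi(1-\eta)\frac{\mu\tau}{c_A\sigma_{\sup}+\mu\tau}\in(0,1)$. The only remaining work is to invert this inequality: I would ask for the smallest number of successful iterations $|S_t|$ that forces the right-hand side below the target accuracy $\varepsilon$, i.e. require $C_2^{|S_t|}\varepsilon_0 \leq \varepsilon$.

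Concretely, first I would note that $C_2^{|S_t|}\varepsilon_0 \leq \varepsilon$ is equivalent (taking logarithms, and using $\log C_2 < 0$, equivalently $\log(C_2^{-1})>0$) to $|S_t|\log(C_2^{-1}) \geq \log(\varepsilon_0/\varepsilon)$, that is
$|S_t| \geq \frac{1}{\log(C_2^{-1})}\log\!\left(\frac{\varepsilon_0}{\varepsilon}\right)$.
Hence as soon as the number of successful iterations reaches $\big\lceil \frac{1}{\log(C_2^{-1})}\log(\varepsilon_0/\varepsilon)\big\rceil$ — and the stated bound \eqref{eq:number_of_successful_steps} is exactly this quantity (up to the harmless ceiling) — the suboptimality has dropped below $\varepsilon$. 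I would also remark that $\varepsilon^{(t)}$ is monotonically non-increasing (the iterate is only updated on successful steps, and each such step decreases the objective by Lemma~\ref{lem:f_dec}), so once the threshold is met it stays met; this justifies phrasing the conclusion as ``after'' that many successful iterations rather than ``exactly at''.

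There is essentially no obstacle here: the lemma is a one-line corollary of Lemma~\ref{lem:cocoasc}, and the proof is purely the algebraic inversion of a geometric decay bound. The only point requiring a modicum of care is sign-tracking in the logarithm — writing the bound in terms of $\log(C_2^{-1})=-\log C_2>0$ to keep everything positive — together with the observation that $C_2\in(0,1)$ is guaranteed by the hypotheses ($\xi>0$, $\eta\in[0,1)$, $\mu>0$, and $C_2 = 1-\text{(positive quantity strictly less than }1)$), so that $\log(C_2^{-1})$ is a well-defined positive number and the displayed iteration count is finite and positive.
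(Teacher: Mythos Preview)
Your proposal is correct and matches the paper's own approach: the paper states that this lemma follows immediately from Lemma~\ref{lem:cocoasc}, and your argument is precisely the algebraic inversion of the geometric decay bound $\varepsilon^{(t)}\leq C_2^{|S_t|}\varepsilon_0$ established there.
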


\begin{lemma}[Number of successful iterations]\label{l:number_of_successful_steps_c}
Let be $f$ $\frac 1 \tau$-smooth and $g_i$  convex functions with  $L$-bounded support. 
Assume the sequence $\{\sigma_t\}_{t\geq 0}$ is bounded above by $\sigma_\text{sup}$. Then, Algorithm \ref{alg:adaptive_cocoa} achieves a suboptimality $\bO(\alphat)-\bO(\alphav^\star)\leq\varepsilon$ after
\begin{equation}\label{eq:number_of_successful_steps}
C_1 \frac 1 {\varepsilon} 
\end{equation}
successful iterations where  $C_1=\frac{2\left[\tfrac 4 \tau L^2 R^2\sigmasup+\varepsilon_0\right]}{\xi (1-\eta)}$ 
\end{lemma}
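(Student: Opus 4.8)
The plan is to obtain Lemma~\ref{l:number_of_successful_steps_c} as an immediate consequence of Lemma~\ref{lem:cocoac}, by simply inverting the sublinear suboptimality bound it provides. First I would recall that, under exactly the present hypotheses ($f$ being $\tfrac1\tau$-smooth, $g_i$ convex with $L$-bounded support, and $\{\sigma_t\}$ bounded by $\sigmasup$), Lemma~\ref{lem:cocoac} gives for every iteration $T$
\[
\bO(\alphav^{(T)})-\bO(\alphav^\star)\ \le\ \frac{2\big(C_5\sigmasup+\varepsilon_0\big)}{\xi(1-\eta)}\cdot\frac{1}{|S_T|},
\qquad C_5=\tfrac{4}{\tau}R^2L^2 .
\]
Setting $C_1:=\dfrac{2\big(C_5\sigmasup+\varepsilon_0\big)}{\xi(1-\eta)}=\dfrac{2\big[\tfrac4\tau L^2R^2\sigmasup+\varepsilon_0\big]}{\xi(1-\eta)}$, the right-hand side is $\le\varepsilon$ exactly when $|S_T|\ge C_1/\varepsilon$. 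Hence as soon as the number of successful iterations reaches $\lceil C_1/\varepsilon\rceil$, the iterate at that step already satisfies $\bO(\alphat)-\bO(\alphav^\star)\le\varepsilon$, which is the claimed bound on the number of successful iterations.

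Second I would add a sentence confirming that the target accuracy is retained once attained, so that the statement "\emph{after} $C_1/\varepsilon$ successful iterations" is unambiguous: unsuccessful iterations leave $\alphav^{(t)}$ unchanged by the construction of Algorithm~\ref{alg:adaptive_cocoa}, and the case analysis in the proof of Lemma~\ref{lem:cocoac} (comparing $\varepsilon^{(t)}$ with $b$, together with $a=\xi(1-\eta)\in(0,1)$) already shows $\varepsilon^{(t+1)}\le\varepsilon^{(t)}$ for successful steps; thus $\{\varepsilon^{(t)}\}$ is non-increasing and the bound, once below $\varepsilon$, stays below $\varepsilon$. A minor point of care is that the bound from Lemma~\ref{lem:cocoac} is vacuous until the first successful step ($|S_T|\ge1$), which is harmless because $C_1/\varepsilon\ge1$ in the relevant accuracy regime, or one can simply start the count at the first successful update; similarly the non-integer quantity $C_1/\varepsilon$ is rounded up via a ceiling.

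I do not expect any genuine obstacle here: Lemma~\ref{l:number_of_successful_steps_c} is literally Lemma~\ref{lem:cocoac} solved for $|S_T|$, and the substantive analysis---the recursion $\varepsilon^{(t+1)}\le(1-\kappa a)\varepsilon^{(t)}+\tfrac{\kappa^2}{2}ab$, the $\kappa$-optimization, and the telescoping that yields the $1/|S_T|$ decay---has already been carried out there. The write-up is therefore a two-line derivation plus the monotonicity remark above.
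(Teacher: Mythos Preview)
Your proposal is correct and matches the paper's approach exactly: the paper states that Lemma~\ref{l:number_of_successful_steps_c} ``follows immediately'' from Lemma~\ref{lem:cocoac} without even writing out a proof, and what you have done is precisely to invert the bound $\varepsilon^{(T)}\le C_1/|S_T|$ for $|S_T|$. Your additional remarks on monotonicity and the ceiling are sensible tidying but go beyond what the paper itself provides.
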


\subsection{Bound on number of unsuccessful steps}
We assume there exists a $\sigma_{\text{sup}}<\infty$ such that $\sigmat\leq \sigma_\text{sup}$ for all $ t\geq 0$ (we will later in Section \ref{app_upperbound} show the existence of such a bound).
As a consequence, the algorithm  may only take a limited number of consecutive unsuccessful steps and hence the total number of unsuccessful iterations is at most a problem dependent constant times the number of successful iterations plus some additive term depending on the initialization $\sigma_0$. The following Lemma is  motivated by \citep[Corollary 5.5]{Cartis2011b}:

\begin{lemma}[Number of unsuccessful iterations]\label{l:number_of_unsuccessful_steps}
Assume the sequence $\{\sigma_t\}_{t\geq 0}$ is bounded above by $\sigma_\text{sup}$. Then, for any fixed $T\geq 0$, it holds that
\begin{equation}\label{eq:number_of_unsuccessful_steps}
|U_T|\leq\frac {1}{\log(\gamma)} \log\Big( \frac {\sigma_\text{sup}}{\sigma_0}\Big) + |S_T| 
\end{equation}
\end{lemma}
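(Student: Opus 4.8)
The plan is to exploit the multiplicative structure of the update rule for $\sigma_t$. First I would observe that at every iteration exactly one of three things happens to $\sigma$: on a too-conservative step ($\rho_t > \zeta$) it is divided by $\gamma$, on a good-fit step it is unchanged, and on a too-aggressive step ($\rho_t < \tfrac1\zeta$) it is multiplied by $\gamma$. Since $\xi < \tfrac1\zeta$ (from the input assumptions of Algorithm~\ref{alg:adaptive_cocoa}), every unsuccessful step (one with $\rho_t < \xi$) is in particular a too-aggressive step, so it multiplies $\sigma$ by $\gamma$. Conversely, the only steps that divide $\sigma$ by $\gamma$ are successful (since $\rho_t > \zeta > \xi$ forces success). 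Successful steps with $\tfrac1\zeta \le \rho_t \le \zeta$ leave $\sigma$ unchanged. Hence, counting factors of $\gamma$ accumulated up to iteration $T$, the number of multiplications by $\gamma$ is exactly $|U_T|$ plus possibly some successful too-aggressive steps (i.e.\ $\xi \le \rho_t < \tfrac1\zeta$), and the number of divisions by $\gamma$ is at most $|S_T|$.

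Concretely, I would write the telescoping identity
\[
\sigma_T \;=\; \sigma_0 \,\gamma^{\,m_T^+ - m_T^-},
\]
where $m_T^+$ is the number of steps $t < T$ with $\rho_t < \tfrac1\zeta$ and $m_T^-$ is the number with $\rho_t > \zeta$. Since every unsuccessful step satisfies $\rho_t < \xi < \tfrac1\zeta$, we have $m_T^+ \ge |U_T|$; since every step with $\rho_t > \zeta$ is successful, we have $m_T^- \le |S_T|$. Taking $\log_\gamma$ of the identity and using the assumed lower bound $\sigma_T \ge \sigma_{\min}$ (in fact $\sigma_t$ stays positive; more carefully one uses that $\sigma_t$ never drops below $\sigma_0/\gamma^{|S_T|}$ isn't needed — rather one uses the stated boundedness from above together with $\sigma_T>0$) together with $\sigma_T \le \sigma_{\sup}$, I obtain
\[
m_T^+ - m_T^- \;=\; \frac{1}{\log\gamma}\log\!\Big(\frac{\sigma_T}{\sigma_0}\Big) \;\le\; \frac{1}{\log\gamma}\log\!\Big(\frac{\sigma_\text{sup}}{\sigma_0}\Big).
\]
Rearranging and using $|U_T| \le m_T^+$ and $m_T^- \le |S_T|$ gives
\[
|U_T| \;\le\; m_T^+ \;\le\; m_T^- + \frac{1}{\log\gamma}\log\!\Big(\frac{\sigma_\text{sup}}{\sigma_0}\Big) \;\le\; |S_T| + \frac{1}{\log\gamma}\log\!\Big(\frac{\sigma_\text{sup}}{\sigma_0}\Big),
\]
which is the claim.

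The only subtle point — and the step I expect to require the most care — is the bookkeeping that links the \emph{three-way} classification of steps by $\rho_t$ (conservative / good / aggressive, which governs $\sigma$) to the \emph{two-way} classification (successful / unsuccessful, which governs whether $\alphav$ moves). The key observations making this work are the ordering $\xi < \tfrac1\zeta < 1 < \zeta$ guaranteed by the input constraints $\gamma,\zeta>1$ and $\tfrac1\zeta > \xi > 0$: this ensures (i) unsuccessful $\Rightarrow$ aggressive $\Rightarrow$ $\sigma$ multiplied by $\gamma$, and (ii) conservative $\Rightarrow$ $\rho_t>\zeta>\xi$ $\Rightarrow$ successful, so the only $\sigma$-decreasing steps are successful. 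One must be slightly careful that "successful but aggressive" steps ($\xi \le \rho_t < \tfrac1\zeta$) also multiply $\sigma$ by $\gamma$; this only helps, since it makes $m_T^+$ larger than $|U_T|$, and the inequality $|U_T|\le m_T^+$ is all that is needed. With this classification in hand the telescoping computation is routine.
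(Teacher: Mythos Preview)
Your proposal is correct and follows essentially the same approach as the paper: both track the multiplicative effect of the update rule on $\sigma_t$, use the upper bound $\sigma_T\le\sigma_{\sup}$, and rearrange. The paper is slightly more compact---it bypasses your intermediate counts $m_T^\pm$ by directly writing $\sigma_{t+1}=\gamma\sigma_t$ for $t\in U_T$ and $\sigma_{t+1}\ge\tfrac1\gamma\sigma_t$ for $t\in S_T$, then deducing $\sigma_0\,\gamma^{|U_T|-|S_T|}\le\sigma_T\le\sigma_{\sup}$---but your more explicit three-way bookkeeping arrives at the same inequality by the same mechanism.
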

\begin{proof}
 Since it holds that $\xi\leq\frac 1 \zeta$ we have 
\[\sigma_{t+1}= \gamma \sigma_t \;\;\forall t \in U_T \]
\[\sigma_{t+1}\geq \frac 1 \gamma \sigma_t \;\;\forall t \in S_T \]
%{\color{red} @AL: I dont think we need to change anything in the proof, because as long as  $\xi\leq\zeta_1$these two bounds on $\sigma_{t+1}$   still hold}->ok

Thus, we deduce inductively
\[\sigma_0 \gamma^{-|S_T|}\gamma^{|U_T|}\leq \sigma_T\]
since $\sigma_T$ is  bounded by $ \sigma_{\text{sup}}$  we have
\[ \log\left(\frac 1 \gamma\right){|S_T|}+\log(\gamma) {|U_T|}\leq \log\Big(\frac{\sigma_{\text{sub}}}{\sigma_0}\Big)\]
Recall $\gamma>1$ and hence
\begin{equation}
|U_T|\leq\frac {1}{\log(\gamma)} \log\Big( \frac {\sigma_\text{sup}}{\sigma_0}\Big) + |S_T|
\label{eq:boundU}
\end{equation}
\end{proof}
\begin{remark}
Assume we start with a save value $\sigma_0\geq\sigma_{sup}$, then \eqref{eq:boundU} simplifies to  $|U_T|\leq 1+ |S_T| $
\end{remark}

\subsection{Final convergence result}
In order to prove the final convergence results it remains to combine Lemma \ref{lem:cocoasc} and Lemma \ref{lem:cocoac} with   Lemma \ref{l:number_of_unsuccessful_steps} in order to bound the total number of iterations $T=|S_T|+|U_T|$ required in Algorithm \ref{alg:adaptive_cocoa}  to reach a required suboptimality. Doing so results in the following two corollaries: 

%
%\subsubsection{strongly-convex $g_i$}
%
%\begin{lemma}[Number of successful iterations]\label{l:number_of_successful_steps}
%Assume the sequence $\{\sigma_t\}_{t\geq 0}$ is bounded above by $\sigma_\text{sup}$. Then, for any fixed $T\geq 0$,
%\begin{equation}\label{eq:number_of_successful_steps}
%|S_T|\leq \frac 1 {\log(C_2)} \log\left(\frac{\varepsilon^{(T)}}{\varepsilon^{(0)}}\right)
%\end{equation}
%holds with $C_2\in [0,1]$ being a constant defined as $C_2= 1- \xi(1-\eta) \frac{\mu\tau}{c_A \sigma_{\sup} +\mu\tau}$.
%\end{lemma}
%\begin{proof}
%The result follows immediately from Lemma \ref{lem:cocoasc}
%\end{proof}

\begin{corollary}[Number of successful and unsuccessful iterations]\label{l:number_of_steps_r}
Let $f$ be $\frac 1 \tau$-smooth and $g_i$ $\mu$-strongly-convex.  Assume the sequence $\{\sigma_t\}_{t\geq 0}$ is bounded above by $\sigma_\text{sup}$. Then Algorithm \ref{alg:adaptive_cocoa} reaches an accuracy $ \bO(\alphat)-\bO(\alphav^\star)\leq \varepsilon$ within a total number of 
\begin{equation}\label{eq:number_of_steps}
 \frac {1}{\log(\gamma)} \log\left( \frac {\sigma_\text{sup}}{\sigma_0}\right) + \frac 2 {\log(C_2^{-1})} \log\left(\frac{\varepsilon_0}{\varepsilon}\right)
\end{equation}
steps, where $C_2\in (0,1)$ is a constant defined as $C_2= 1- \xi(1-\eta) \frac{\mu\tau}{c_A \sigma_{\sup} +\mu\tau}$.
\end{corollary}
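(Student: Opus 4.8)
The plan is to combine the two building blocks that have already been proven — Lemma~\ref{lem:cocoasc} (equivalently Lemma~\ref{l:number_of_successful_steps}), which controls the number of \emph{successful} iterations, and Lemma~\ref{l:number_of_unsuccessful_steps}, which bounds the number of \emph{unsuccessful} iterations in terms of the successful ones and the initialization $\sigma_0$. Since every iteration of Algorithm~\ref{alg:adaptive_cocoa} is either successful ($t \in S_T$) or unsuccessful ($t \in U_T$), and these sets are disjoint and exhaust $\{0,\dots,T\}$, the total iteration count is $T = |S_T| + |U_T|$. The strategy is therefore purely arithmetic: take the upper bound on $|S_T|$ needed to reach $\varepsilon$-suboptimality, feed it into the bound on $|U_T|$, and add.

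Concretely, first I would invoke Lemma~\ref{l:number_of_successful_steps}: once the algorithm has performed
$$|S_T| \;\geq\; \frac{1}{\log(C_2^{-1})}\log\!\Big(\frac{\varepsilon_0}{\varepsilon}\Big)$$
successful steps, we are guaranteed $\bO(\alphat) - \bO(\alphav^\star) \leq \varepsilon$; this uses the linear contraction $\varepsilon^{(t)} \leq C_2^{|S_t|}\varepsilon^{(0)}$ and $C_2 \in (0,1)$. Next I would plug this into Lemma~\ref{l:number_of_unsuccessful_steps}, which states $|U_T| \leq \frac{1}{\log(\gamma)}\log(\sigma_\text{sup}/\sigma_0) + |S_T|$. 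Adding the two,
$$T \;=\; |S_T| + |U_T| \;\leq\; 2|S_T| + \frac{1}{\log(\gamma)}\log\!\Big(\frac{\sigma_\text{sup}}{\sigma_0}\Big) \;\leq\; \frac{2}{\log(C_2^{-1})}\log\!\Big(\frac{\varepsilon_0}{\varepsilon}\Big) + \frac{1}{\log(\gamma)}\log\!\Big(\frac{\sigma_\text{sup}}{\sigma_0}\Big),$$
which is exactly the claimed bound~\eqref{eq:number_of_steps}. The $\mu$-strong convexity of $g_i$ and $\tfrac1\tau$-smoothness of $f$ enter only through the validity of Lemma~\ref{lem:cocoasc}, and the hypothesis that $\{\sigma_t\}$ is bounded by $\sigma_\text{sup}$ is precisely what makes both Lemma~\ref{l:number_of_successful_steps} and Lemma~\ref{l:number_of_unsuccessful_steps} applicable, so no new assumptions are needed.

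Honestly, there is no deep obstacle here — the corollary is a bookkeeping consequence of the two lemmas, and the only thing to be careful about is that the count is over all iterations up to the \emph{first} time the suboptimality drops below $\varepsilon$, so that the monotonicity of $\{\varepsilon^{(t)}\}$ (established inside the proof of Lemma~\ref{lem:cocoac}, and analogously for the strongly convex case) guarantees we stay below $\varepsilon$ afterwards. The one place worth a sentence of care is ensuring the bound on $|U_T|$ is applied at the same $T$ for which $|S_T|$ has reached its target value; since Lemma~\ref{l:number_of_unsuccessful_steps} holds for \emph{any} fixed $T \geq 0$, this is immediate. Thus the whole proof is two invocations and an addition.
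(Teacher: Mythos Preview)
Your proposal is correct and follows exactly the paper's approach: the paper's proof is the single sentence ``Combining Lemma~\ref{l:number_of_unsuccessful_steps} and Lemma~\ref{l:number_of_successful_steps} this result follows immediately,'' and you have simply written out the arithmetic $T = |S_T| + |U_T| \leq 2|S_T| + \tfrac{1}{\log\gamma}\log(\sigma_\text{sup}/\sigma_0)$ that this entails.
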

\begin{proof}
Combining Lemma \ref{l:number_of_unsuccessful_steps} and Lemma \ref{l:number_of_successful_steps} this result follows immediately.
\end{proof}

\begin{corollary}[Number of successful and unsuccessful iterations]\label{l:number_of_steps_r_c}
Let $f$ be $\frac 1 \tau$-smooth and $g_i$ have $L$-bounded support.  Assume the sequence $\{\sigma_t\}_{t\geq 0}$ is bounded above by $\sigma_\text{sup}$. Then, Algorithm \ref{alg:adaptive_cocoa} reaches an accuracy  $\bO(\alphat)-\bO(\alphav^\star)\leq\varepsilon$ within a total number of 
\begin{equation}\label{eq:number_of_steps}
 \frac {1}{\log(\gamma)} \log\Big( \frac {\sigma_\text{sup}}{\sigma_0}\Big)  + 2 C_1 \frac 1 {\varepsilon}
\end{equation}
steps, where $C_1>0$ is a constant defined as $C_1=\frac{2\left[\tfrac 4 \tau L^2 R^2\sigmasup+\varepsilon_0\right]}{\xi (1-\eta)}$ .
\end{corollary}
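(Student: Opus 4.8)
The plan is to derive this corollary by simply stitching together the two immediately preceding lemmas, exactly as was done for the strongly-convex case in Corollary~\ref{l:number_of_steps_r}. The only quantities that enter are the bound on the number of \emph{successful} iterations from Lemma~\ref{l:number_of_successful_steps_c} (valid for $g_i$ with $L$-bounded support) and the bound on the number of \emph{unsuccessful} iterations from Lemma~\ref{l:number_of_unsuccessful_steps}.

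First I would invoke Lemma~\ref{l:number_of_successful_steps_c}: once the running count of successful steps reaches $|S_T| \le C_1/\varepsilon$, the suboptimality has dropped below $\varepsilon$. Here it matters that the iterate $\alphav^{(t)}$ — and hence the suboptimality $\varepsilon^{(t)} = \bO(\alphav^{(t)}) - \bO(\alphav^\star)$ — is left untouched on unsuccessful iterations, so that the driving counter is genuinely $|S_T|$ and not $T$, and that $\{\varepsilon^{(t)}\}$ is non-increasing (this monotonicity was established inside the proof of Lemma~\ref{lem:cocoac}). I then fix $T$ to be the first iteration at which $|S_T|$ attains $\lceil C_1/\varepsilon\rceil$, so that $\bO(\alphav^{(T)}) - \bO(\alphav^\star) \le \varepsilon$.

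Next I would feed this value of $|S_T|$ into Lemma~\ref{l:number_of_unsuccessful_steps}, which gives $|U_T| \le \tfrac{1}{\log(\gamma)}\log(\sigma_\text{sup}/\sigma_0) + |S_T|$, and add the two counts:
\[
T = |S_T| + |U_T| \;\le\; 2|S_T| + \tfrac{1}{\log(\gamma)}\log\big(\sigma_\text{sup}/\sigma_0\big) \;\le\; \tfrac{2C_1}{\varepsilon} + \tfrac{1}{\log(\gamma)}\log\big(\sigma_\text{sup}/\sigma_0\big),
\]
which is precisely the claimed bound (up to the harmless rounding in $\lceil C_1/\varepsilon\rceil$, which the statement suppresses).

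There is no real technical obstacle: all the work has already been done in the two lemmas. The single genuine prerequisite — and the only place something could fail — is the standing hypothesis that $\{\sigma_t\}_{t\ge0}$ remains bounded by a finite $\sigma_\text{sup}$; this is exactly what licenses Lemma~\ref{l:number_of_unsuccessful_steps} and rules out an unbounded run of consecutive rejected steps. That boundedness is not argued here but deferred to Section~\ref{app_upperbound}; granting it, the corollary is immediate and needs no new estimates.
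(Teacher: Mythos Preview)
Your proposal is correct and matches the paper's approach exactly: the paper treats this corollary as immediate from combining Lemma~\ref{l:number_of_successful_steps_c} with Lemma~\ref{l:number_of_unsuccessful_steps}, just as you do (and just as it does explicitly for the strongly-convex analogue, Corollary~\ref{l:number_of_steps_r}). No additional argument is supplied or needed.
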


\begin{remark}
	The first term in \eqref{eq:number_of_steps} is the price we potentially pay for a bad initialization $\sigma_0$ (the number of unsuccessful steps before the first successful step happens). If we choose a safe initial value $\sigma_0\geq\sigma_{\text{sup}}$ this first term can be upper-bounded by~$1$.
\end{remark}

%%%%%%%%%%%%%%%%%%%%%%%%%%%%%%%%%%
%%    BOUND ON SIGMA
%%%%%%%%%%%%%%%%%%%%%%%%%%%%%%%%%%
 
\subsection{Boundness of the sequence $\{\sigmat\}_{t\geq0}$}
\label{app_upperbound}

So far we have assumed that there exists a $\sigmasup<\infty$ that bounds the sequence $\{\sigmat\}_{t\geq0}$ generated by Algorithm \ref{alg:adaptive_cocoa}. In this section we will explicitly give such an upper bound under different conditions on $f$. To achieve this we show that if $\sigma$ is large enough  the auxiliary model builds a global upper bound on the objective function and Algorithm \ref{alg:adaptive_cocoa} must yield a successful step.

\subsubsection{quasi-self-concordant $f$}
Let us assume the function $f$ is quasi self-concordant  \cite{bach}. This assumption is not very restrictive and fulfilled by most prominent machine learning applications including logistic regression. We will first state the definition of quasi-self concordance together with some useful properties and then explicitly state an upper bound $\sigmat$.

\begin{definition}[multivariate quasi self-concordant functions] $f$ is quasi self-concordant if $\forall \wv,\vv \in \mathbb R^n$ the function $\phi(t)=f(\wv+t \vv)$ satisfies 
$|\phi{'''}(t)|\leq M_f \|\vv\|_2 \phi{''}(t)$ for some $M_f\geq 0$.
\label{def:selfcond}
\end{definition}

\begin{proposition} \citep[Proposition 1]{bach} Let $f$ be a quasi self-concordant  function with constant $M_f$, then
\[f(\wv+\vv)\leq f(\wv) + \vv^\top \nabla f(\wv)+\frac {\vv^\top \nabla^2 f(\wv)\vv}{M_f^2 \|\vv\|_2}(e^{M_f \|\vv\|_2 } - M_f\|\vv\|_2-1)\] 
and
\[\nabla^2 f (\wv+\vv)\preceq e^{M_f \|\vv\|_2}\nabla^2 f(\wv)\]
\label{propo:selfcond}
\end{proposition}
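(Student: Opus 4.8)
Both inequalities will be obtained by reducing to a one-dimensional differential inequality along the segment from $\wv$ to $\wv+\vv$. Fix $\wv,\vv\in\mathbb R^n$ and set $\phi(t):=f(\wv+t\vv)$, so that $\phi''(t)=\vv^\top\nabla^2 f(\wv+t\vv)\vv\ge 0$ by convexity and $\phi'''(t)=\nabla^3 f(\wv+t\vv)[\vv,\vv,\vv]$, and Definition~\ref{def:selfcond} reads $|\phi'''(t)|\le M_f\|\vv\|_2\phi''(t)$. A Gronwall-type argument (the functions $t\mapsto\phi''(t)e^{-M_f\|\vv\|_2 t}$ and $t\mapsto\phi''(t)e^{M_f\|\vv\|_2 t}$ being monotone on $[0,1]$) shows that on $[0,1]$ either $\phi''\equiv 0$, in which case both claimed inequalities are trivial since all relevant quadratic forms vanish, or $\phi''>0$ everywhere; assume the latter (one may alternatively replace $f$ by $f+\tfrac\varepsilon2\|\cdot\|_2^2$, still quasi-self-concordant with the same $M_f$ as the added term has no third derivative, and let $\varepsilon\downarrow 0$). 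Then $|\tfrac{d}{dt}\log\phi''(t)|=|\phi'''(t)|/\phi''(t)\le M_f\|\vv\|_2$, and integrating from $0$ to $t$,
\[
e^{-M_f\|\vv\|_2 t}\,\vv^\top\nabla^2 f(\wv)\vv\ \le\ \vv^\top\nabla^2 f(\wv+t\vv)\vv\ \le\ e^{M_f\|\vv\|_2 t}\,\vv^\top\nabla^2 f(\wv)\vv .
\]

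\textbf{Function bound.} Taylor's theorem with integral remainder gives $f(\wv+\vv)=f(\wv)+\vv^\top\nabla f(\wv)+\int_0^1(1-s)\phi''(s)\,ds$. Substituting the upper bound $\phi''(s)\le e^{M_f\|\vv\|_2 s}\vv^\top\nabla^2 f(\wv)\vv$ from above and using the elementary identity $\int_0^1(1-s)e^{as}\,ds=(e^a-a-1)/a^2$ with $a=M_f\|\vv\|_2$ yields exactly the first inequality of the proposition; the symmetric lower-bound variant, if needed, follows in the same way from the left inequality above.

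\textbf{Hessian bound.} Fix an arbitrary $\zv$ and set $\psi(t):=\zv^\top\nabla^2 f(\wv+t\vv)\zv\ge 0$, so $\psi'(t)=\nabla^3 f(\wv+t\vv)[\vv,\zv,\zv]$. The key step is the polarized estimate $|\nabla^3 f(\yv)[\vv,\zv,\zv]|\le M_f\|\vv\|_2\,\zv^\top\nabla^2 f(\yv)\zv$ for all $\yv,\vv,\zv$, which promotes the ``three equal arguments'' bound of Definition~\ref{def:selfcond} to a ``two equal, one free'' bound. Granting it, $|\tfrac{d}{dt}\log\psi(t)|\le M_f\|\vv\|_2$ (the case $\psi\equiv 0$ being trivial) and integrating over $[0,1]$ gives $\zv^\top\nabla^2 f(\wv+\vv)\zv\le e^{M_f\|\vv\|_2}\zv^\top\nabla^2 f(\wv)\zv$ for every $\zv$, i.e.\ $\nabla^2 f(\wv+\vv)\preceq e^{M_f\|\vv\|_2}\nabla^2 f(\wv)$. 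To prove the polarized estimate I would distinguish cases: if $\zv^\top\nabla^2 f(\yv)\zv=0$ then $\zv\in\ker\nabla^2 f(\yv)$, the map $t\mapsto\zv^\top\nabla^2 f(\yv+t\vv)\zv$ is nonnegative with a minimum at $t=0$, hence its derivative $\nabla^3 f(\yv)[\vv,\zv,\zv]$ vanishes and the estimate holds with both sides zero; if it is positive, apply Definition~\ref{def:selfcond} along the one-parameter family of directions $\zv+\tau\vv$, $\tau\in\mathbb R$, and extract the mixed coefficient using the symmetry of $\nabla^3 f(\yv)$ (cf.\ the proof of Proposition~1 in~\cite{bach}).

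\textbf{Main obstacle.} Every step but the polarized estimate is routine calculus; the delicate point is that estimate --- converting control of $\nabla^3 f(\yv)[\uv,\uv,\uv]$ for all $\uv$ into control of $\nabla^3 f(\yv)[\vv,\zv,\zv]$ in terms of the \emph{Euclidean} norm of $\vv$ and the \emph{local} (Hessian) norm of $\zv$ --- because the naive polarization that differentiates the cubic identity in $\tau$ is lossy away from the extremal directions, so one must additionally use that, for convex $f$, the third-derivative tensor annihilates $\ker\nabla^2 f$. Finally I note that only the first (function) inequality is actually invoked in Section~\ref{app_upperbound}: there it certifies that a sufficiently large $\sigma$ turns the block-separable model $\model_\sigma(\Dav;\alphav)$ into a global over-estimator of $\bO$ and hence forces a successful step, so a coarser treatment of the matrix inequality would not affect the convergence theorems.
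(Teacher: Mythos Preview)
The paper does not supply a proof of this proposition; it is quoted from \cite{bach} with a citation and no argument, so there is no ``paper's own proof'' to compare against. I therefore assess your argument on its merits.

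Your treatment of the first (function-value) inequality is correct and standard: the one-dimensional Gronwall bound $\phi''(s)\le e^{M_f\|\vv\|_2 s}\phi''(0)$ together with the Taylor remainder $\int_0^1(1-s)\phi''(s)\,ds$ and the identity $\int_0^1(1-s)e^{as}\,ds=(e^a-a-1)/a^2$ give exactly the stated bound. As you correctly note, this is all the paper actually needs downstream: only this inequality is invoked in Lemma~\ref{lem:sigbound}.

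For the second (Hessian) inequality there is a genuine gap. The polarized estimate you rely on,
\[
\bigl|\nabla^3 f(\yv)[\vv,\zv,\zv]\bigr|\ \le\ M_f\,\|\vv\|_2\,\zv^\top\nabla^2 f(\yv)\,\zv,
\]
does \emph{not} follow from Definition~\ref{def:selfcond} by polarizing along $\zv+\tau\vv$ as you sketch. At a fixed point the implication is simply false: take the bilinear form $H=\mathrm{diag}(1,\epsilon)$ and the symmetric trilinear form $T$ whose only nonzero entry is $T[e_1,e_2,e_2]=1$; then $T[\uv,\uv,\uv]=3u_1u_2^2$ and one checks that the diagonal bound $|T[\uv,\uv,\uv]|\le M\|\uv\|_2 H[\uv,\uv]$ holds with $M$ of order $\epsilon^{-1/2}$, yet $T[e_1,e_2,e_2]=1$ while $M\,\|e_1\|_2\,H[e_2,e_2]$ is of order $\epsilon^{1/2}\to 0$. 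The mixed bound with one Euclidean-normed slot and two Hessian-normed slots thus cannot be recovered from the all-equal bound when the Hessian is ill-conditioned, so the ``extract the mixed coefficient'' step fails. In Bach's original treatment the polarized inequality is obtained directly from the separable structure $f(\wv)=\sum_i\ell_i(\xv_i^\top\wv)$ with $|\ell_i'''|\le\ell_i''$ (which yields the off-diagonal bound termwise), not by polarizing the diagonal condition; your deferral to \cite{bach} is therefore appropriate, but the mechanism you describe is not the one that works. Since the paper only uses the first inequality, this gap does not affect the convergence theorems.
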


\begin{lemma}[safe bound on $\sigma_t$]
\label{lem:sigbound}
Assume $f$ be a quasi self-concordant function. Then, for every iteration $t\geq 0$ of Algorithm~\ref{alg:adaptive_cocoa} we have  $\sigmat\leq\sigma_{sup}$, where
\begin{equation}\sigma_{sup}:=2 K  \gamma  \left[\frac{e^{M_f \|\Dav\|_2 } - M_f\|\Dav\|_2-1}{M_f^2 \|\Dav\|_2}\right]
\label{eq:sigHC1}
\end{equation}
\end{lemma}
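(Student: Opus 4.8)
The usual trust-region principle drives the argument: $\sigma_t$ cannot grow without bound because, once $\sigma_t$ is large enough, the quadratic surrogate $\model_{\sigma_t}(\cdot\,;\alphat)$ sits \emph{above} the true objective at the trial point, which forces $\rho_t\ge1$; and since $\gamma,\zeta>1$ and rule~\eqref{eq:sigma_update} only inflates $\sigma$ when $\rho_t<1/\zeta<1$, such a step has $\sigma_{t+1}\le\sigma_t$. So the plan is (i) to exhibit a threshold $\bar\sigma$ with the property that $\sigma_t\ge\bar\sigma$ implies $\model_{\sigma_t}(\Dav;\alphat)\ge\bO(\alphat+\Dav)$, and (ii) to propagate this bound by a short induction exploiting that each step multiplies $\sigma$ by at most $\gamma$.

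For (i) I would unfold the definitions. The $g_i$ contributions to $\bO(\alphat+\Dav)$ and to $\model_{\sigma_t}(\Dav;\alphat)$ coincide, so
\[
\bO(\alphat+\Dav)-\model_{\sigma_t}(\Dav;\alphat)=f\bigl(A(\alphat+\Dav)\bigr)-f(A\alphat)-\nabla f(A\alphat)^\top A\Dav-\tfrac{\sigma_t}{2}\,\Dav^\top\tilde H(\alphat)\Dav.
\]
Proposition~\ref{propo:selfcond}, applied to $f$ at $A\alphat$ with increment $A\Dav$, bounds the first three terms by $\frac{(A\Dav)^\top\nabla^2 f(A\alphat)(A\Dav)}{M_f^2\|A\Dav\|_2}\bigl(e^{M_f\|A\Dav\|_2}-M_f\|A\Dav\|_2-1\bigr)$. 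What remains is to compare the \emph{full} Hessian form $(A\Dav)^\top\nabla^2 f(A\alphat)(A\Dav)$ with the \emph{block-diagonal} form $\Dav^\top\tilde H(\alphat)\Dav$ that the algorithm uses: writing $\|x\|_H^2:=x^\top\nabla^2 f(A\alphat)x$ (a seminorm, as $\nabla^2 f\succeq0$) and using $A\Dav=\sum_k A\Dak$ with disjoint $\cI_k$, the triangle inequality together with $\bigl(\sum_{k=1}^K a_k\bigr)^2\le K\sum_k a_k^2$ give $\|A\Dav\|_H^2\le K\sum_k\|A\Dak\|_H^2=K\,\Dav^\top\tilde H(\alphat)\Dav$, the last equality by~\eqref{eq:Htildesep} and the definition of $\tilde H$. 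Hence
\[
\bO(\alphat+\Dav)-\model_{\sigma_t}(\Dav;\alphat)\le\Bigl[\tfrac{K\bigl(e^{M_f\|A\Dav\|_2}-M_f\|A\Dav\|_2-1\bigr)}{M_f^2\|A\Dav\|_2}-\tfrac{\sigma_t}{2}\Bigr]\Dav^\top\tilde H(\alphat)\Dav,
\]
and since $\Dav^\top\tilde H(\alphat)\Dav\ge0$ the bracket being nonpositive — i.e. $\sigma_t\ge 2K\,\frac{e^{M_f\|A\Dav\|_2}-M_f\|A\Dav\|_2-1}{M_f^2\|A\Dav\|_2}$ — makes $\model_{\sigma_t}(\cdot\,;\alphat)$ an upper bound at $\alphat+\Dav$, so $\rho_t\ge1$. (For $\rho_t$ to be well defined one also needs $\model_{\sigma_t}(\0;\alphat)-\model_{\sigma_t}(\Dav;\alphat)>0$; this follows from Lemma~\ref{lem:dec} at sufficiently small $\kappa$ whenever $\Gap(\alphat)>0$, the case $\Gap(\alphat)=0$ being termination.)

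For (ii) I would use that $x\mapsto(e^{M_f x}-M_f x-1)/(M_f^2 x)$ is increasing on $(0,\infty)$, so with $\|\Dav\|_2$ read as a uniform bound $\sup_{t\ge0}\|A\Dav^{(t)}\|_2$ (finite under the standing bounds on the iterates) the per-step threshold from (i) never exceeds $\bar\sigma:=2K\,\frac{e^{M_f\|\Dav\|_2}-M_f\|\Dav\|_2-1}{M_f^2\|\Dav\|_2}$. Then $\sigma_t\le\max\{\sigma_0,\gamma\bar\sigma\}$ for every $t$, by induction: if $\sigma_t<\bar\sigma$ then $\sigma_{t+1}\le\gamma\sigma_t<\gamma\bar\sigma$; if $\sigma_t\ge\bar\sigma$ then $\rho_t\ge1>1/\zeta$ by (i), so~\eqref{eq:sigma_update} gives $\sigma_{t+1}\le\sigma_t$. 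Assuming the (safe) initialization $\sigma_0\le\gamma\bar\sigma$ — or simply absorbing $\sigma_0$ into $\sigma_{sup}$ — yields the stated $\sigma_{sup}=2K\gamma\,\frac{e^{M_f\|\Dav\|_2}-M_f\|\Dav\|_2-1}{M_f^2\|\Dav\|_2}$.

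The one genuinely nontrivial step is the middle one: quasi-self-concordance unavoidably produces the \emph{full} Hessian, while the algorithm works with the \emph{block-diagonal} $\tilde H$, and bridging the gap is precisely where the factor $K$ (the number of workers) enters — the quantitative price of block-separability. Everything else (cancellation of the $g_i$ terms, monotonicity of the scalar function, the geometric induction on $\sigma_t$, and the positivity of the denominator of $\rho_t$ via Lemma~\ref{lem:dec}) is routine bookkeeping.
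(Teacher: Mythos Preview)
Your proposal is correct and follows essentially the same route as the paper: apply the quasi-self-concordance bound of Proposition~\ref{propo:selfcond} at $A\alphat$ with increment $A\Dav$, pass from the full Hessian quadratic form to the block-diagonal one at the cost of the factor $K$ (the paper phrases your Cauchy--Schwarz step as Jensen's inequality for the convex map $x\mapsto x^\top\nabla^2 f(A\alphat)x$, which is the same estimate), and then run the one-line induction that $\sigma$ can jump by at most $\gamma$ while any $\sigma_t\ge\sigma_{\sup}/\gamma$ already forces $\rho_t\ge1$. You are in fact a bit more careful than the paper on two points it leaves implicit: interpreting $\|\Dav\|_2$ in $\sigma_{\sup}$ as a uniform bound over iterations (needed for the induction), and checking that the denominator of $\rho_t$ is positive via Lemma~\ref{lem:dec}.
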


\begin{proof} To prove Lemma \ref{lem:sigbound} we show that for $\sigmat\geq \frac \sigmasup \gamma$ the model forms an upper bound on the objective, i.e. $\model_\sigmat(\Dav;\alphat)\geq \bO(\alphat+\Dav) \; \; \forall \Dav$. Hence, for every $\eta$-approximate update $\Dav$ we have $\rho_t\geq 1\geq \xi$ and hence a successful step.
In order to establish this bound on $\sigma_t$ we use Proposition \ref{propo:selfcond} which yields
\begin{align*}
f(\alphat&+\Dav)\\
&\leq f(A\alphat) + \nabla f(A\alphat)^\top A\Dav + (A\Dav)^\top \nabla^2 f(A\alphat) A\Dav\left[\frac{e^{M_f \|\Dav\|_2 } - M_f\|\Dav\|_2-1}{M_f^2 \|\Dav\|_2}\right]\\
&\overset{(i)}{\leq} f(A\alphat) + \nabla f(A\alphat)^\top A\Dav + K \sum_k (A_{[k]}\Dak)^\top \nabla^2 f(A \alphat)A_{[k]}\Dak\left[\frac{e^{M_f \|\Dav\|_2 } - M_f\|\Dav\|_2-1}{M_f^2 \|\Dav\|_2}\right]\\
&= f(A\alphat) + \nabla f(A\alphat)^\top A\Dav + K \sum_k \Dak^\top \tilde H(\alphat)\Dak\left[\frac{e^{M_f \|\Dav\|_2 } - M_f\|\Dav\|_2-1}{M_f^2 \|\Dav\|_2}\right].
\end{align*}
In $(i)$ we used Jensen's inequality for convex functions, i.e., $f(\tfrac 1 n \sum_i x_i)\leq \tfrac 1 n \sum_i f(x_i)$.\newline
Hence,
\begin{eqnarray*}
\bO(\alphat+\Dav) &=& f(\alphat+\Dav) + \sum_i g_i((\alphat+\Dav)_i)\\
&\leq& \model_{\sigmat = \frac 1 \gamma \sigmasup}(\Dav;\alphat)
\end{eqnarray*}
since any $\sigma> \frac 1 \gamma \sigmasup$ is guaranteed to yield a successful step and in every iteration $\sigma$ is at most increased by $\gamma$, hence, $\sigma_\text{sup}$ provides an upper bound on $\{\sigmat\}_{t\geq0}$

\end{proof}

\subsubsection{$f$ with Lipschitz continuous Hessian}
In this section we will show that the theoretical bound on $\sigmasup$ derived in the previous section can be refined when posing stronger assumptions of $f$. Therefore, let us pose the following two widely used assumptions on $f$:

\begin{assumption}
\label{ass:1}
Assume the Hessian of $f(A\alphav)$ and $ \tilde H(\alphav)$ agree along this direction of the step, i.e.,
\[\|(A^\top \nabla^2 f(\vv)A -  \tilde H(\alphav))\Dav_k\|\leq C \|\Dav\| \;\;\; \forall t>0 \; \; \text{and some } C>0 \]
\end{assumption}
\begin{assumption}
\label{ass:2}
Assume the Hessian of $f$ is globally Lipschitz continuous, i.e.,
\[\|\nabla^2 f(x) - \nabla^2 f(y)\|\leq L \|x-y\| \;\;\; \forall x,y\in \mathbb  R^n \; \; \text{and some } L>0 \]
\end{assumption}

Under these conditions we can show that the sequence $\{\sigma_t \}_{t\geq0}$ is bounded by $\sigmasup$ as defined in the following lemma:
\begin{lemma}
\label{lem:sf}
Assume $f$ satisfies Assumptions  \ref{ass:1} and \ref{ass:2}, then, the sequence $\{\sigma_t\}_{t\geq0}$ in Algorithm \ref{alg:adaptive_cocoa} is bounded above by
\begin{equation}\sigmasup =  \frac{\gamma}{ 2 \|\tilde H(\alphat)\| }\left(L+C+1\right).
\label{eq:sigHC2}
\end{equation}
\end{lemma}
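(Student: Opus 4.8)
The plan is to follow the same template as the proof of Lemma~\ref{lem:sigbound}: show that once $\sigma_t$ exceeds a critical value the auxiliary model globally over-estimates the objective, so that any $\eta$-approximate update is automatically a successful step, and then conclude via the update rule \eqref{eq:sigma_update} that $\{\sigma_t\}_{t\geq0}$ can never grow past $\gamma$ times that threshold. Concretely, I would fix an iterate $\alphav^{(t)}$ and an arbitrary direction $\Dav$, and expand $f$ with an exact second-order Taylor (Lagrange) remainder, $f(A(\alphav^{(t)}+\Dav)) = f(A\alphav^{(t)}) + \nabla f(A\alphav^{(t)})^\top A\Dav + \tfrac12\, \Dav^\top A^\top \nabla^2 f(y) A\Dav$ for some point $y$ on the segment between $A\alphav^{(t)}$ and $A(\alphav^{(t)}+\Dav)$. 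Since the $g_i$-terms of $\bO$ and of $\model_{\sigma_t}(\cdot;\alphav^{(t)})$ coincide, the desired over-estimation $\model_{\sigma_t}(\Dav;\alphav^{(t)}) \ge \bO(\alphav^{(t)}+\Dav)$ is equivalent to the scalar inequality $\Dav^\top A^\top \nabla^2 f(y) A\Dav \le \sigma_t\, \Dav^\top \tilde H(\alphav^{(t)})\Dav$.

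The core of the argument is then to bound the left-hand quadratic form. I would decompose $A^\top\nabla^2 f(y) A = \tilde H(\alphav^{(t)}) + \big(A^\top\nabla^2 f(A\alphav^{(t)}) A - \tilde H(\alphav^{(t)})\big) + A^\top\big(\nabla^2 f(y) - \nabla^2 f(A\alphav^{(t)})\big) A$ and treat the three pieces separately. Writing $\Dav = \sum_k \Dak$ and expanding $\Dav^\top A^\top\nabla^2 f(y) A\Dav = \sum_k \Dav^\top A^\top\nabla^2 f(y) A\Dak$, the first piece reproduces $\Dav^\top \tilde H(\alphav^{(t)})\Dav$; the second, after a block-wise Cauchy--Schwarz step $\Dav^\top M \Dak \le \|\Dav\|\,\|M\Dak\|$, is controlled by Assumption~\ref{ass:1} (the directional agreement of $\tilde H$ with $A^\top\nabla^2 f A$), contributing the constant $C$; the third is controlled by Assumption~\ref{ass:2} (Lipschitz Hessian) applied along the segment, contributing the constant $L$, with the dependence on the step length absorbed into the bound (legitimate since admissible steps stay bounded, e.g. through the $L$-bounded support of the $g_i$ or the monotone decrease of $\bO$). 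Collecting terms, $A^\top\nabla^2 f(y) A$ is dominated, in the relevant sense, by $\tilde H(\alphav^{(t)}) + (L+C)\,I$ with $I$ the identity matrix; comparing this against $\sigma_t\,\tilde H(\alphav^{(t)})$ and using $\|\tilde H(\alphav^{(t)})\|$ to relate $\tilde H(\alphav^{(t)})$ to $I$ then shows that $\sigma_t \ge \tfrac{1}{2\|\tilde H(\alphav^{(t)})\|}(L+C+1)$ suffices, where the trailing $+1$ accounts for the $\tilde H$ term itself and the factor $\tfrac12$ for the Taylor coefficient.

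Given this, the remainder is bookkeeping identical to Lemma~\ref{lem:sigbound}: whenever $\sigma_t \ge \sigmasup/\gamma$ we get $\rho_t \ge 1 \ge \xi$, so iteration $t$ is successful and rule \eqref{eq:sigma_update} does not increase $\sigma$; hence $\sigma$ can be multiplied by $\gamma$ only until it first reaches $\sigmasup/\gamma$, after which it remains $\le \sigmasup$, and an induction on $t$ yields $\sigma_t \le \sigmasup$ (assuming $\sigma_0 \le \sigmasup$; otherwise one simply replaces the constant by $\max\{\sigma_0,\sigmasup\}$). The step I expect to be the main obstacle is the comparison in the second paragraph: converting the \emph{directional} hypothesis of Assumption~\ref{ass:1} and the global bound of Assumption~\ref{ass:2} into a clean bound of $\Dav^\top A^\top\nabla^2 f(y) A\Dav$ by a multiple of $\Dav^\top \tilde H(\alphav^{(t)})\Dav$, because $\tilde H(\alphav^{(t)})$ may be ill-conditioned or singular and a naive spectral domination would introduce a dependence on its smallest eigenvalue rather than on $\|\tilde H(\alphav^{(t)})\|$; pinning down exactly the stated constant will require carefully exploiting the block structure and the boundedness of the admissible steps.
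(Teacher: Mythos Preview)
Your plan is essentially the paper's own proof: Taylor expansion of $f$ with Lagrange remainder, cancellation of the $g_i$ terms, the same three-way splitting $A^\top\nabla^2 f(y)A = \tilde H(\alphat) + \big(A^\top\nabla^2 f(A\alphat)A - \tilde H(\alphat)\big) + A^\top\big(\nabla^2 f(y)-\nabla^2 f(A\alphat)\big)A$, and then Assumptions~\ref{ass:1} and~\ref{ass:2} to control the last two pieces; the $\gamma$-bookkeeping at the end is identical to Lemma~\ref{lem:sigbound}.

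One remark on the obstacle you flag. The paper does \emph{not} resolve it by exploiting block structure or bounded steps; it simply upper-bounds $\tfrac12(1-\sigma_t)\,\Dav^\top\tilde H(\alphat)\Dav$ by $\tfrac12(1-\sigma_t)\,\|\tilde H(\alphat)\|\,\|\Dav\|^2$ and then asks when $\tfrac{L}{2}+\tfrac{C}{2}+\tfrac12(1-\sigma_t)\|\tilde H(\alphat)\|<0$. Your instinct that this step is delicate is correct: when $\sigma_t>1$ the factor $(1-\sigma_t)$ is negative, so replacing $\Dav^\top\tilde H\Dav$ by its \emph{upper} bound $\|\tilde H\|\,\|\Dav\|^2$ goes the wrong way, and a clean argument would indeed need a lower bound on $\Dav^\top\tilde H\Dav$ (i.e.\ the smallest eigenvalue of $\tilde H$), not $\|\tilde H\|$. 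So you have correctly located the soft spot; the paper's derivation at this point is formal rather than rigorous, and you should not expect to be able to justify the stated constant $\tfrac{\gamma}{2\|\tilde H(\alphat)\|}(L+C+1)$ without an additional assumption such as a lower spectral bound on $\tilde H$.
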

\begin{proof}In order to prove Lemma \ref{lem:sf} we show that from $\sigma\geq\frac{1}{ 2\|\tilde H(\alphav)\| }\left(L+C+1\right) $ the Algorithm must yield a successful step. This can be achieved by proving that for $\sigma=\sigmasup$ the model globally upper bounds the objective function, i.e., $\bO(\alphat+\Dav)-\model_\sigmat(\Dav;\alphat)<0$ and hence $\rho_t\geq1>\xi$. Therefore we bound $\bO(\alphat+\Dav)-\model_\sigmat(\Dav;\alphat)$ as
\begin{eqnarray*}
\bO(\alphat+\Dav)-\model_\sigmat(\Dav;\alphat)&\leq& f(A(\alphat+\Dav)) - \left[f(A\alphat)+\nabla f(A\alphat)^\top A\Dav +\frac \sigma 2\Dav^\top H(\alphat)\Dav\right]\\
&=& f(A\alphat) + \nabla f(A\alphat)^\top A \Dav +\frac 1 2  (A\Dav)^\top \nabla^2 f(A\betav)A\alphat\\&& - \left[f(A\alphat)+\nabla f(A\alphat)^\top A\Dav -\frac \sigma 2 \Dav^\top \tilde H(\alphat)\Dav\right]\\
&=& \frac 1 2 { \Dav^\top \left[ A^\top \nabla^2 f(A\betav^{(t)})A- \sigma \tilde H(\alphat)\right]\Dav}
\end{eqnarray*}
which holds for some $\betav^{(t)}$ on the line segment $(\alphat,\alphat+\Dav)$. We continue by using Assumption \ref{ass:1} and Assumption \ref{ass:2} which yields
\begin{eqnarray*}
\bO(\alphat+\Dav)-\model_\sigmat(\Dav;\alphat)
&=&\frac 1 2 \Dav^\top \left[ A^\top \nabla^2 f(A\betav^{(t)})A-  \sigmat \tilde H(\alphat) \pm A^\top\nabla^2 f(A \alphat)A\pm \tilde H(\alphat)\right]\Dav\\
&\leq&\frac 1 2 \|  A^\top \nabla^2 f(A\betav^{(t)})A-   A^\top \nabla^2 f(A \alphat)A\|\|\Dav\|^2\\&& +\frac 1 2 \|( A^\top\nabla^2 f(A \alphat)A- \tilde H(\alphat) )\Dav\|\|\Dav\| +\frac12{(1-\sigmat)}  \|\tilde H(\alphat)\|\|\Dav\|^2\\
&\leq& \frac L 2 \|\Dav\|^2+ \frac C 2 \|\Dav\|^3 +\frac 1 2 (1-\sigmat) \|\tilde H(\alphat)\|\|\Dav\|^2\\
&\leq& \max( \|\Dav\|^3, \|\Dav\|^2)\left[ \frac L 2 + \frac C 2  +\frac 1 2 (1-\sigmat) \|\tilde H(\alphat)\|\right].
\end{eqnarray*}
We can conclude the proof by noting that the RHS is negative for $\sigmat\geq\frac \sigmasup \gamma$ as defined in \eqref{eq:sigHC2} and this guarantees $\bO(\alphat+\Dav)\leq \model_\sigmat(\Dav;\alphat)$ and hence a successful step  for any $\xi\in(0,1)$.
\end{proof}

\end{document}